\definecolor{darkblue}{rgb}{0, 0, 0.5}
\newtheorem{thm}{Theorem}
\newtheorem{lemma}{Lemma}
\newtheorem{definition}{Definition}
\definecolor{verylightgray}{gray}{0.95} 
\tiny\color{gray},
\newcommand{\xmark}{\ding{55}}%
\setlist[2]{noitemsep} 
\renewcommand\AB@affilsepx{, \protect\Affilfont}
\title{RWKV-7 "Goose" with Expressive Dynamic State Evolution}
\author[1,2,\thanks{Equal first authorship. Others listed alphabetically.}] {\textbf{Bo Peng}}
\author[3,*] {\textbf{Ruichong Zhang}} 
\author[2,4,*] {\textbf{Daniel Goldstein}}
\author[2,5] {\authorcr \textbf{Eric Alcaide}}
\author[6] {\textbf{Xingjian Du}}
\author[7] {\textbf{Haowen Hou}}
\author[8] {\textbf{Jiaju Lin}}
\author[9] {\textbf{Jiaxing Liu}}
\author[4,10] {\textbf{Janna Lu}}
\author[11] {\textbf{William Merrill}}
\author[2,12] {\textbf{Guangyu Song}}
\author[13] {\textbf{Kaifeng Tan}}
\author[2] {\textbf{Saiteja Utpala}}
\author[2,4] {\textbf{Nathan Wilce}}
\author[14] {\textbf{Johan S. Wind}}
\author[15] {\textbf{Tianyi Wu}}
\author[2,16] {\textbf{Daniel Wuttke}}
\author[2] {\textbf{Christian Zhou-Zheng}}
\affil[1]{RWKV Project (under Linux Foundation AI \& Data)}
\affil[2]{EleutherAI}
\affil[3]{Tsinghua University}
\affil[4]{Recursal AI}
\affil[5]{Dalle Molle Institute for Artificial Intelligence USI-SUPSI}
\affil[6]{University of Rochester}
\affil[7]{Guangdong Laboratory of Artificial Intelligence and Digital Economy (SZ)}
\affil[8]{Pennsylvania State University}
\affil[9]{Zhejiang University}
\affil[10]{George Mason University}
\affil[11]{New York University}
\affil[12]{Tano Labs}
\affil[13]{Shenzhen University}
\affil[14]{University of Oslo}
\affil[15]{Beijing Normal University}
\affil[16]{Denigma}
\begin{document}

\begin{center}
    \maketitle
\end{center}

\begin{abstract}
We present RWKV-7 "Goose", a new sequence modeling architecture with constant memory usage and constant inference time per token. Despite being trained on dramatically fewer tokens than other top models, our 2.9 billion parameter language model achieves a new 3B SoTA on multilingual tasks and matches the current 3B SoTA on English language downstream performance. RWKV-7 introduces a newly generalized formulation of the delta rule with vector-valued gating and in-context learning rates, as well as a relaxed value replacement rule. We show that RWKV-7 can perform state tracking and recognize all regular languages, while retaining parallelizability of training. This exceeds the capabilities of Transformers under standard complexity conjectures, which are limited to $\mathsf{TC}^0$. To demonstrate RWKV-7's language modeling capability, we also present an extended open source 3.1 trillion token multilingual corpus, and train four RWKV-7 models ranging from 0.19 billion to 2.9 billion parameters on this dataset.

To foster openness, reproduction, and adoption, we release our models\footnote{Model weights at \url{https://huggingface.co/RWKV}} and dataset component listing\footnote{Dataset components listed at \url{https://huggingface.co/RWKV}} on Hugging Face, and our training and inference code\footnote{Source code at: \url{https://github.com/RWKV/RWKV-LM}} on GitHub; all under the Apache 2.0 License.
\end{abstract}
 
\pagebreak
\tableofcontents
\clearpage
 
\section{Introduction} \label{sec:introduction}
\begin{table}
\centering
    \begin{adjustbox}{max width=1.0\linewidth}
\begin{threeparttable}[htb]
    \centering
        \begin{tabular}{lllcccc} 
 \toprule
\bf Name & \bf State Evolution \hfill \bf Scalars & \bf LS & \bf FD & \bf DD & \bf GE \\
\midrule
RWKV-4 & $\bm{s}_t = e^{-w} \odot \bm{s}_{t-1} + e^{k_t} \odot v_t;$  & \xmark & \checkmark & \xmark & \xmark \\
& $\bm{s'}_t = e^{-w} \odot \bm{s}'_{t-1} + e^{k_t}$\\
RetNet & $\bm{S}_t = w \bm{S}_{t-1} + v_t^T k_t $ \hfill $w$  & \checkmark & \xmark & \xmark & \xmark \\
RWKV-5 & $\bm{S}_t = \bm{S}_{t-1} \mathrm{diag} (w) + v_t^T k_t $ & \checkmark & \checkmark & \xmark & \xmark \\
Mamba & $\bm{S}_t = \bm{S}_{t-1} \odot \exp(-(w_t^T 1) \odot \exp (\bm{A})) + (w_t \odot v_t)^T k_t $ & \checkmark & \checkmark & \checkmark & \xmark \\
RWKV-6 \& GLA & $\bm{S}_t = \bm{S}_{t-1} \mathrm{diag} (w_t) + v_t^T k_t $  & \checkmark & \checkmark & \checkmark & \xmark \\
HGRN-2 & $\bm{S}_t = \bm{S}_{t-1} \mathrm{diag} (w_t) + v_t^T (1-w_t) $  & \checkmark & \checkmark & \checkmark & \xmark \\
Mamba-2 & $\bm{S}_t = w_t \bm{S}_{t-1}  + v_t^T k_t $ \hfill $w_t$ & \checkmark & \xmark & \checkmark & \xmark \\
TTT $^a$ & $\bm{S}_t = \bm{S}_{t-1} - a_t \nabla l(\bm{S}_{t-1}, k_t, v_t)$ \hfill $a$ & \checkmark & \xmark & \xmark & \checkmark \\
Longhorn & $\bm{S}_t = \bm{S}_{t-1} \odot (\bm{I} - a_t^T k_t^2) + (a_t \bm{x}_t)^T k_t $  & \checkmark & \checkmark & \checkmark & \xmark \\
Gated DeltaNet & $\bm{S}_t = w_t \bm{S}_{t-1} (\bm{I} - a_t k_t^T k_t)  + a_t v_t^T k_t $
\quad \hfill $w_t, a_t$& \checkmark & \xmark & \checkmark & \checkmark \\
Titans $^a$ & $\bm{M}_t = (1 - \alpha_t) \bm{M}_{t-1} + \bm{S}_t$ \hfill $w_t, a_t$ & \checkmark & \xmark & \checkmark & \checkmark \\
 & $\bm{S}_t = w_t\bm{S}_{t-1} - a_t \nabla l(\bm{M}_{t-1}, k_t, v_t)$ & & & \\
\midrule
Generalized $\Delta$ Rule & $\bm{S}_t = \bm{S}_{t-1} (\mathrm{diag}(w_t) + z_t^T b_t) + v_t^T k_t$ & 
\textbf{\checkmark} & \textbf{\checkmark} & \textbf{\checkmark} & \textbf{\checkmark} \\
\textbf{RWKV-7 (ours)} & $\bm{S}_t = \bm{S}_{t-1} (\mathrm{diag}(w_t) - \hat{\kappa}_t^T (a_t \odot \hat{\kappa}_t))  + v_t^T k_t $  & \textbf{\checkmark} & \textbf{\checkmark} & \textbf{\checkmark} & \textbf{\checkmark} \\
\bottomrule\\
\end{tabular}
\vspace{-1em}
\caption{
Recent RNN architectures used for language modeling. 
\\ 
\footnotesize{
\textbf{LS} (Large State): matrix-valued states, or state size at least 4 times larger than the model dimension. \\ 
\textbf{FD} (Flexible Decay): the dimension of the decay term $w$ or $w_t$ is not smaller than the model dimension. \\
\textbf{DD} (Dynamic Dependence): the decay term $w_t$ is a function over the input $x_t$. \\
\textbf{GE} (Generalized Eigenvalue): evolution matrix admits eigenvalues outside of the interval $[0,1]$.\\
$^a$ Shown with mini batch size 1 for simplicity.
}
} \label{tab:rnn_state_evolution}
\end{threeparttable}
\end{adjustbox}
\end{table}

Autoregressive Transformers \citep{vaswani2023attention} have recently dominated sequence modeling tasks, enjoying excellent in-context processing and highly parallelizable training due to their use of softmax attention. However, softmax attention incurs quadratic computational complexity and memory usage with respect to sequence length due to its linearly expanding key-value cache. For short sequences, much of this cost can be covered by modern GPU parallelism techniques, but Transformer inference becomes increasingly costly as sequence lengths grow.

This limitation has inspired significant research into the design of recurrent neural network (RNN) architectures with compressive states that afford linear computational complexity and constant memory usage, while still allowing highly parallel training. Two of the most commonly proposed alternatives that satisfy these requirements are linear attention variant models \citep{linearTrans2020inputDepend, sun2023retentive, rwkv6_colm, yang2023gated} and State Space Models \citep{gu2023mamba}. These architectures have grown more sophisticated, with many recent proposals incorporating some form of the delta rule, as embodied by parallelized DeltaNet \citep{schlag_deltarule, yang2024_deltanet}. Such models have achieved impressive downstream performance results: since RWKV-4 \citep{peng2023rwkv}, RNN models have shown increasing potential to rival Transformers when given equivalent model size and training compute, while dramatically reducing inference costs.

We present a new architecture, RWKV-7 "Goose", which generalizes the delta rule for use in sequence modeling. First, we add a vector-valued state gating mechanism, enhancing expressivity and providing implicit positional encoding. Second, we expand the in-context learning rate from a scalar to become vector-valued, allowing the model to selectively replace state data on a channel-wise basis. Third, we decouple the keys at which the delta rule removes from and adds to the state. Finally, we place these innovations within a modified RWKV-6 architecture, inheriting important features such as token-shift, bonus, and a $\mathrm{ReLU}^2$ feedforward network. We also introduce an expanded 3.1 trillion token RWKV World v3 corpus designed for enhanced English, code, and multilingual task performance. We use this architecture and corpus to train new state-of-the-art open-source language models, upgraded from preexisting RWKV-5/RWKV-6 checkpoints.

\pagebreak
Our main contributions are as follows:

\begin{itemize}
    \item The \textbf{RWKV-7 "Goose" architecture}, which dramatically improves downstream benchmark performance over RWKV-6 and demonstrates state-of-the-art multilingual performance at 3B scale and near SoTA English language performance, despite being trained on many fewer tokens than the top models in its class.
    
    \item The \textbf{RWKV World v3 public dataset}, comprised of 3.1 trillion tokens of publicly available multilingual data.
    
    \item Public release of four \textbf{pre-trained RWKV-7 World v3 language models}, ranging from 0.19 to 2.9 billion parameters trained on 1.6 to 5.6 trillion tokens.

    \item Public release of three \textbf{pre-trained RWKV-7 Pile language models}, using the GPT-NeoX tokenizer \citep{black-etal-2022-gpt}, ranging from 0.17 to 1.47 billion parameters, useful for comparative study with other architectures.    

    \item \textbf{Proofs} that the generalized delta rule employed in RWKV-7 can \textbf{solve problems outside of $\mathsf{TC}^0$} under the widely held complexity conjecture that $\mathsf{TC}^0 \ne \mathsf{NC}^1$. This includes solving an $S_5$ state tracking problem known to be in $\mathsf{NC}^1$ using only a single layer, and recognizing all regular languages using only a constant number of layers.

    \item A \textbf{method for upgrading the RWKV architecture without pre-training from scratch}, producing increasingly competitive trained models at reduced computational expense.

\end{itemize}

Larger datasets and RWKV-7 models are under active preparation and construction and will be released under the Apache 2 license whenever practical.

\section{Background} \label{sec:background}

Linear attention's major advantage over softmax attention is that it can be formulated as a RNN with constant running time per token and constant memory usage \citep{katharopoulos2020lineartransformers}, while softmax attention takes $O(N)$ time per token and $O(N)$ memory with regard to sequence length. Despite this dramatic efficiency improvement, linear attention has its own significant drawbacks \citep{schlag_deltarule, han2024bridgingdividereconsideringsoftmax, fan2025breakinglowrankdilemmalinear}.

One such issue is that linear attention numerically adds to the fixed-size state at every time-step: older state contents are never removed, only reduced by becoming a smaller proportion of the numerically increasing state. Due to limitations on the state size, eventually such a system must mix values together and muddy the outputs retrieved for a given key \citep{schlag_deltarule, yang2024gatedlinearattentiontransformers}. Modern linear attention architectures like RWKV-6 \citep{rwkv6_colm}, RetNet \citep{sun2023retentive}, Gated Linear Attention \citep{yang2023gated}, and Mamba 2 \citep{dao2024transformersssmsgeneralizedmodels} use per time-step decay to remove some portion of such older values from the state in a data-dependent manner. However, decay is a blunt tool that cannot remove only the values stored at specific keys. 

\paragraph{Delta Rule.} DeltaNet \citep{schlag_deltarule} sidesteps the problem of numerically increasing state by partially replacing the value stored at the current key with the same amount of a new value, allowing the model to both take away old memories and add new ones on a per-key basis. It reformulates the state update as an explicit online learning problem where the goal is to retrieve the correct value as output for a given key as input. DeltaNet was the first to apply the foundational Error Correcting Delta Rule \citep{widrow_delta_rule} to key-value compressive states, akin to those stored in the RNN formulation of linear attention. This update rule is equivalent to a single step of stochastic gradient descent, training the state $\bm{S_t}$ at test time to output the desired values $v_t$ for the keys $k_t$ as inputs using loss $\mathcal{L}=\frac{1}{2}\|(\bm{S}_tk_t-v_t)\|^2$ and gradient $\frac{\partial \mathcal{L}}{\partial \bm{S}} = \bm{S} k^\top k - v^\top k$, leading to a recurrent update formula of $\bm{S}_t = \bm{S}_{t-1} (\bm{I} - a k_t^T k_t) + a v_t^T k_t $, where $a$ is a scalar learning rate. The ideas behind this internal state update can be traced back to fast weights \citep{fastweights}.

There has been significant recent interest in improvements to DeltaNet, in order to bring its efficiency and downstream performance in line with Transformers while still capturing the speed and memory benefits of Linear Attention. Parallelizing DeltaNet \citep{yang2024_deltanet} showed that DeltaNet used diagonal plus low-rank (DPLR) state evolution like S4 \citep{gu2022efficiently}, and could be parallelized across the time dimension, creating a path to efficiently train such models. Our work further extends that parallelization to cover the generalized delta rule formulation introduced herein, as well as the specific formula of RWKV-7.

\paragraph{Concurrent Work.} Concurrent work with our own has focused on architectural improvements beyond DeltaNet while still using the delta rule or variations thereof. Longhorn \citep{liu2024longhornstatespacemodels} employs an update rule that approximates a closed-form solution to a globally optimal update objective, applied on an otherwise unchanged Mamba architecture. Gated Delta Networks \citep{yang2024gated} applies gating to the DeltaNet state, essentially multiplying the transition matrix by a data-dependent scalar per head. This combines the DeltaNet update rule with the scalar decay found in some modern RNNs like RetNet and Mamba-2. The delta rule gradient descent formula with dynamic weight decay $w_t$ and learning rate $a_t$ becomes
$S_t = S_{t-1} \left(\operatorname{diag}(w_t) - k_t^\top k_t\operatorname{diag}(a_t)\right) + v_t^\top k_t\operatorname{diag}(a_t)$.

TTT (Test-Time Training) \citep{sun2024ttt} and Titans \citep{behrouz2024titans} also both apply scalar decay, but eschew per-step gradient descent update rules in favor of a batched multi-timestep approach. Titans also adds momentum to the otherwise classical SGD update applied to the state.

Another concurrent work with our own, Unlocking State-Tracking in Linear RNNs Through Negative Eigenvalues \citep{grazzi2024unlockingstatetrackinglinearrnns}, has demonstrated the potential for increased expressiveness that comes from allowing the state transition matrix to contain negative eigenvalues. We show a result significantly beyond this, proving that RWKV-7 and our generalized delta rule can recognize all regular languages using only a small constant number of layers.\footnote{For now, we choose to allow only part of the range of possible negative eigenvalues in our pre-trained large language models due to experimentally observed training instabilities.}

\section{Architecture}\label{sec:rwkv_arch}

Unlike the other work described above, RWKV-7 generalizes the delta update rule into an extended formula $S_t = S_{t-1}(\mathrm{diag}(w_t)+z_t^Tb_t)+v_t^Tk_t$ to increase expressivity (see Table \ref{tab:rnn_state_evolution}). This is still a diagonal plus rank one update rule, which admits efficient forms of parallelization \citep{yang2024_deltanet}. Here, $w_t$ is a more expressive data-dependent vector-valued decay, unlike the scalar decays featured in the other works previously described. Our use of $z_t$ and $b_t$ in this extended formula permits a flexible approach to state update, while retaining the important reduction in non-SRAM memory bandwidth usage that comes from using small data-dependent vectors instead of large matrices. One example of this flexibility is the ability to use a different removal key than replacement key. This extended delta rule is flexible enough that there may be other useful formulations that fit within it beyond the formulas we have chosen for calculating $z_t$ and $b_t$ in RWKV-7.

The original delta rule allowed a fixed scalar fraction of a value to be replaced from the state via its in-context learning rate parameter. RWKV-7's extended delta rule instead replaces data-dependent vector-valued amounts of the state, allowing each key channel in the state to vary independently. We parameterize $z_t = -\hat\kappa_t$ and $b_t = \hat\kappa_t \odot a_t$ where $a_t$ has elements in $(0,1)$. This keeps the update stable (see \hyperref[sec:eigenvalue-proof]{Appendix \ref*{sec:eigenvalue-proof}}), while maintaining increased expressivity. We demonstrate the improved performance of these design choices via ablations in \hyperref[sec:ablation-minipile]{Appendix \ref*{sec:ablation-minipile}}.

Previous research \citep{illusionstate_2024_merril} pointed out that Transformers and RNNs with a diagonal transition matrix could only represent functions in $\mathsf{TC}^0$. RWKV-7, however, has a non-diagonal and input-dependent transition matrix, allowing it to represent more complex functions than its predecessors. In fact, we demonstrate that RWKV-7 possesses expressive power surpassing that of $\mathsf{TC}^0$ under standard complexity conjectures and can recognize all regular languages. One new component of this power, not present in the original delta rule, is the ability to represent the "copy" state transition (Lemma \ref{lem:factor_transition}). This is a key element in our proof that RWKV-7 can recognize all regular languages with a constant number of layers. See \hyperref[sec:expressivity-proof]{Appendix \ref*{sec:expressivity-proof}} for proof and details.

We replace the main RWKV-6 \citep{rwkv6_colm} diagonal transition matrix with our extended delta rule and make several other changes to the RWKV architecture, observing significant modeling improvements. These include updates to the channel mixing module and the token shift module. We remove the data dependency of token-shift and the receptance gating of channel mixing, both of which contribute to faster training and inference. We increase the use of low-rank projections to generate more of our intermediate calculations, striking a balance between the total number of model parameters, training and inference speed, and downstream performance.

Figure \ref{fig:rwkv7-overall} presents the overall architecture of RWKV-7. Please refer to \hyperref[sec:arch_details]{Appendix \ref*{sec:arch_details}} for more details.

\begin{figure*}[ht!]
    \centering
    \includegraphics[width=1.0\linewidth]{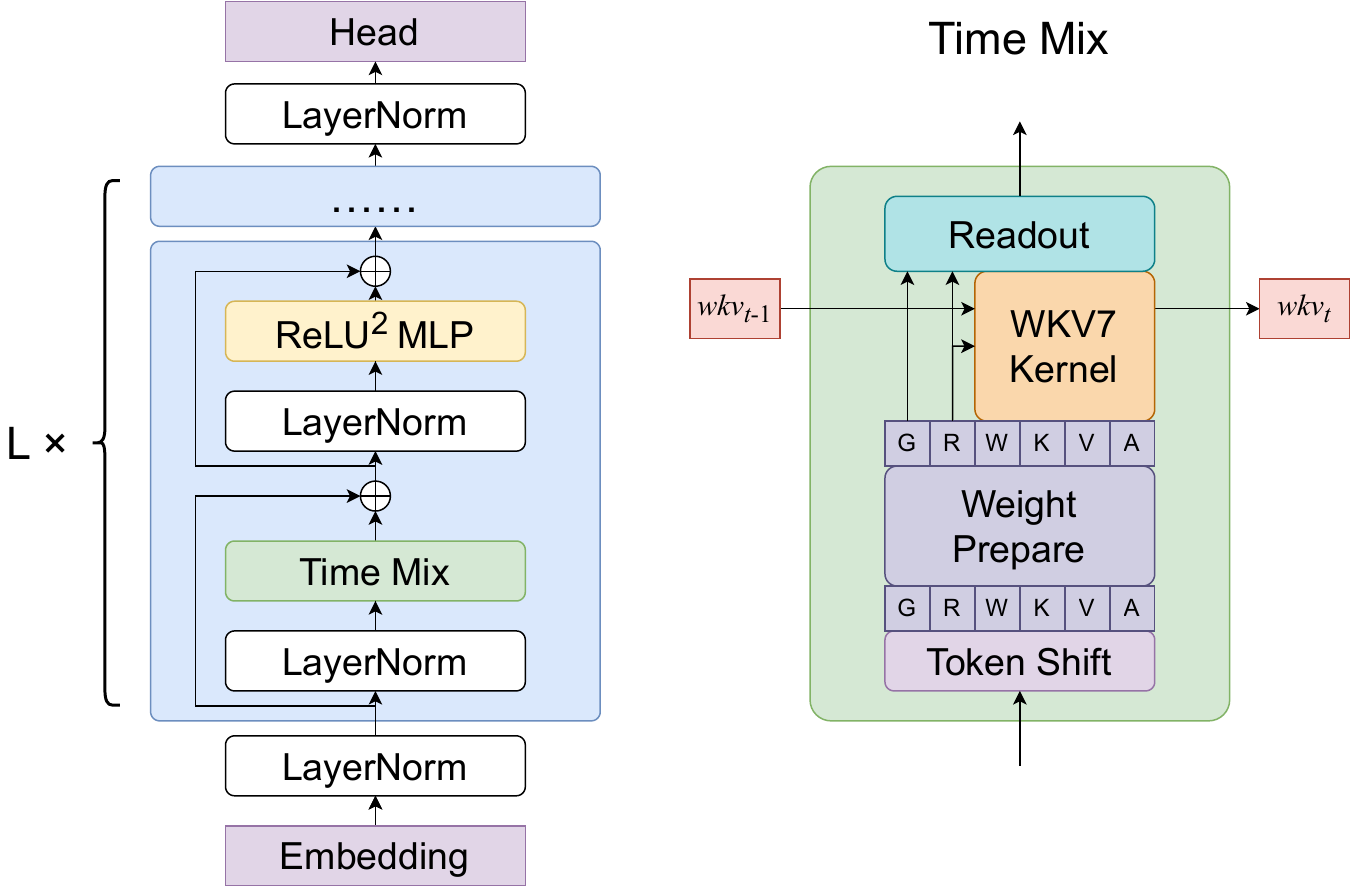}
    \caption{RWKV-7's overall architecture.}
    \label{fig:rwkv7-overall}
\end{figure*}

\begin{figure*}[ht!]
    \centering
    \includegraphics[width=\linewidth]{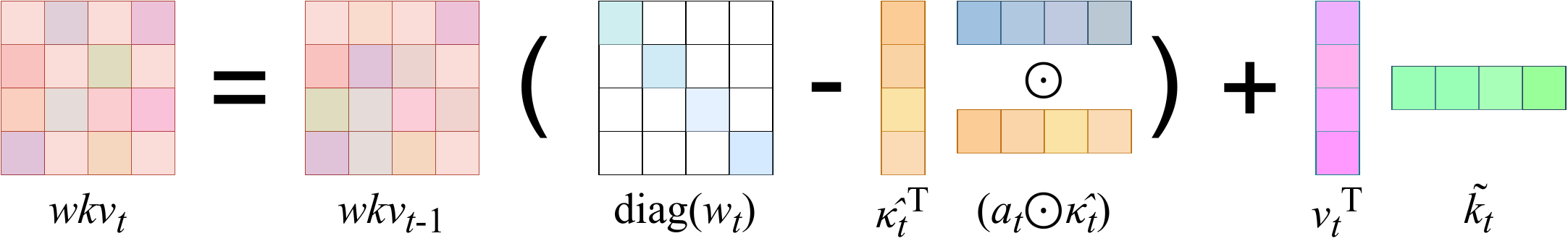}
    \caption{A simple illustration of the update mechanism of a single head of RWKV-7's state. Note that the actual state size is $64 \times 64$ per head, not $4 \times 4$.}
    \label{fig:rwkv7-upd}
\end{figure*}

\section{Method}\label{sec:method}
In this section, we use $D$ to denote the model dimension. Bold capital letters represent trainable matrices, and vectors without a subscript $t$ are trainable parameters. The first subscript denotes sequence position and second subscript denotes layer index, where necessary. We use the convention that all vectors are row vectors unless explicitly transposed, so all matrices operate on the right side, therefore $a^Tb$ is an outer product and $ab^T$ is an inner one. We use the square subscript to denote a placeholder for variable names and use the $\prod$ sign for cumulative matrix multiplication. 
See \hyperref[sec:pseudocode]{Appendix \ref*{sec:pseudocode}} for a pseudocode implementation of these formulas.

\subsection{Time Mixing}

\paragraph{Weight Preparation}
Along the lines of \citep{rwkv6_colm}, we introduce the following notation templates for common operators in the model, using the square subscript to denote a variable:
\begin{align}
\mathrm{lerp} (a, b, x) &= a + (b-a) \odot x, \\
\mathrm{loramlp}_\square(f, x, \text{bias}) &= f(x\bm{A}_\square)\bm{B}_\square + (\lambda_\square \text{ if bias else } 0), 
\end{align}
Unless explicitly stated, all vectors appearing in this section are dimension $D$.

We extend the use of low-rank MLP (a 2-layer MLP with small hidden dimension compared to input and output), abbreviated as $\mathrm{loramlp}$, to implement data dependency using minimal parameters. 

The replacement key $\tilde{k}$, value $v$, decay $w$, removal key $\kappa$, in-context learning rate $a$, receptance $r$, and rwkv gate $g$ parameters are computed as follows (outputs annotated with $\triangleright$):
\begin{align}
x^{\square}_{t} &= \mathrm{lerp}(x_t, x_{t-1}, \mu_{\square}) \quad \square\in \{r,k,v,d,a, g\}, &\text{token shifted inputs} \\
a_t &= \mathrm{sigmoid}(\mathrm{loramlp}_a(\mathrm{Identity},  x^a_t, \text{bias=True})), &\triangleright\text{in-context learning rate}\\
k_{t} &= x^{k}_t \bm{W}_{k}, &\text{key precursor}\\
\kappa_t &= k_t \odot \xi, &\triangleright\text{removal key}\\
\tilde{k}_t &= k_t \odot \mathrm{lerp}(1, a_t, \alpha), &\triangleright\text{replacement key}\\
\nu_{t} &= \mathrm{sigmoid}(\mathrm{loramlp}_\nu(\mathrm{Identity},  x^v_t, \text{bias=True})), &\text{value residual gate}\\
v'_{t,l} &= x^{v}_t \bm{W}_{v}, &\text{value precursor}\\
v_{t} &= \left\{\begin{aligned}
    & v'_{t,0}, & \text{layer } l=0 \\
    &\mathrm{lerp}(v'_{t,0}, v'_{t,l}, \nu_t), & \text{layer } l \ge 1
\end{aligned}\right.
,&\triangleright\text{value}\\
d_t &= \mathrm{loramlp}_d(\tanh, x^d_t, \text{bias=True}), &\text{decay precursor}\\
w_t &= \exp(-e^{-0.5} \mathrm{sigmoid}(d_t)), &\triangleright\text{decay}\\
r_{t} &= x^{r}_t \bm{W}_{r}, &\triangleright\text{receptance}\\
g_t &= \mathrm{loramlp}_g(\mathrm{sigmoid}, x^g_t, \text{bias=False}) &\triangleright\text{rwkv gate}
\end{align}
$\xi$ is a learned parameter representing the removal key multiplier, which transforms the original key into a version to be removed from the state. In practice, $\xi$ lies in a range of approximately $[-5.3,9.4]$. 

$\alpha$ is a learned parameter representing the replacement rate booster, which adjusts the amount added back to the state after the transition matrix is applied.

Unlike $r, k$ and $v$ which are the main carriers of information, $g, d, \nu$ and $a$ act like gates which control the amount of information allowed to pass.

For comprehensive statistics of $\xi$, $\alpha$ and biases of $d_t$ observed in the released RWKV-7 model, including extremum values, mean measurements, and distribution trends, see \hyperref[sec:param_stat]{Appendix \ref*{sec:param_stat}}.

For the computation of $x^{\square}_{t}$, we removed data dependency of linear interpolation from RWKV-6 to improve training speed.

We adapted the idea of Value Residual Learning \citet{zhou2024valueresiduallearningalleviating} for the computation of $v_t$, which has shown to improve the final language modeling loss. 
$\nu_t$ represents the value residual mix, which interpolates between the layer zero and current layer value precursors: $v_{t,0}$ and $v_{t,l}$.

We also updated the formula for computation of $w_t$, restricting all entries in $(\exp(-e^{-0.5}), 1)$ in favor of a smaller condition number for $\mathrm{diag} (w_t)$, which maintains better training stability, and was beneficial to accuracy of the backward pass.

The $\tilde{k}_t$ in the formula can be regarded as a "normalized key", a design to ensure that the state of $\bm{wkv}$ contains columns of $O(1)$ size. Normally, we expect $\tilde{k}_t = k_t \odot (1-w_t)$, as employed in RWKV-6c (see Appendix \ref{sec:arch_details}), so that $\bm{wkv}_t$ rows are linear interpolations between $\bm{wkv}_{t-1}$ and $v_t ^T k_t$ controlled by $w_t$. However, to further enhance expressivity, we decide to decouple $w_t$ and $a_t$. We further decouple $a_t$ from the amount actually added to the state, allowing the replacement rate booster $\alpha$ to interpolate the amount added between the normal in-context learning rate and 1.0. Importantly, all of these modifications operate on a per-channel basis. The numerical range of RWKV-7's $\bm{wkv}$ entries are generally stable as in RWKV-6c, unlike RWKV-6, where entries of the states can accumulate to thousands (see Appendix~\ref{sec:state_inspections} for a state visualization).



\paragraph{The Weighted Key Value State Evolution}
After weight preparation, we reshape $(r, w, \tilde{k}, v, \kappa, a)_t$, splitting them to $h$ heads, with each head sized $D/h$. We always assume that $h$ is a factor of $D$ and heads are equally split. All operations in this section are shown per-head.

Before mixing in the time dimension, $\kappa_t$ is normalized per head:
\begin{align}
    \label{eq:kappa_norm}
    \hat{\kappa}_t &= \kappa_t / \lVert\kappa_t\rVert_2
\end{align}
The $\bm{wkv}$ (Weighted Key Value) is a multi-headed matrix-valued state of fast weights that undergoes dynamic evolution. The evolution of $\bm{wkv}$ is crucial for encoding context information by learning at test time to map keys to values. 
We start by defining the WKV time mixing as the recurrence relation
\begin{align}
    \label{eq:wkv7}
    \bm{wkv}_0 &= \bm{0}, \\
    \bm{wkv}_t &= \bm{wkv}_{t-1} \left(\mathrm{diag}(w_{t}) - \hat{\kappa}^T_t (a_t \odot \hat{\kappa}_t)\right) + v_t^T \cdot \tilde{k}_t 
\end{align}
Compared to RWKV-5 and RWKV-6, the $\bm{wkv}$ in this paper is transposed to ensure consistency with RWKV-7's code.


The $\bm{wkv}_{t}$ attention calculation can alternatively be written in a parallel manner: 
\begin{align}
\bm{wkv}_{t} &= \sum_{i=1}^{t} \left( v_i^T \tilde{k}_i \prod_{j=i+1}^{t} \left(\mathrm{diag}(w_{j}) - \hat{\kappa}^T_j (a_j \odot\hat{\kappa}_j)\right) \right) \in \mathbb{R}^{(D/h) \times (D/h)}
\end{align}
The recurrent transition design has parallels with \citet{schlag_deltarule}, but crucially the transition matrix
\begin{equation}
G_t = \mathrm{diag}(w_t) - \hat{\kappa}^T_t (a_t \odot \hat{\kappa}_t) = \left(\bm{I} - \hat{\kappa}_t^T (\frac{a_t}{w_t} \odot \hat{\kappa}_t)\right) \mathrm{diag}(w_t)  \approx  \left(\bm{I} - 2 \hat{\kappa}_t^T \hat{\kappa}_t\right) \mathrm{diag}(w_t)
\end{equation}
is no longer a Householder matrix but a scaled approximation of it, as $\hat{\kappa}_t \neq \tfrac{a_t}{w_t}\hat{\kappa}_t $. This mimics a Householder matrix but with expanded dynamics, while still having all eigenvalues in a stable range of $[-1, 1]$ and allows the network to decay information in all subspaces if necessary. It contrasts with the case of a Householder-like matrix with learning rate $(I - a v^Tv), \: a \in [0,1]$, as used in \citet{schlag_deltarule, yang2024_deltanet} where all eigenvalues are one except for the last one corresponding to $1-a$. Given these properties, we refer to $w_t$ as "in-context weight decay" and to $a_t$ as "in-context learning rate" (ICLR). The RWKV-7 transition matrix, therefore, allows for both dynamic state evolution and approximation to a forget gate at the same time. See \hyperref[sec:eigenvalue-proof]{Appendix \ref*{sec:eigenvalue-proof}} for the details on the eigenvalue of the transition matrix, and when the transition matrix is guaranteed to be stable.

The original delta rule in \citet{schlag_deltarule} allows partial or full removal of pre-existing values from the state at each time-step, with the amount removed being equal to the scalar $a$. Our formulation extends this ability by making $a$ a vector, allowing for different removal amount per state column.

\paragraph{WKV Bonus and Output}

All operations in this section are shown per-head unless otherwise specified.

Receptance, which acts like the query found in transformers, is applied to the WKV state, and the result is normalized. An added bonus, the amount of which is weighted by $\rho$, allows the model to place extra attention on the current shifted input token without requiring it to store that token in the state.
\begin{align}
u_t &= \left (r_t \cdot (\rho \odot \tilde{k}_t)^T \right) v_{t} & \text{bonus}\label{subeq:addbonus}\\
p_t &= \mathrm{LayerNorm}(r_t \bm{wkv}_t^T) + u_t & \triangleright\text{attention result}
\end{align}
Finally, the heads are recombined via reshaping so that $p_t \in \mathbb{R}^{D}$, gated, and transformed into the output as follows:
\begin{align}
o_t &= (g_t \odot p_t) \bm{W}_o \in \mathbb{R}^{D}
\end{align}
\subsection{MLP}
The MLP module of RWKV-7 is no longer identical to the Channel Mixing module of previous RWKV-4,5,6 architectures \citep{rwkv6_colm}. We remove the gating matrix $\bm{W}_r$, making it a two-layer MLP. In compensation for the removed gating parameters to satisfy the equi-parameter condition, we set the hidden dimension to be 4 times the size of model dimension.
\begin{align}
\label{eq:channel-mix7}
k'_t &= \mathrm{lerp} (x'_t, x'_{t-1}, \mu'_{k}) \bm{W}_{k'}\in \mathbb{R}^{4D}\\
o'_t &= \mathrm{ReLU}(k'_t)^2 \bm{W}_{v'}\in \mathbb{R}^{D}
\end{align}


\section{RWKV World v3 Dataset}\label{sec:dataset}

We train our models on the new \textbf{RWKV World v3 Dataset}, a new multilingual 3.119 trillion token dataset drawn from a wide variety of publicly available data sources. This dataset aims to help close the gap with the amount of data used to train modern LLMs, which may consume as many as 15 - 18 trillion tokens \citep{qwen2025qwen25technicalreport, grattafiori2024llama3herdmodels}. We select the data to approximate the distribution of our previous World datasets, including English, multilingual, and code, while slightly enhancing Chinese novels.
We describe the composition of our dataset in
\hyperref[sec:training_dataset_details]{Appendix \ref*{sec:training_dataset_details}}.

\section{Pre-Trained Models} \label{sec:pretrained_models}

We have pre-trained and publicly released seven Apache 2.0 licensed RWKV-7 models: 
\begin{enumerate}
    \item Trained on Pile: \textbf{RWKV7-Pile} of sizes \textbf{0.1B, 0.4B, and 1.4B}
    \item Trained on RWKV World V3: \textbf{RWKV7-World-3} of sizes \textbf{0.1B, 0.4B, 1.5B, and 2.9B}
\end{enumerate}

See \hyperref[sec:arch_train]{Appendix \ref*{sec:arch_train}} for detailed configurations.

The RWKV-7 Pile models all use the GPT-NeoX-20B tokenizer \citep{black-etal-2022-gpt}, and were all trained from scratch on the Pile dataset, which has 332 billion tokens. 

All RWKV World dataset models use the RWKV World Tokenizer. Due to compute budget constraints, the Goose World 3 0.1B and 0.4B models were trained from pre-existing RWKV-5 World v1 and v2 checkpoints, and the Goose World 3 1.5B and 2.9B models were trained from pre-existing RWKV-6 World v2.1 checkpoints. These checkpoints' parameters were then converted to the RWKV-7 format via a process described below. Once in the new format, the models are trained on either the additional full 3.1 trillion tokens of the World v3 corpus, or an equally weighted sub-sampling of it. Under this methodology, some documents were seen two or even three times.

The World v1, v2, v2.1, and v3 corpora contain 0.6, 1.1, 1.4, and 3.1 trillion tokens, respectively. The amounts of training in each stage at with each successive model architecture and corpus are shown in Table \ref{tab:staged-training-details}.

\begin{table}[ht]
\setlength\extrarowheight{-5pt}
\centering
\begin{adjustbox}{max width=1.0\linewidth}
\begin{tabular}{lrrrrr}
\toprule
Model & World v1 & World v2 & World v2.1 & World v3 & Total \\
\midrule
RWKV7-World3-0.1B  & 0.6 (RWKV-5) & & & 1.0 (RWKV-7) & 1.6\\
RWKV7-World3-0.4B &      & 1.1 (RWKV-5) & & 2.0 (RWKV-7) & 3.1 \\
RWKV7-World3-1.5B &      & 1.1 (RWKV-6) & 1.4 (RWKV-6) & 3.1 (RWKV-7) & 5.6 \\
RWKV7-World3-2.9B &      & 1.1 (RWKV-6) & 1.4 (RWKV-6) & 3.1 (RWKV-7) & 5.6 \\
\bottomrule
\end{tabular}
\end{adjustbox}
\caption{\centering{Total trillions of tokens trained for all RWKV-7 World 3 models}}
\label{tab:staged-training-details}
\end{table}


Our model format conversion process involves removing the token-shift low-rank MLPs, rescaling by half the embeddings, wkv receptance, wkv output matrix weights, and Layernorm and Groupnorm bias values. Layernorm and Groupnorm weights are clamped above zero and square rooted. We widen the FFN MLP from 3.5x (in RWKV-6) to 4x and add new small (\num{1e-3}) uniform initalizations in the new regions, removing the RWKV-6 FFN receptance weights. We widen the time decay Low-rank MLP and add new small (\num{1e-4}) uniform initializations in the new regions. We replace the gate weights with a LoRA obtained through singular value decomposition and rescaling by half. 

\section{Language Modeling Experiments}\label{evaluations}


\subsection{LM Evaluation Harness Benchmarks}\label{subsec:evals}


RWKV-7 models are evaluated on a series of common English-focused and multilingual benchmarks using \textit{LM Evaluation Harness} \citep{gao10256836framework} as shown in Tables~\ref{tab:eng_bench} and~\ref{tab:multilang_bench}. We benchmarked RWKV-7 along with several new open models which are state-of-the-art in their parameter count ranges. All numbers are evaluated under fp32 precision with lm-eval v0.4.8 using 0-shot, except for MMLU in which case 5-shot was used. 

We find that RWKV-7 is generally able to match the English performance of Qwen2.5 \citep{qwen2025qwen25technicalreport} with less than one third as many training tokens. Interestingly, we found that RWKV-7 models have shown giant leaps in MMLU performance compared to RWKV-6. We also find that RWKV-7-World models expand upon RWKV-6-World models already strong capabilities on multi-lingual benchmarks, outperforming SmolLM2 \citep{allal2025smollm2smolgoesbig}, Llama-3.2 \citep{grattafiori2024llama3herdmodels}, and Qwen-2.5 \citep{qwen2025qwen25technicalreport} by a significant margin.

In Figures \ref{fig:multi-flops} and \ref{fig:eng-flops} we plot FLOPs used to train several open models versus average accuracy across the same sets of common english and multi-lingual benchmarks. The multilingual evals show a very dramatic Pareto improvement versus the transformer models. Also note the similar english-language eval scores, but dramatically lower total FLOPs usage of RWKV7-World models versus other highly trained open transformer models. We theorize that if we were less constrained by compute and were able to train these models from scratch with the same amount of total tokens instead of from pre-trained checkpoints of earlier RWKV versions, the difference would be even more dramatic. Note that we did not plot the Llama 3.2 series of models, as they have no corresponding FLOPs amounts due to having been created via pruning and distillation from larger models.

\begin{figure*}[t!]
    \centering
    \begin{subfigure}[t]{0.48\linewidth}
        \centering
        \includegraphics[height=2.6in]{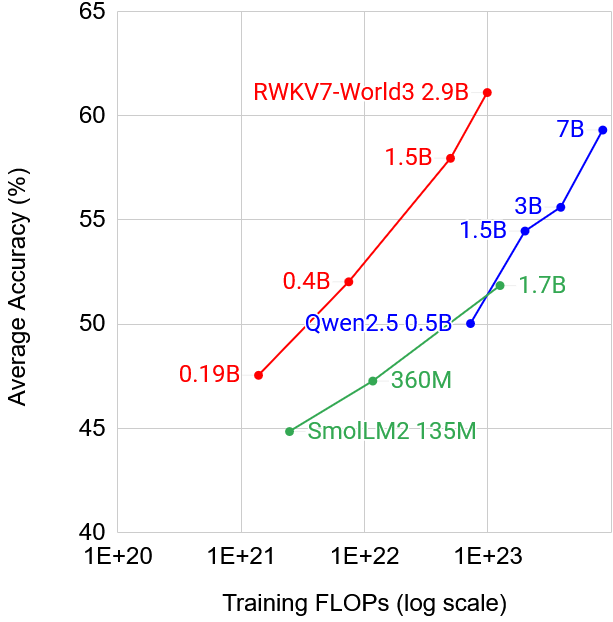}
        \caption{FLOPS vs. Average Benchmark Accuracy}
        \label{fig:multi-flops}
    \end{subfigure}%
    ~ 
    \begin{subfigure}[t]{0.5\linewidth}
        \centering
        \includegraphics[height=2.6in]{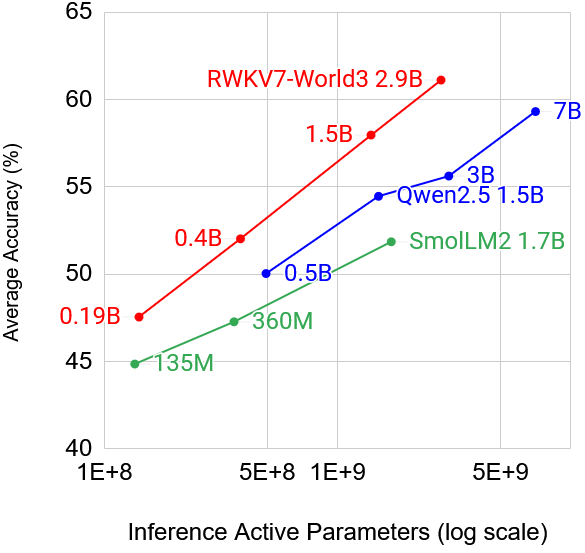}
        \caption{Active Parameters vs. Average Benchmark Accuracy}
        \label{fig:multi-activeparams}
    \end{subfigure}
    \caption{Model Comparisons across Multilingual Benchmarks}
\end{figure*}

\begin{figure*}[t!]
    \centering
    \begin{subfigure}[t]{0.48\linewidth}
        \centering
        \includegraphics[height=2.6in]{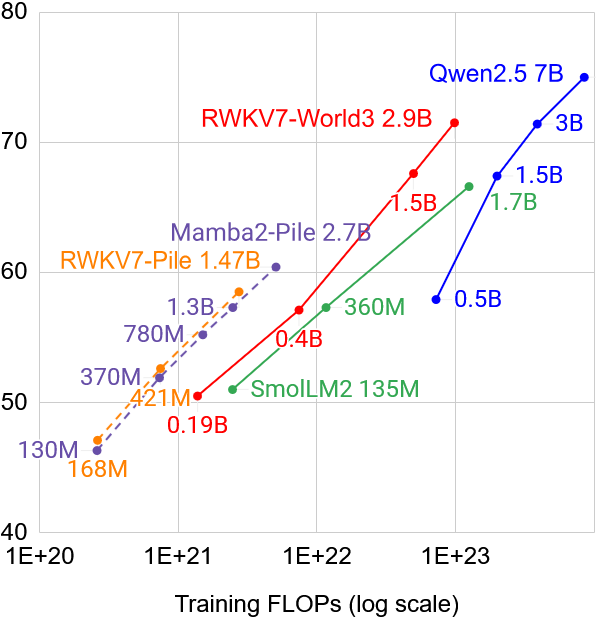}
        \caption{FLOPS vs. Average Benchmark Accuracy}
        \label{fig:eng-flops}
    \end{subfigure}%
    ~ 
    \begin{subfigure}[t]{0.5\linewidth}
        \centering
        \includegraphics[height=2.6in]{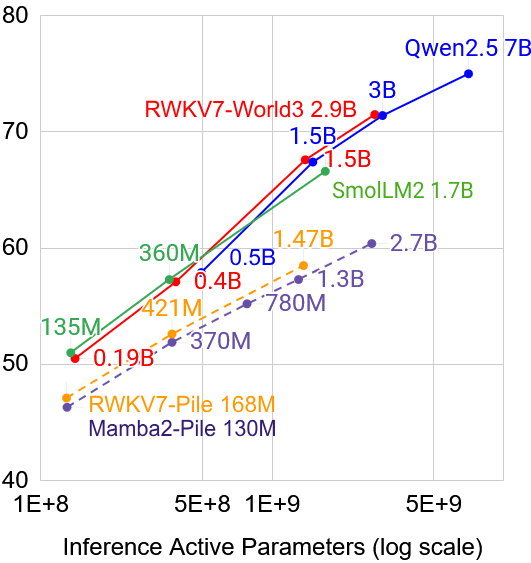}
        \caption{Active Parameters vs. Average Benchmark Accuracy}
        \label{fig:eng-activeparams}
    \end{subfigure}
    \caption{Model Comparisons across English-Language Benchmarks}
\end{figure*}





\begin{table}[ht]
    \setlength\extrarowheight{-5pt}
    \setlength\tabcolsep{5pt}
    \centering
    \begin{adjustbox}{max width=1.0\linewidth}
    \begin{tabular}{lrrrrrrrrrrr}
\toprule
    \bf Model & \bf Tokens & \bf lmb.o & \bf hella & \bf piqa & \bf arcE & \bf arcC & \bf glue & \bf WG & \bf sciq & \bf mmlu & \bf avg \\
       (Name) & (T) & acc$\uparrow$ & acc\_n$\uparrow$  & acc$\uparrow$ & acc$\uparrow$ & acc$\uparrow$ & $^a$acc$\uparrow$ & acc$\uparrow$ & acc$\uparrow$ & acc$\uparrow$ & acc$\uparrow$ \\
\midrule
    RWKV5-World1-0.1B & 0.6  & 38.4  & 31.9   & 61.4 & 44.2 & 19.9 & 45.5 & 52.9 & 76.3 & 23.1 & 43.7 \\
    \textbf{SmolLM2-135M} & 2.0 & 42.9 & \textbf{43.1} & \textbf{68.4} & \textbf{64.4} & \textbf{28.1} & \textbf{49.0} & \textbf{53.0} & 84.0 & \bf 25.8 & \textbf{51.0} \\
    RWKV7-World2.8-0.1B & 1.6    & \bf 48.1  & 42.1   & 67.3 & 59.3 & 25.5 & 48.1 & 52.7 & \bf 86.3 & 25.4 & 50.5 \\
 \midrule
    RWKV5-World2-0.4B & 1.1    & 54.0  & 40.9   & 66.5 & 54.0 & 24.0 & 50.0 & 53.2 & 86.9 & 23.8 & 50.4 \\
    SmolLM2-360M & 4.0 & 53.8 & 56.4 & 72.1 & \textbf{70.4} & \textbf{36.5} & \bf 50.7 & 59.0 & 91.2 & 26.3 & 57.4 \\
    \textbf{Qwen2.5-0.5B} & 18.0 & 52.5 & 52.1 & 70.2 & 64.6 & 29.5 & 54.7 & 56.4 & \textbf{93.1} & \textbf{47.8} & \textbf{57.9} \\
    RWKV7-World2.9-0.4B & 3.1    & \bf 58.6  & \bf 56.8   & \bf 72.9 & 68.7 & 31.9 & 49.4 & \bf 59.9 & 89.7 & 26.1 & 57.1 \\
\midrule
    RWKV6-World2.1-1.6B & 2.5 & 67.4  & 61.1   & 74.4 & 64.3 & 31.0 & 51.0 & 60.7 & 89.5 & 25.1 & 58.3 \\
    Llama3.2-1B & $^b$15.0 & 63.0  &  63.7 & 74.5 & 65.5 & 31.3  & 49.7 & 60.7  & 91.4 & 32.1 & 59.1 \\
    SmolLM2-1.7B & 11.0 & 67.7 & \textbf{71.5} & \textbf{77.0} & 77.7 & \textbf{44.7} & 51.5 & 66.1 & 93.3 & 50.3 & 66.6 \\
    \textbf{Qwen2.5-1.5B} & 18.0 & 63.0 & 67.7 & 75.8 & 75.5 & 41.2 & \textbf{65.0} & 63.4 & \bf 94.2 &  \textbf{61.0} & \bf 67.4 \\
    \textbf{RWKV7-World3-1.5B} & 5.6 & \bf 69.5  & 70.8   & \bf 77.1 & \bf 78.1 & \bf 44.5 & 62.4 & \bf 68.2 & \bf 94.3 & 43.3 & \bf 67.6 \\
\midrule
    RWKV6-World2.1-3B & 2.5    & 71.7  & 68.4   & 76.4 & 71.2 & 35.6 & 56.3 & 66.3 & 92.2 & 28.3 & 62.9\\
    Llama3.2-3B  & $^b$15.0 & 70.5 &  73.6 & 76.7 &  74.5 & 42.2 & 50.7 & 69.9  & 95.7  & 56.5 & 67.8 \\
    \textbf{Qwen2.5-3B}  & 18.0 & 67.1 & 73.5 & 78.6 & 77.4 & 45.0 & \textbf{70.2} & 68.5 & \textbf{96.2} & \textbf{65.7}  & \textbf{71.4} \\
    \textbf{RWKV7-World3-2.9B}  & 5.6    & \bf 73.4  & \bf 76.4   & \bf 79.7 & \bf 81.0 & \bf 48.7 & 61.8 & \bf 72.8 & 95.0 & 55.0 & \bf 71.5 \\
\bottomrule
\multicolumn{12}{c}{\footnotesize $^a$ \textbf{glue} is the average accuracy of 8 subtasks: \textbf{mnli}, \textbf{mnli\_mismatch}, \textbf{mrpc}, \textbf{qnli}, \textbf{qqp}, \textbf{rte}, \textbf{sst2} and \textbf{wnli}} \\
\multicolumn{12}{c}{\footnotesize $^b$ Llama3.2-1B and 3B were pruned and distilled from Llama3.1-8B \citep{grattafiori2024llama3herdmodels}}
    \end{tabular}
    \end{adjustbox}
  \caption{\centering{English Focused Benchmarks, including LAMBADA (\textbf{lmb.o}) \citep{paperno2016lambada}, Hellswag (\textbf{hella}) \citep{hampel1974influence}, PIQA \citep{bisk2020piqa}, AI2 ARC (\textbf{arcE}, \textbf{arcC}) \citep{bhakthavatsalam2021think}, GLUE \citep{wang2018glue}, Winogrande (\textbf{WG}) \citep{sakaguchi2021winogrande}, SciQ \citep{welbl2017crowdsourcing}, MMLU \citep{hendrycks2021measuringmassivemultitasklanguage}. 
  }}
  \label{tab:eng_bench}

\end{table}%

\begin{table}[ht]
    \setlength\extrarowheight{-5pt}
    \centering
    \begin{adjustbox}{max width=1.0\linewidth}
    \begin{tabular}{lrrrrrrrrrr}
\toprule
    \bf Model & \bf Tokens & \bf lmb.m & \bf lmb.m & \bf pawsx & \bf xcopa & \bf xnli & \bf xsClz & \bf xwin & \bf avg \\
      (Name) & (T) & $^a$ppl$\downarrow$ & acc$\uparrow$ & acc$\uparrow$ & acc$\uparrow$ & acc$\uparrow$ & acc$\uparrow$ & acc$\uparrow$ & acc$\uparrow$ \\
\midrule
    RWKV5-World1-0.1B & 0.6 & 270   & 22.0  & 48.6  & 53.0  & 36.1 & 51.7 & 59.5 & 45.1 \\
    SmolLM2-135M & 2.0 & 1514 & 18.6 & \bf 51.2 & 52.2 & 34.9 & 50.6 & 61.7 & 44.9 \\
    \textbf{RWKV7-0.1B} & 1.6 & \bf 114 & \bf 31.6  & 46.1 & \bf 53.3 & \bf 37.6 & \bf 52.6 & \bf 64.1 & \bf 47.5 \\
\midrule
    RWKV5-World2-0.4B & 1.1 & 66 & 36.8  & 49.5  & 54.0  & 38.5 & 54.1  & 65.6 & 49.8 \\
    SmolLM2-360M & 4.0 & 389 & 25.8 & 51.4 & 51.7 & 36.0 & 51.2 & 67.8 & 47.3 \\
    Qwen2.5-0.5B & 18.0 & 108 & 32.9 & \bf 52.6 & 54.4 & 38.6 & 53.9 & 67.8 & 50.0 \\
    \textbf{RWKV7-World3-0.4B} & 3.1 & \bf 52 & \bf 39.6 & 48.7  & \bf 55.4  &  \bf 40.3 & \bf 55.3  & \bf 72.9 & \bf 52.0 \\
\midrule
    RWKV6-World2.1-1.6B & 2.5 & 28 & 47.2  & 52.5  & 58.1  & 41.4 & 58.2  & 76.5 & 55.7 \\
    Llama3.2-1B & $^b$15.0 & 52 & 39.0 & 53.9 & 55.3 & 41.2  & 56.6 & 72.2 & 53.0 \\
    SmolLM2-1.7B & 11.0 & 85 & 37.1 & \bf 56.5 & 53.1 & 38.1 & 54.1 & 72.8 & 52.0 \\
    Qwen2.5-1.5B & 18.0 & 49 & 40.0 & 55.3 & 57.4 & 40.6 & 57.7 & 75.8 & 54.5 \\
    \textbf{RWKV7-World3-1.5B} & 5.6 & \bf 25   & \bf 48.4  & 54.8  & \bf 59.7  & \bf 43.7 & \bf 61.4  & \bf 79.8 & \bf 58.0 \\
\midrule
    RWKV6-World2.1-3B & 2.5 & 21 & 51.0  & 53.4  & 60.2  & 42.7 & 61.3  & 78.8 & 57.9 \\
    Llama3.2-3B & $^b$15.0 & 30 & 45.9 & \bf 59.9 & 58.5 & 44.2 & 60.6 & 79.2 & 58.1 \\
    Qwen2.5-3B & 18.0 & 36 & 43.5 & 53.3 & 59.0 & 38.5 & 59.6 & 79.8 & 55.6 \\
    \textbf{RWKV7-World3-2.9B} & 5.6 & \bf 18 & \bf 52.9  & 58.2  & \bf 63.1  & \bf 45.4 & \bf 64.7 & \bf 82.4 & \bf 61.1 \\
\bottomrule
\multicolumn{10}{c}{\footnotesize $^a$ The perplexity is the geometric mean, rather than arithmetic average, across 5 languages} \\
\multicolumn{10}{c}{\footnotesize $^b$ Llama3.2-1B and 3B were pruned and distilled from Llama3.1-8B \citep{grattafiori2024llama3herdmodels}}
\end{tabular}
\end{adjustbox}
\caption{\centering{Multilingual Benchmarks, including LAMBADA Multilingual (\textbf{lmb.m}) \citep{gao10256836framework}, XCOPA \citep{ponti-etal-2020-xcopa}, XNLI \citep{conneau2018xnli},XStoryCloze (\textbf{xsClz}) \citep{lin2022few}, xWinogrande (\textbf{xwin}) \citep{tikhonov2021s}.
}}

\label{tab:multilang_bench}%
\end{table}%


\subsection{Recent Internet Data Evaluation}\label{subsec:uncheatable_eval}

Modern large language models are trained on massive datasets. Despite careful data cleaning, benchmark data leakage remains a challenge, compromising the validity of these evaluations. To complement traditional benchmarks, we evaluated RWKV-7 Goose and other leading open-source models using temporally novel internet data, generated after the models' training periods; this data could not have appeared in the training sets, removing data leakage concerns.

Specifically, we collected new data created after January 2025, including: newly submitted computer science and physics papers on arXiv, newly created Python/C++ open-source repositories on GitHub, recently published Wikipedia entries, new fiction on Archive of Our Own \citep{ao3_whole}, and recent news articles. Inspired by \citet{delétang2024languagemodelingcompression, li2024evaluatinglargelanguagemodels}, we used compression rate as our evaluation metric. See Table~\ref{tab:cr_comparison} for details. 

Remarkably, despite being trained on significantly less data than other top models, RWKV-7 Goose showed competitive performance on this temporally novel data.

\begin{table}[ht]
 \setlength\extrarowheight{-5pt}
 \setlength\tabcolsep{5pt}
 \centering
 \footnotesize
 \begin{tabular}{lrrrrrrrr}
   \toprule
   \addlinespace[2pt]
   \textbf{Model} & \textbf{\shortstack{arXiv\\CS $\downarrow$ }} & \textbf{\shortstack{arXiv\\Phys. $\downarrow$ }} & \textbf{\shortstack{Github\\Python $\downarrow$ }} & \textbf{\shortstack{Github\\C++ $\downarrow$ }} & \textbf{\shortstack{AO3\\Eng $\downarrow$ }} & \textbf{\shortstack{BBC\\news $\downarrow$ }} & \textbf{\shortstack{Wiki\\Eng $\downarrow$ }} & \textbf{average $\downarrow$ } \\
   \midrule
   \textbf{Qwen2.5-1.5B} & \textbf{8.12} & 8.65 & \textbf{4.42} & \textbf{4.40} & 11.76 & 9.58 & 9.49 & \textbf{8.06} \\
   RWKV-7 1.5B & 8.25 & 8.77 & 5.57 & 5.29 & \textbf{10.93} & \textbf{9.34} & \textbf{8.97} & 8.16 \\
   Llama-3.2-1B & 8.37 & 8.76 & 5.18 & 5.16 & 11.69 & \textbf{9.34} & 9.07 & 8.23 \\
   SmolLM2-1.7B & 8.38 & 9.04 & 5.17 & 4.94 & 11.20 & 9.40 & 9.46 & 8.23 \\
   Index-1.9B & 8.34 & \textbf{8.59} & 5.65 & 5.29 & 11.49 & 9.51 & 9.23 & 8.30 \\
   stablelm-2-1.6b & 8.58 & 9.08 & 5.54 & 5.45 & 11.42 & 9.24 & 9.06 & 8.34 \\
   RWKV-6 1.5B & 8.62 & 9.00 & 6.06 & 5.80 & 11.09 & 9.57 & 9.30 & 8.49 \\
   RWKV-5 1.5B & 8.77 & 9.11 & 6.20 & 5.92 & 11.25 & 9.75 & 9.50 & 8.64 \\
   mamba2-1.3b & 8.74 & 8.74 & 6.32 & 5.71 & 11.63 & 9.74 & 9.86 & 8.68 \\
   MobileLLM-1.5B & 8.82 & 9.29 & 6.79 & 6.29 & 11.59 & \textbf{9.15} & 9.22 & 8.73 \\
   mamba-1.4b-hf & 8.88 & 8.86 & 6.43 & 5.81 & 11.70 & 9.83 & 9.97 & 8.78 \\
   Zamba2-1.2B & 8.57 & 9.21 & 6.91 & 7.08 & 11.39 & 9.38 & 9.26 & 8.83 \\
   SmolLM-1.7B & 8.38 & 9.02 & 5.76 & 6.55 & 12.68 & 9.85 & 9.89 & 8.88 \\
   MobileLLM-1B & 9.03 & 9.57 & 7.03 & 6.53 & 11.86 & 9.35 & 9.43 & 8.97 \\
   RWKV-4 1.5B & 9.34 & 9.80 & 6.54 & 6.16 & 11.33 & 10.00 & 9.82 & 9.00 \\
   pythia-1.4b-v0 & 9.12 & 9.20 & 6.79 & 6.15 & 12.19 & 10.20 & 10.43 & 9.15 \\
   Falcon3-1B-Base & 8.60 & 9.20 & 6.92 & 7.16 & 13.04 & 10.45 & 10.75 & 9.45 \\
   \midrule
   \textbf{Llama-3.2-3B} & \textbf{7.78} & \textbf{8.10} & 4.15 & 4.59 & 10.90 & \textbf{8.70} & \textbf{8.28} & \textbf{7.57} \\
   Qwen2.5-3B & 7.79 & 8.25 & \textbf{4.15} & \textbf{4.12} & 11.23 & 9.15 & 8.96 & 7.66 \\
   RWKV-7 2.9B & 7.90 & 8.34 & 5.16 & 4.88 & \textbf{10.48} & 8.92 & 8.47 & 7.74 \\
   stablelm-3b-4e1t & 8.15 & 8.50 & 5.28 & 4.85 & 10.89 & 8.82 & 8.51 & 7.86 \\
   Minitron-4B-Base & 8.09 & 8.70 & 5.13 & 4.74 & 11.05 & 9.08 & 8.90 & 7.96 \\
   recurrentgemma-2b & 8.24 & 8.52 & 5.22 & 4.80 & 11.30 & 8.94 & 8.88 & 7.99 \\
   RWKV-6 3B & 8.27 & 8.58 & 5.66 & 5.39 & 10.67 & 9.17 & 8.82 & 8.08 \\
   gemma-2-2b & 8.39 & 8.81 & 5.36 & 5.01 & 11.35 & 8.90 & 9.03 & 8.12 \\
   mamba2attn-2.7b & 8.33 & 8.29 & 5.78 & 5.22 & 11.13 & 9.28 & 9.26 & 8.18 \\
   RWKV-5 3B & 8.42 & 8.70 & 5.78 & 5.51 & 10.83 & 9.36 & 9.00 & 8.23 \\
   mamba2-2.7b & 8.43 & 8.37 & 5.93 & 5.34 & 11.21 & 9.37 & 9.38 & 8.29 \\
   Zamba2-2.7B & 8.17 & 8.70 & 6.30 & 6.39 & 10.97 & 8.95 & 8.74 & 8.32 \\
   mamba-2.8b-hf & 8.57 & 8.52 & 6.03 & 5.46 & 11.31 & 9.49 & 9.53 & 8.41 \\
   RWKV-4 3B & 8.90 & 9.27 & 6.07 & 5.67 & 10.90 & 9.57 & 9.30 & 8.53 \\
   pythia-2.8b-v0 & 8.72 & 8.73 & 6.29 & 5.71 & 11.66 & 9.74 & 9.82 & 8.67 \\
   \bottomrule
 \end{tabular}
 \caption{Compression rate (unit: \%) compared across different language models on various data sources, including arXiv papers, GitHub repositories, AO3 fiction, and news articles created after January 2025.}
 \label{tab:cr_comparison}
\end{table}

\subsection{Associative Recall}
\label{subsec:associative_recall}

Associative recall (AR) \citep{arora2023zoology} evaluates the ability of the model to recall previously encountered information within a given context. Research indicates that a model’s capacity for AR can reflect its effectiveness in learning from context \citep{elhage2021mathematical, olsson2022incontext}. Consequently, AR has become a standard benchmark for developing new architectural designs in language models \citep{fu2023hungry, poli2023hyena, lutati2023focus}.



We train two-layer RWKV-7 with MQAR and increased the difficulty by scaling the sequence length to as long as 2048. We use the RWKV-7 specific initialization, and set the $\epsilon$ of AdamW to \num{1e-18} to stabilize learning in later stages. Weight decay of 0.1 is only applied to weight matrices, preventing the degeneration of certain modules (such as weights and biases of LayerNorm).

\begin{table}[ht]
\setlength\extrarowheight{-5pt}
\centering
\begin{tabular}{rrcccccc}
\toprule
Dim & WKV state dim & $(64, 4)$ & $(128, 8)$ & $(256,16)$ & $(512,64)$ & $(1024,128)$ & $(2048,256)$ \\
\midrule
64 & 8192 & \checkmark & \checkmark & \checkmark & 98.43 & 95.01 & 72.93 \\
128 & 16384 & \checkmark & \checkmark & \checkmark & \checkmark & \checkmark & 94.97 \\
256 & 32768 & \checkmark & \checkmark & \checkmark & \checkmark & \checkmark & 98.97 \\
512 & 65536 & \checkmark & \checkmark & \checkmark & \checkmark & \checkmark & \checkmark \\
\bottomrule
\end{tabular}
\caption{RWKV-7 MQAR test results. We use $(a, b)$ to denote the sequence length and number of KV pairs respectively. A check mark \checkmark indicates that the model achieves over 99\% accuracy. Results are maxed over 3 different learning rate settings.}
\label{tab:mqar}
\end{table}

Interestingly, with only a WKV size of 8192, RWKV-7 is able to recall 72.93\% at the setting of 256 Key-value pairs. This suggests that a total of roughly $256 \times 0.7293 \times \log_2(\text{number of key tokens} \times \text{number of value tokens}) = 186.2 \times 2 \times \log_2(4096) = 4480.8$ bits of information, is stored in a $8192$ dimensional state, yielding an information density of $0.547$ bits per dimension.

\subsection{Mechanistic Architecture Design}
\label{subsec:mad}
We evaluate RWKV-7 on the Mechanistic Architecture Design (MAD) benchmark \citep{poli2024mechanisticdesignscalinghybrid}, a suite of synthetic token manipulation tasks designed to probe architectural capabilities in sequence modeling, as shown in Table \ref{tab:mad_results}.

\begin{table}[ht]
\footnotesize
\centering
\begin{tabular}{lccccccr}
\toprule
Model & Compress & Fuzzy & In-Context & Memorize & Noisy & Selective & Avg \\
 & & Recall & Recall & & Recall & Copy & \\
\midrule
RWKV-7 & 44.5 & \textbf{43.2} & \textbf{100} & 89.1 & \textbf{100} & 98.8 & \textbf{79.3} \\
\midrule
Transformer & 51.6 & 29.8 & 94.1 & 85.2 & 86.8 & 99.6 & 74.5 \\
Multihead Hyena & 44.8 & 14.4 & 99.0 & 89.4 & 98.6 & 93.0 & 73.2 \\
DeltaNet & 42.2 & 35.7 & \textbf{100} & 52.8 & \textbf{100} & \textbf{100} & 71.8 \\
Mamba & \textbf{52.7} & 6.7 & 90.4 & \textbf{89.5} & 90.1 & 86.3 & 69.3 \\
Hyena & 45.2 & 7.9 & 81.7 & 89.5 & 78.8 & 93.1 & 66.0 \\
GLA & 38.8 & 6.9 & 80.8 & 63.3 & 81.6 & 88.6 & 60.0 \\
\bottomrule
\multicolumn{8}{l}{Results for comparison models from \cite{yang2024_deltanet}}
\end{tabular}
\caption{Results on the MAD benchmark}
\label{tab:mad_results}
\end{table}

RWKV-7 achieves the highest average score across all six tasks, outperforming previous architectures. It demonstrates perfect accuracy on In-Context and Noisy Recall tasks, matching DeltaNet while setting a new state-of-the-art for Fuzzy Recall. RWKV-7 also shows strong performance in memorization and selective copying, suggesting effective combination of attention-based and recurrent model strengths.

\subsection{Long Context Experiments}\label{subsec:long_context_experiments}


To evaluate the ability of RWKV models to retain information over long sequences, we measured loss versus sequence position (we select tokens in range $[L/2-16384, L/2+16384)$ for document length $L$) on the PG19 test set \citep{rae2019compressivetransformerslongrangesequence} for two types of RWKV7 models and their predecessors trained on either The Pile dataset or World dataset. Despite sharing the same architecture and being pretrained on 4k context windows, models trained on different datasets exhibited different behaviors. The Pile-trained RWKV7 showed more significant loss reduction on long contexts compared to its predecessors, demonstrating effective long-context extrapolation (see Figure \ref{fig:pile_pg19_loss}). Surprisingly, for RWKV7 trained on the World dataset, when processing contexts longer than 10k, the loss began to show an increasing trend (see Figure \ref{fig:world_pg19_loss}). We speculate this is because the larger dataset and model size created inductive biases that caused overfitting to specific context lengths. Further experiments showed that fine-tuning on long contexts can restore its long context capabilities.

\begin{figure*}[ht!]
    \centering
    \includegraphics[width=\linewidth]{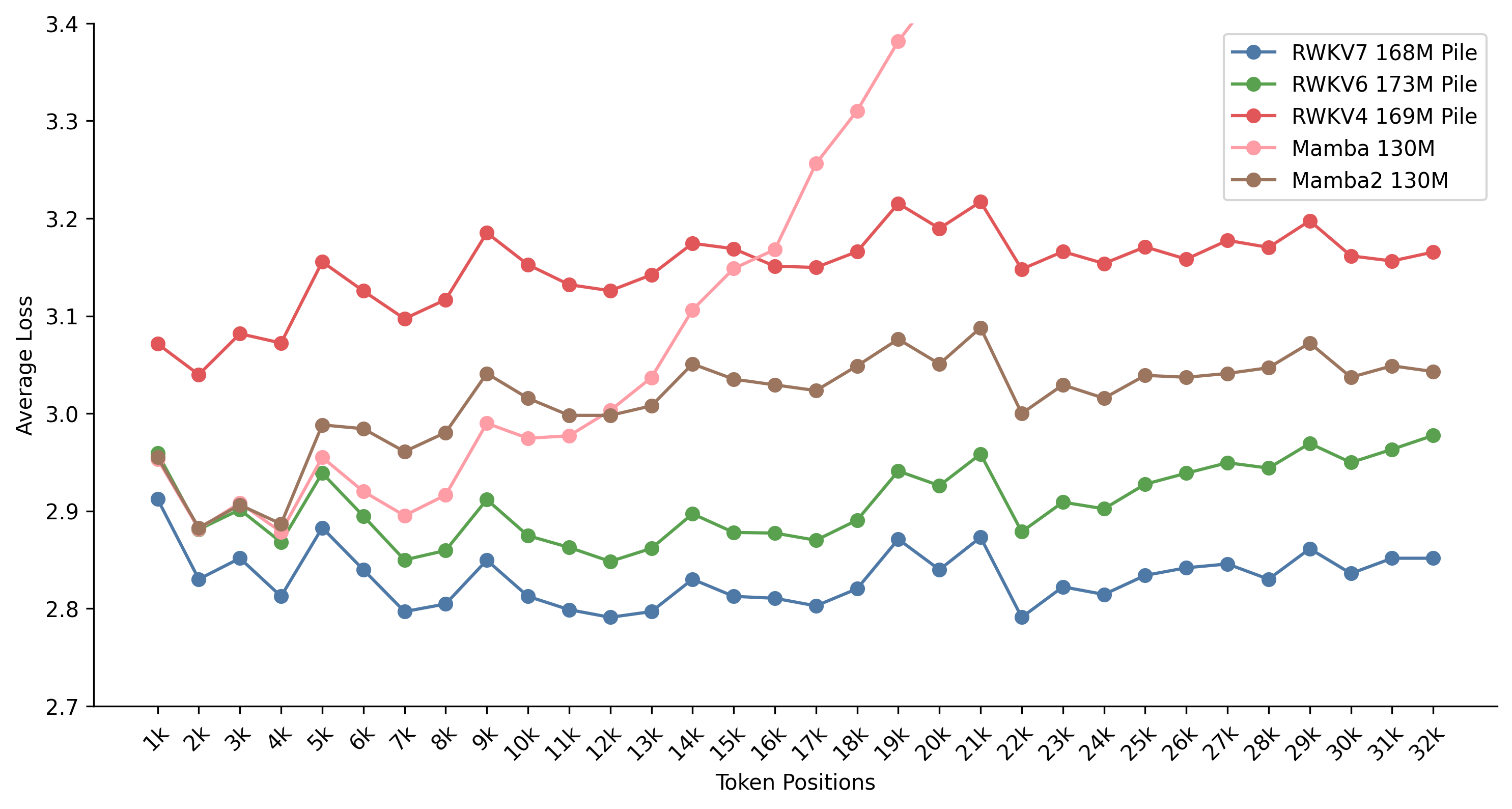}
    \caption{PG19 loss versus sequence position for RWKV and Mamba models trained on The Pile datasets.}
    \label{fig:pile_pg19_loss}
\end{figure*}

\begin{figure*}[ht!]
    \centering
    \includegraphics[width=\linewidth]{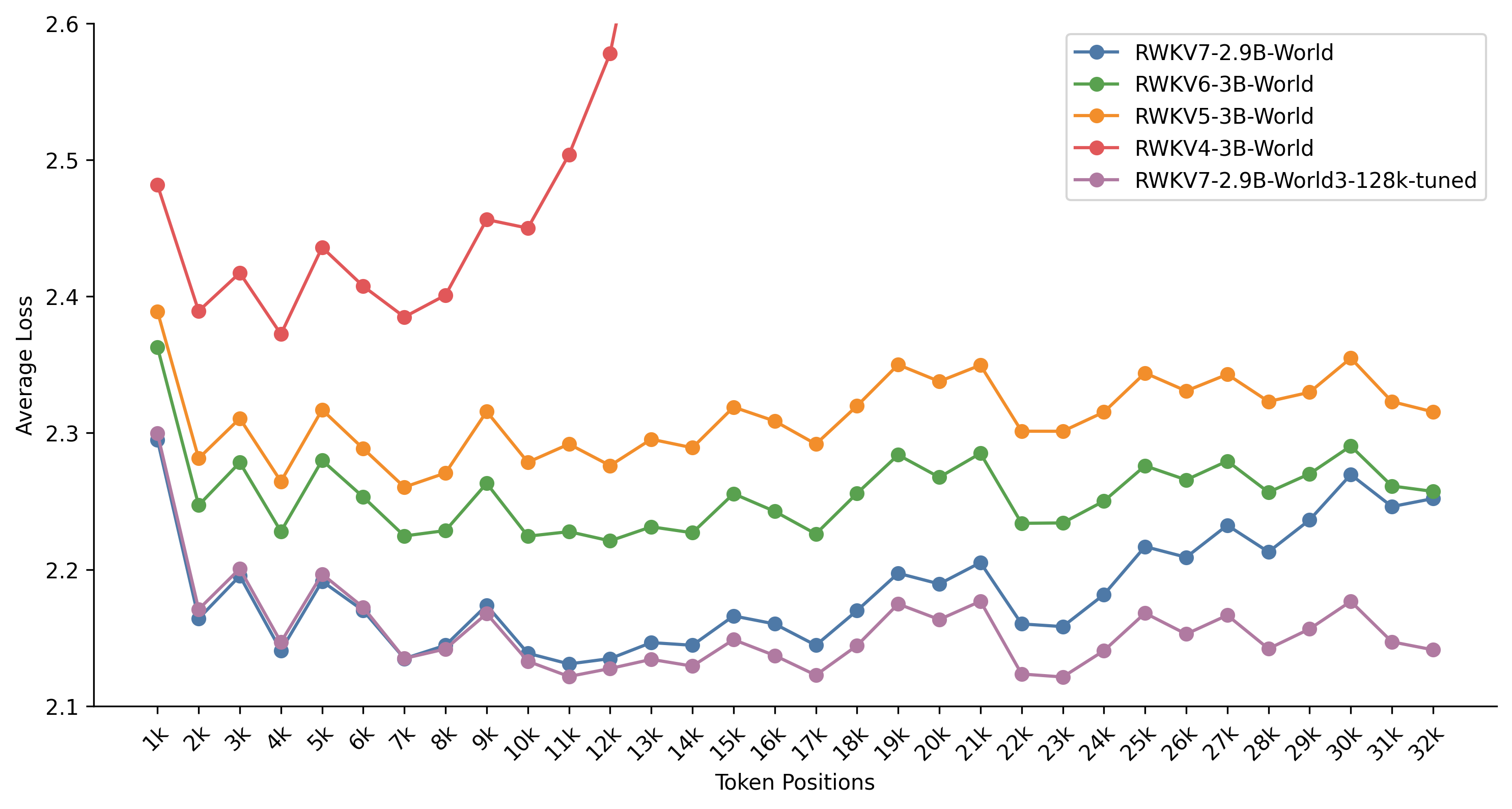}
    \caption{PG19 loss versus sequence position for RWKV7 models and predecessors trained on the World dataset.}
    \label{fig:world_pg19_loss}
\end{figure*}


To further test RWKV-7 long-context retrieval abilities, we conduct a pass-key retrieval evaluation following the approach of \citep{chen2024longloraefficientfinetuninglongcontext} and plot the results in Figure \ref{fig:NIAH}. In this evaluation, a single sentence is repeated multiple times within a long context window, with a key phrase embedded at different positions. RWKV7-World3-1.5B achieves perfect accuracy up to a context length of 19600 tokens but exhibits degradation beyond 20600 tokens. The larger RWKV7-World3-2.9B extends perfect retrieval up to 35000 tokens, highlighting the benefits of scaling. However, performance begins to degrade beyond this point.

\begin{figure*}[ht!]
    \centering
    \begin{subfigure}[b]{0.48\textwidth}
        \includegraphics[width=\textwidth]{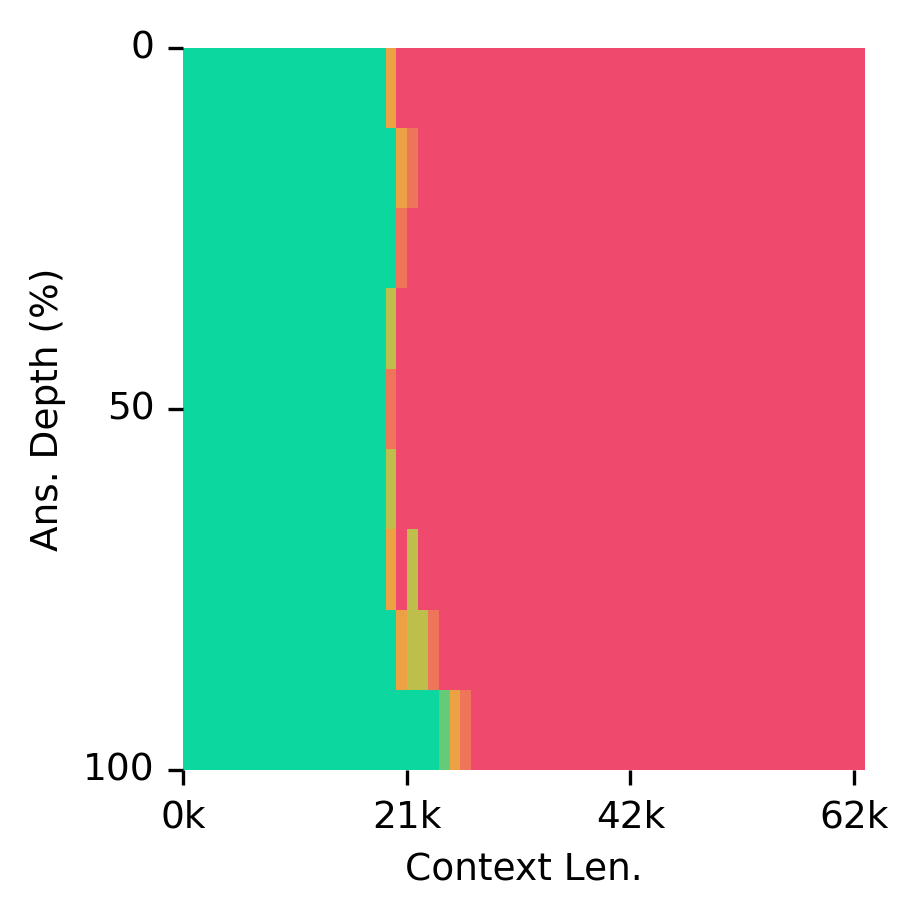}
        \caption{1.5B}
        \label{fig:NIAH_subfig1}
    \end{subfigure}
    \hfill
    \begin{subfigure}[b]{0.48\textwidth}
        \includegraphics[width=\textwidth]{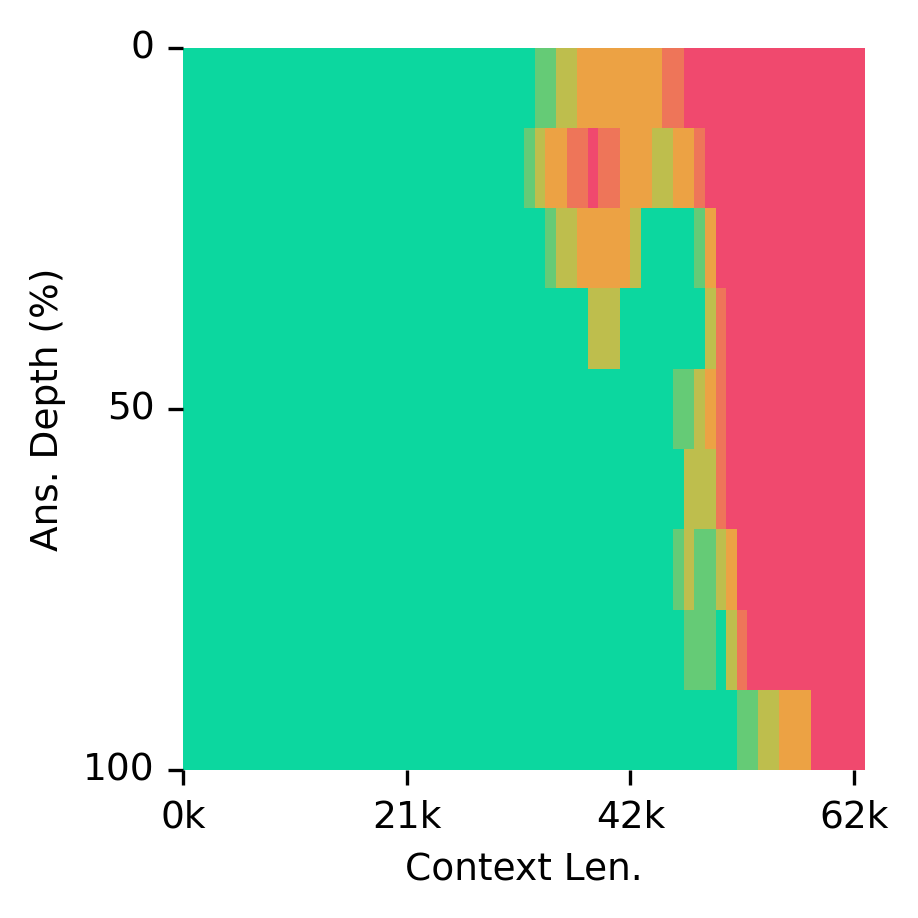}
        \caption{3B}
        \label{fig:NIAH_subfig2}
    \end{subfigure}

    \vspace{1em}

    \begin{subfigure}[b]{0.48\textwidth}
        \includegraphics[width=\textwidth]{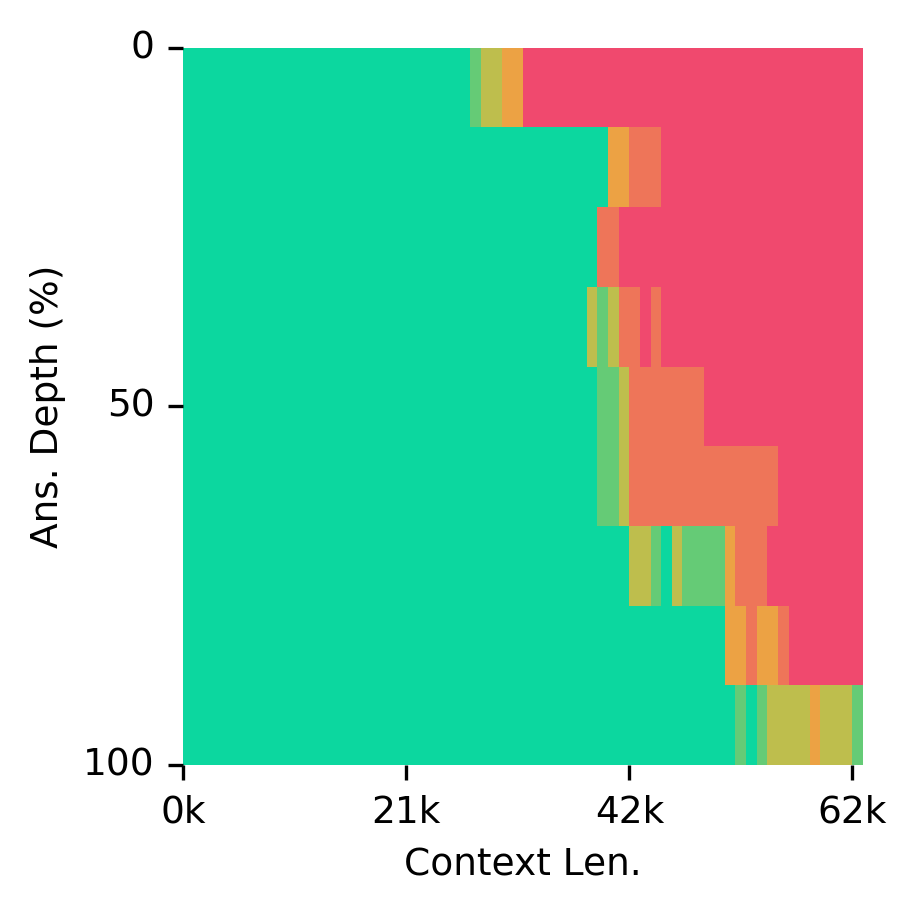}
        \caption{1.5B extended}
        \label{fig:NIAH_subfig3}
    \end{subfigure}
    \hfill
    \begin{subfigure}[b]{0.48\textwidth}
        \includegraphics[width=\textwidth]{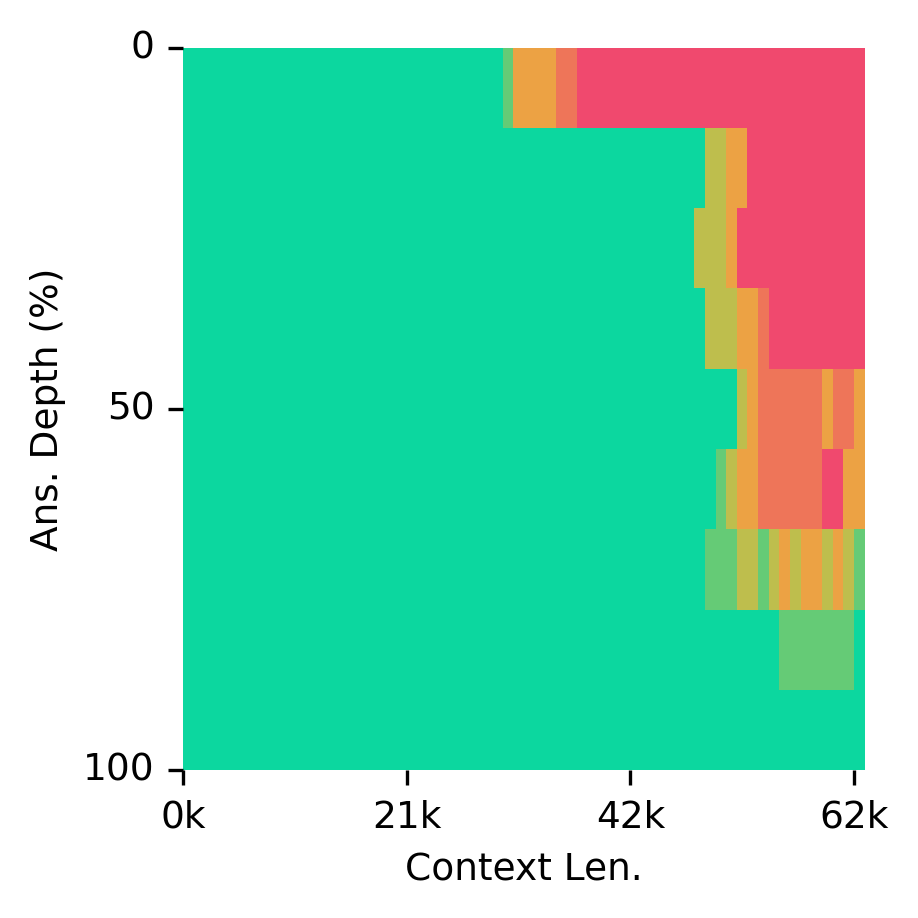}
        \caption{3B extended}
        \label{fig:NIAH_subfig4}
    \end{subfigure}

    \caption{RWKV7-World3 pass-key retrieval evaluation}
    \label{fig:NIAH}
\end{figure*}

To explore potential improvements, we fine-tuned RWKV7-World3-1.5B and RWKV7-World3-2.9B on packed training sequences of length 128k tokens from a specially constructed dataset, which leads to further improvements in retrieval accuracy. With this fine-tuning, RWKV-7 (1.5B) reliably retrieves key phrases up to 29k tokens, and degradation is observed only around 40k tokens. RWKV-7 (2.9B) reliably retrieves the pass key up to 30k tokens, and degrades around 50k tokens.

Our context length extension dataset is comprised of both public and custom sources listed in Table \ref{tab:ctxlen_dataset}. We employed a document length-based weighting scheme to prioritize longer contexts during training. To approximate document lengths of 128,000 tokens, we used character counts of 512,000. Documents of less than 32,768 characters were assigned a weight of 1.0, while longer documents were assigned linearly increasing weights between 2.0 and 3.0, with a cap of 3.0 beyond 512,000. This method increases the inclusion of longer documents to bolster the model's handling of extended contexts while retaining shorter documents for diversity.

\begin{table}[ht]
\setlength\extrarowheight{-5pt}
\centering
\begin{tabular}{llr}
\toprule
\textbf{Dataset} & \textbf{Type} & \textbf{Amount} \\
\midrule
\href{https://huggingface.co/datasets/mlfoundations/dclm-baseline-1.0}{dclm-baseline-1.0}  & Public     & 25\% \\
\href{https://huggingface.co/datasets/HuggingFaceFW/fineweb-edu}{fineweb-edu}        & Public     & 15\% \\
\href{https://huggingface.co/datasets/HuggingFaceFW/fineweb}{fineweb}            & Public     & 5\%  \\
\href{https://huggingface.co/datasets/codeparrot/github-code}{codeparrot/github-code}           & Public     & 10\% \\
\href{https://huggingface.co/datasets/recursal/arXiv-CC0-v0.5}{arXiv-CC0-v0.5}                            & Custom     & 10\% \\
\href{https://huggingface.co/datasets/recursal/SuperWikiNEXT-32B}{SuperWikiNEXT-32B}                        & Custom     & 10\% \\
public domain books              & Custom     & 15\% \\
\href{https://huggingface.co/datasets/bigcode/the-stack}{the-stack} (filtered)             & Custom     & 10\% \\
\bottomrule
\end{tabular}
\caption{Context Length Extension Dataset Components}
\label{tab:ctxlen_dataset}
\end{table}

\subsection{Evaluating State Tracking Using Group Multiplication}
\label{subsec:evaluate_state_tracking}

We adopt the experimental setting from \cite{illusionstate_2024_merril} to evaluate the state-tracking capabilities of RWKV7 in comparison to Transformer, Mamba, S4, and classical RNN models. Given a sequence \( g_0, g_1, g_2, \dots, g_n \) drawn from \( A_5 \), \( A_4 \times \mathbb{Z}_5 \), or \( \mathbb{Z}_{60} \), each step \( i \) is labeled with the cumulative product of the first \( i \) elements.  

We plot the minimum number of layers required to achieve over 95\% validation accuracy on group multiplication tasks, as a function of sequence length and group structure. The results are shown in Figure~\ref{fig:rwkv7-state-tracking}. Our findings indicate that RWKV-7 exhibits stronger state-tracking capabilities than Transformers, Mamba, and S4, though slightly weaker than classical RNNs. Figure~\ref{fig:rwkv7-state-tracking} also aligns with our theory from Appendix \ref{sec:reg_lang}, which predicts that RWKV-7 can perform state tracking and recognize any regular language with a constant number of layers. RWKV-7 has no expressivity advantage for state tracking compared to classical RNNs, which can recognize any regular language in a single layer. However, classical RNNs, while being theoretically expressive, typically suffer from gradient vanishing and memorization problems \citep{zucchet2024recurrentneuralnetworksvanishing} and cannot be parallelized efficiently, unlike RWKV-7.

\begin{figure*}[ht!]
    \centering
    \includegraphics[width=\linewidth]{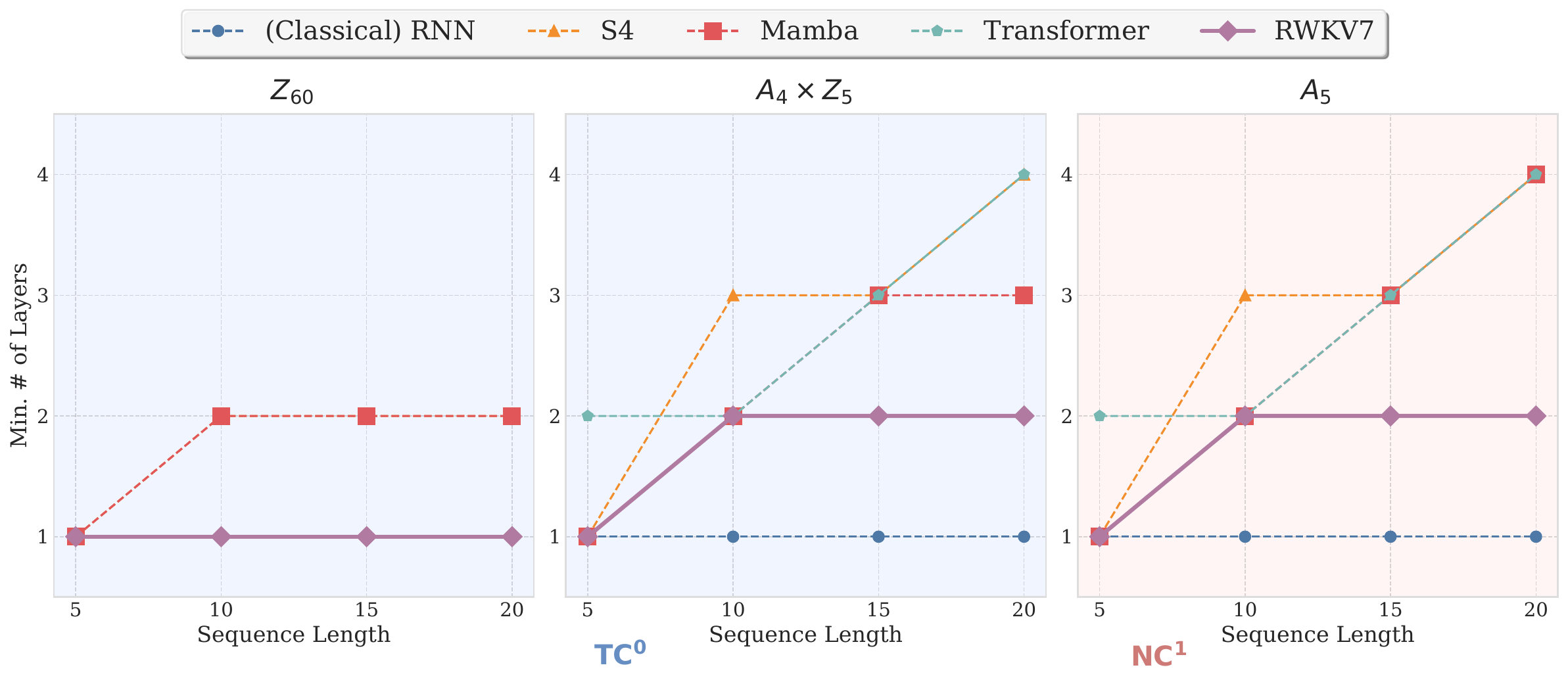}
    \caption{Minimum number of layers (lower is better) required to attain > 95\% validation accuracy on group multiplication problems by sequence length and group. }
    \label{fig:rwkv7-state-tracking}
\end{figure*}

\section{Speed and Memory Usage}
\label{sec:Speed and Memory Usage}

\begin{figure}
    \centering
    \includegraphics[width=\linewidth]{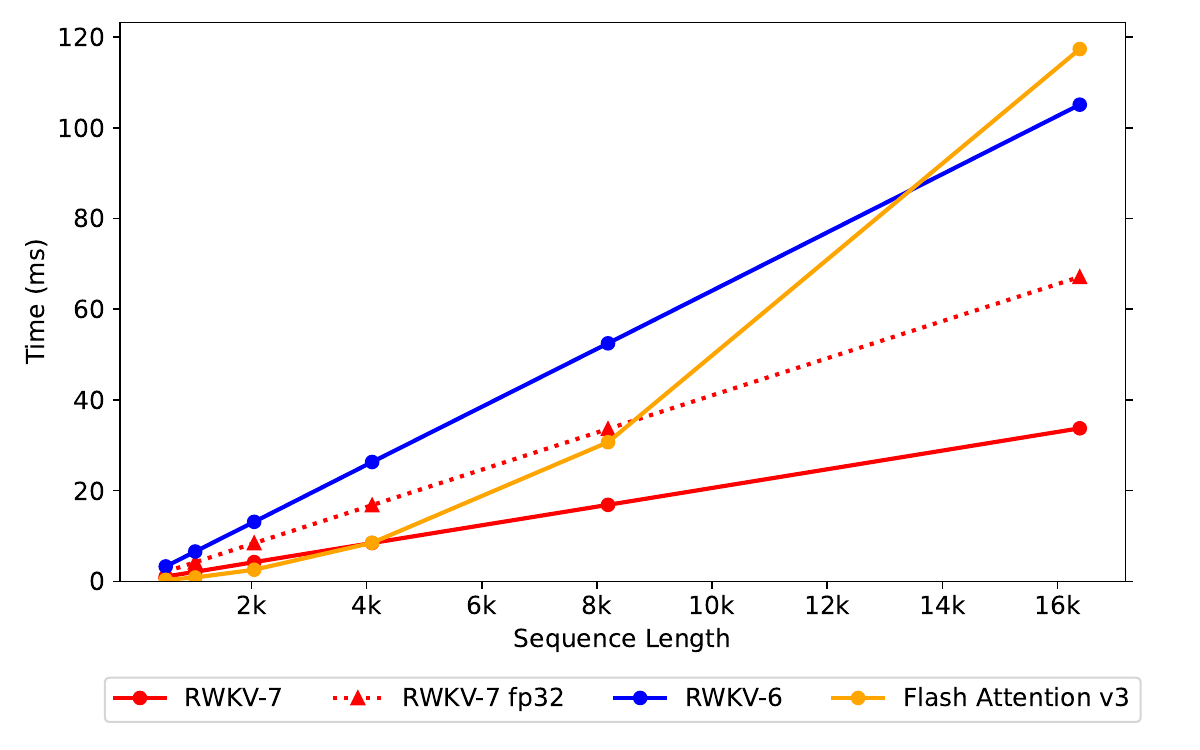}
    \caption{Time vs. Sequence Length (H100)}
    \label{fig:H100_speed_comparison}
\end{figure}

We compare the training speed and memory usage of the RWKV-7 attention-like kernel with the RWKV-6 kernel and Flash Attention v3 \citep{shah2024flashattention}. The "RWKV-7" kernel accelerates bfloat16 matrix multiplications with modern CUDA instructions. We also include the "RWKV-7 fp32" kernel, which is simpler and performs all its internal calculations using float32. Although the bfloat16 kernel is faster, to maximize precision, the RWKV-7 fp32 kernel was used to train the RWKV-7 World models.

Our CUDA kernels are tuned for head dimension 64, as used in the RWKV-7-World models. The kernels still perform well for head dimension 128, but their efficiency drops off at larger head dimensions. There exist other RWKV-7 implementations which focus on head dimensions greater than 128. A key example is the Flash Linear Attention library \citep{yang2024fla}, which offers a Triton-based implementation designed for these larger configurations.

\paragraph{Speed}
In Figure \ref{fig:H100_speed_comparison}, we time the forward + backward pass of each kernel for batch size 8, head dimension 64 and model dimension 4096 (64 wkv heads) on an H100 SXM GPU, for varying sequence lengths. Although Flash Attention v3 is heavily optimized for the H100 GPU, it scales quadratically with sequence length, while the RWKV models scale linearly. This makes the RWKV models faster than attention for large sequence lengths. Furthermore, the optimized RWKV-7 kernel is about three times faster than the official RWKV-6 kernel.

The forward pass of RWKV-7 is about twice as fast as the backward pass. For inference, the forward pass does not need to store the wkv state, making it faster. For example, for sequence length 16k, the forward pass without storing state takes 7.9 ms, while the forward pass with storing state takes 11.2 ms, the backward pass takes 22.5 ms, and the Flash Attention v3 forward pass takes 33.9 ms.

\paragraph{Memory}
The peak training memory usages of the tested models are well described by the formulas derived below. For example, the runs in Figure \ref{fig:H100_speed_comparison} required peak memory within 2\% of the estimates.

In the tested kernels, the memory required per stored variable (e.g. q, k, or v in attention) is $$\text{batch size} \times \text{model dimension} \times \text{sequence length} \times \text{2 bytes for bfloat16}.$$ For sequence length 1024, this is 64MB per variable. To calculate memory usage, we may use Flash Attention v3: 10 variables, RWKV-6: 10 variables, RWKV-7: 18 variables, RWKV-7 fp32: 24 variables.

Flash Attention v3 requires 4 variables q, k, v and output for the forward pass, and the corresponding 4 gradients. Finally, the backward pass uses a temporary variable to accumulate the gradient of q in float32, yielding 2 variables worth of addition memory, for a total of 4+4+2 = 10.

RWKV-6 requires 5 variables r, w, k, v, and output for the forward pass and also the corresponding 5 gradients for the backward pass, for a total of 10.

RWKV-7 uses 7 variables in the forward pass (r, w, k, v, $-\hat\kappa$, $\hat\kappa\odot a$, and output), and the corresponding 7 gradients. Additionally, it stores the wkv state every 16 timesteps. At head size 64, the state contributes the equivalent of 4 variables, for a total of 18. "RWKV-7 fp32" has the same 14 forward variables and gradients, but uses more memory to store the states in float32, for a total of 24 variable equivalents.

Memory usage is constant for single token inference and follows the formulas above, minus the gradients and state storage. Pre-fill can easily be accomplished in a chunked manner, with memory usage growing linearly with regard to chunk size. This allows an easy trade-off for fast parallelized pre-fill, with user selectable maximum memory usage.

\section{Multimodal Experiments}
\label{sec:multimodal_experiments}

In this section, we explore the capabilities of Goose when extended to handle multimodal tasks, where the model processes and integrates textual inputs with inputs from a different domain.


\paragraph{RWKV for Image Understanding}
\label{sec:image-modelling}

\begin{figure*}[ht!]
    \centering
    \includegraphics[width=\linewidth]{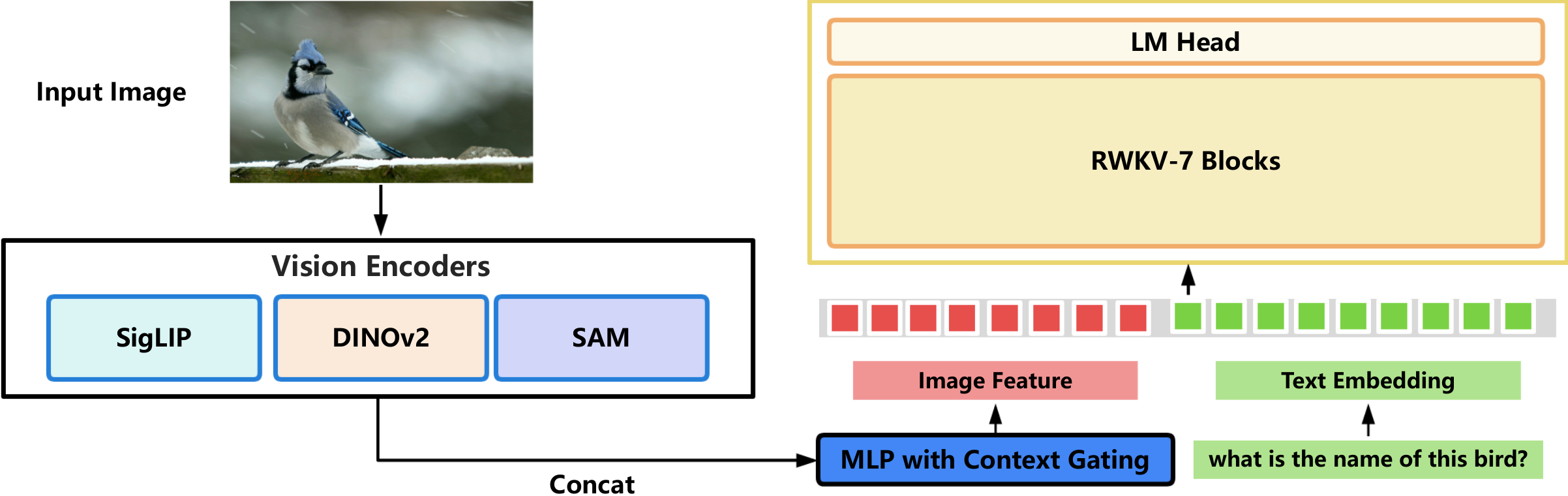}
    \caption{The architecture of VisualRWKV-7. The input image is processed by three vision encoders, and the obtained features are concatenated. Afterward, they are projected through an MLP with context gating to align with the dimensions of the RWKV-7 block. Finally, the image features are concatenated with the text embeddings and fed into the RWKV-7 LLM.}
    \label{fig:visualrwkv7-arch}
\end{figure*}

To demonstrate the modeling capabilities of RWKV-7, we constructed VisualRWKV-7~(Figure\ref{fig:visualrwkv7-arch}), a visual language model based on the RWKV-7 block, to evaluate the image understanding capabilities of RWKV-7.
VisualRWKV-6~\citep{Hou2024VisualRWKVER} used the CLIP encoder, which focused on processing low-resolution images and achieved good results. 
VisualRWKV-7 replaces the CLIP encoder with SigLIP and DINO visual encoders, and introduced a new high-resolution SAM vision encoder, which enhances the model’s supported resolution to 1024 x 1024.

\begin{table*}[htb]
\centering
\scalebox{1.0}{
\begin{tabular}{l| l l| c c c c}
\toprule
Method & Vision Encoder & LLM & VQA & SQA & TQA & GQA \\
\midrule
VisualRWKV-6 & SigLIP+DINOv2+SAM & RWKV6-1.6B & 73.6 & 57.0 & 48.7 & 58.2\\
VisualRWKV-6 & SigLIP+DINOv2+SAM & RWKV6-3.1B & 79.1 & 62.9 & 52.7 & 61.0\\
\midrule
VisualRWKV-7 & SigLIP+DINOv2+SAM &RWKV7-0.1B & 75.2 & 50.6 & 37.9 & 59.9\\
VisualRWKV-7 & SigLIP+DINOv2+SAM &RWKV7-0.4B & 77.9 & 55.0 & 41.1 & 62.3\\
VisualRWKV-7 & SigLIP+DINOv2+SAM &RWKV7-1.5B & 79.8& 59.7 & 49.5 & 63.2\\
VisualRWKV-7 & SigLIP+DINOv2+SAM &RWKV7-2.9B & \textbf{80.5} & \textbf{63.4} & \textbf{58.0} & \textbf{63.7}\\
\bottomrule
\end{tabular}
}
\caption{A comparison of VisualRWKV-7 to other Visual Language Models across 4 distinct benchmarks. We evaluate these models on benchmarks: GQA\citep{Hudson2019GQAAN}, SQA\citep{Lu2022LearnTE}, TQA\citep{singh2019vqa} and VQA\citep{Li2023EvaluatingOH}.}
\label{tab:visualrwkv7_results}
\end{table*}

The experimental results of VisualRWKV-7 are shown in Table~\ref{tab:visualrwkv7_results}. The vision encoders used in both VisualRWKV-7 and VisualRWKV-6 are identical, and the training data remains consistent, aligned with the training data of LLaVA-1.5. The first stage consists of 558k alignment data, while the second stage includes 665k SFT data.

VisualRWKV-7 0.1B and 0.4B outperform VisualRWKV-6 1.6B on the in-domain benchmarks VQAv2 and GQA and rapidly approach VisualRWKV-6 1.6B on two other benchmarks. The experimental results are highly compelling. With only 1/4 of the parameters (1.6B vs. 0.4B), VisualRWKV-7 surpasses VisualRWKV-6 on the VQAv2 and GQA benchmarks, demonstrating the powerful modeling capabilities of RWKV-7.

On the out-of-domain benchmark SQA, VisualRWKV-7 2.9B also outperforms VisualRWKV-6 3.1B, indicating that VisualRWKV-7 possesses strong generalization ability. In the TextQA (TQA) benchmark, which assesses a model’s associative recall, VisualRWKV-7 2.9B achieves a 5.3-point improvement over VisualRWKV-6 3.1B, further proving its superior associative recall capabilities.

\paragraph{RWKV for Audio Modeling}
\label{sec:audio-modeling}
To investigate the effectiveness of RWKV-7 for audio modeling, we introduce AudioRWKV-7, a novel adaptation of RWKV-7 for audio embedding analysis. We use a bi-directional modification to RWKV-7, similar to the modification used by \citet{duan2024visionrwkv}. We employ this approach to interpret and process complex, high-dimensional spectrogram features. To further capture acoustic and temporal characteristics, an mel-spectrogram is divided into patch tokens using a Patch-Embed CNN with a kernel size of $(P \times P)$ and sequentially fed into the model. The width and height of an audio mel-spectrogram represent the time and frequency bins, respectively. Typically, the time dimension is significantly longer than the frequency dimension. To effectively capture relationships among frequency bins within the same time frame, the mel-spectrogram is first segmented into patch windows $w_1, w_2, ..., w_n$, followed by further division of patches within each window. The token sequence follows the order: \textit{time} $\rightarrow$ \textit{frequency} $\rightarrow$ \textit{window}. This arrangement ensures that patches corresponding to different frequency bins within the same time frame are positioned adjacent to each other in the input sequence.

We evaluated the performance of AudioRWKV-7 across multiple model scales and architectures, using the AudioSet dataset \citep{gemmeke2017audio}. Detailed experimental outcomes are presented in Table \ref{tab:audiorwkv_results}. 
From the results we find that our model achieves comparable performance with a much smaller parameter count when compared with CNN, Transformer and Mamba based architectures, and exceeds the performance of AudioRWKV-6. These findings demonstrate the robustness and versatility of AudioRWKV. Note that to ensure a fair comparison, we retrained AudioRWKV-6 without ensembling models with different patch settings, which accounts for the difference in results from \citep{peng2024eaglefinchrwkvmatrixvalued}.  

\begin{table*}[!h]
 \centering
 \begin{tabular}{cccc}

\hline
Model                                                       & \#Parameters & Architecture & mAP $\uparrow$   \\ \hline
DeepRes \citep{ford2019deep}          & 26M          & CNN          & 0.392  \\
HST-AT                                                      & 88.5M        & Transformer  & 0.433* \\
HST-AT pretrained\citep{chen2022hts} & 88.5M        & Transformer  & 0.429* \\
MambaOut\citep{yu2024mambaout}       & 101.3M       & Mamba        & 0.397  \\ 
AudioRWKV-6                         & 8.9M         &RWKV6        & 0.381 \\
AudioRWKV-6                         & 19.8M         &RWKV6        & 0.426   \\ \hline
AudioRWKV-7                          & 8.9M         & RWKV7        & 0.392  \\
AudioRWKV-7                        & 19.8M        & RWKV7        & 0.431  \\ \hline
\end{tabular}
 \caption{A comparison of mean Average Precision (mAP) among AudioRWKV7 and other baselines on AudioSet dataset. HST-AT pretrained is a variation that uses vision transformer to initialize the weights. *Results reproduced by ourselves. }
\label{tab:audiorwkv_results}
\end{table*}

\section{Conclusions} \label{conclusions}
We introduced RWKV-7, a novel RNN architecture that pushes the boundaries of recurrent neural networks to new heights. RWKV-7 achieves state-of-the-art performance for its size across a wide range of benchmarks, demonstrating its potential to rival even highly optimized models such as Qwen2.5 despite being trained on many fewer tokens. As an RNN, RWKV-7 maintains high parameter efficiency, linear time complexity, and constant memory usage, offering a compelling alternative to traditional Transformer-based architectures.

\subsection{Limitations}
Despite its strengths, the RWKV-7 architecture and models face certain limitations yet to be mitigated in future work.

\paragraph{Numerical Precision.} We observed that some operators, particularly the WKV7 kernel, are sensitive to the numerical precision of the implementation. This highlights the need for careful handling of numerical precision during model deployment. We also observed differences in training dynamics when using different kernels, which implies that the correct handling of precision while calculating and applying state updates is of utmost importance in this architecture.

\paragraph{Lack of Instruction Tuning and Alignment.} All RWKV-7 models presented in this work are pretrained base models and have not undergone the phase of Supervised Fine-Tuning (SFT) for instruction following nor alignment with human preferences (RLHF). Future efforts should focus on incorporating these capabilities to enhance the model's usability in real-world applications.

\paragraph{Prompt Sensitivity.} We found that the absence of the special token \texttt{<|endoftext|>} results in degraded performance of RWKV-7 models, e.g. inability to remember the first token of the input. See Appendix \ref{sec:initial_sensitivity} for details.

\paragraph{Compute Resources.} Due to computational budget constraints, our training was limited on at most $12\times 8=96$ Nvidia H800 GPUs. This falls short of the resources required for recent large-scale training efforts, such as DeepSeek-V3 \citep{deepseekai2025deepseekv3technicalreport}. Additionally, we are forced to continue training from pre-existing checkpoints of earlier RWKV architectures and therefore re-use some parts of our dataset. This may limit the capabilities of our models versus pre-training from scratch. Scaling up RWKV-7 to larger sizes and datasets will require additional computational resources.

\subsection{Future Work}
In addition to training larger RWKV-7 models with more tokens in the future, we also aim to explore several promising directions to further enhance the architecture and its capabilities.

\paragraph{Speedup Techniques.} A variety of speed optimization techniques were highlighted in the technical report of DeepSeek-V3 \citep{deepseekai2025deepseekv3technicalreport}, including Dual Pipelining Mechanism, Mixture-of-Experts, Multi-Token Prediction, and FP8 Training. We are aware that many of these techniques are orthogonal to RWKV-7's architectural optimizations, therefore could be integrated to further accelerate training in later RWKV models. However, RWKV-7, like its predecessors, has been trained completely without pipeline parallelism. We also noticed that there is room for speed optimization of RWKV-7 kernels and operators. We will explore both kernel-level optimizations and distributed training strategies in the future.

\paragraph{Incorporating Chain-of-Thought Reasoning.} We believe that RWKV-7, as a linear RNN, is well-suited for efficient Chain-of-Thought reasoning \citep{wei2022chain}. However, this capability has been barely explored due to the lack of suitable reinforcement learning pipelines. In future work, we plan to incorporate deep thinking abilities into RWKV-7, enabling it to excel in tasks requiring multi-step logical reasoning and complex problem-solving.

\subsubsection*{Acknowledgments}
We extend our gratitude to Shenzhen Yuanshi Intelligent Co. Ltd. and Shanghai Yuanwo Intelligent Co. Ltd. for providing computational resources and their dedication to promoting and commercializing RWKV. We thank Featherless AI for their extensive experimentation with the RWKV architecture and their contributions to this paper. We are grateful to the members of the RWKV and EleutherAI Discord communities for their collaborative efforts in extending the applicability of RWKV to diverse domains. We extend a special thank you to Stella Biderman, Songlin Yang and Yu Zhang.


\newpage

\bibliography{main}

\begin{thebibliography}{99}
\providecommand{\natexlab}[1]{#1}
\providecommand{\url}[1]{\texttt{#1}}
\expandafter\ifx\csname urlstyle\endcsname\relax
  \providecommand{\doi}[1]{doi: #1}\else
  \providecommand{\doi}{doi: \begingroup \urlstyle{rm}\Url}\fi

\bibitem[Allal et~al.(2025)Allal, Lozhkov, Bakouch, Blázquez, Penedo, Tunstall, Marafioti, Kydlíček, Lajarín, Srivastav, Lochner, Fahlgren, Nguyen, Fourrier, Burtenshaw, Larcher, Zhao, Zakka, Morlon, Raffel, von Werra, and Wolf]{allal2025smollm2smolgoesbig}
Loubna~Ben Allal, Anton Lozhkov, Elie Bakouch, Gabriel~Martín Blázquez, Guilherme Penedo, Lewis Tunstall, Andrés Marafioti, Hynek Kydlíček, Agustín~Piqueres Lajarín, Vaibhav Srivastav, Joshua Lochner, Caleb Fahlgren, Xuan-Son Nguyen, Clémentine Fourrier, Ben Burtenshaw, Hugo Larcher, Haojun Zhao, Cyril Zakka, Mathieu Morlon, Colin Raffel, Leandro von Werra, and Thomas Wolf.
\newblock Smollm2: When smol goes big -- data-centric training of a small language model, 2025.
\newblock URL \url{https://arxiv.org/abs/2502.02737}.

\bibitem[Arora et~al.(2023)Arora, Eyuboglu, Timalsina, Johnson, Poli, Zou, Rudra, and Re]{arora2023zoology}
Simran Arora, Sabri Eyuboglu, Aman Timalsina, Isys Johnson, Michael Poli, James Zou, Atri Rudra, and Christopher Re.
\newblock Zoology: Measuring and improving recall in efficient language models, 2023.

\bibitem[Azerbayev et~al.(2024)Azerbayev, Schoelkopf, Paster, Santos, McAleer, Jiang, Deng, Biderman, and Welleck]{azerbayev2024llemma}
Zhangir Azerbayev, Hailey Schoelkopf, Keiran Paster, Marco~Dos Santos, Stephen~Marcus McAleer, Albert~Q. Jiang, Jia Deng, Stella Biderman, and Sean Welleck.
\newblock Llemma: An open language model for mathematics.
\newblock In \emph{The Twelfth International Conference on Learning Representations}, 2024.
\newblock URL \url{https://openreview.net/forum?id=4WnqRR915j}.

\bibitem[Barrington(1989)]{barrington1989bounded}
David~A. Barrington.
\newblock Bounded-width polynomial-size branching programs recognize exactly those languages in nc1.
\newblock \emph{Journal of Computer and System Sciences}, 38\penalty0 (1):\penalty0 150--164, 1989.
\newblock URL \url{https://www.sciencedirect.com/science/article/pii/0022000089900378}.

\bibitem[Behrouz et~al.(2024)Behrouz, Zhong, and Mirrokni]{behrouz2024titans}
Ali Behrouz, Peilin Zhong, and Vahab Mirrokni.
\newblock Titans: Learning to memorize at test time, 2024.

\bibitem[Ben~Allal et~al.(2024{\natexlab{a}})Ben~Allal, Lozhkov, Penedo, Wolf, and von Werra]{benallal2024cosmopedia}
Loubna Ben~Allal, Anton Lozhkov, Guilherme Penedo, Thomas Wolf, and Leandro von Werra.
\newblock Cosmopedia, February 2024{\natexlab{a}}.
\newblock URL \url{https://huggingface.co/datasets/HuggingFaceTB/cosmopedia}.

\bibitem[Ben~Allal et~al.(2024{\natexlab{b}})Ben~Allal, Lozhkov, Penedo, Wolf, and von Werra]{benallal2024smollmcorpus}
Loubna Ben~Allal, Anton Lozhkov, Guilherme Penedo, Thomas Wolf, and Leandro von Werra.
\newblock Smollm-corpus, July 2024{\natexlab{b}}.
\newblock URL \url{https://huggingface.co/datasets/HuggingFaceTB/smollm-corpus}.

\bibitem[Bhakthavatsalam et~al.(2021)Bhakthavatsalam, Khashabi, Khot, Mishra, Richardson, Sabharwal, Schoenick, Tafjord, and Clark]{bhakthavatsalam2021think}
Sumithra Bhakthavatsalam, Daniel Khashabi, Tushar Khot, Bhavana~Dalvi Mishra, Kyle Richardson, Ashish Sabharwal, Carissa Schoenick, Oyvind Tafjord, and Peter Clark.
\newblock Think you have solved direct-answer question answering? try arc-da, the direct-answer ai2 reasoning challenge.
\newblock \emph{arXiv preprint arXiv:2102.03315}, 2021.

\bibitem[Bisk et~al.(2020)Bisk, Zellers, Gao, Choi, et~al.]{bisk2020piqa}
Yonatan Bisk, Rowan Zellers, Jianfeng Gao, Yejin Choi, et~al.
\newblock Piqa: Reasoning about physical commonsense in natural language.
\newblock In \emph{Proceedings of the AAAI conference on artificial intelligence}, volume~34, pp.\  7432--7439, 2020.

\bibitem[Black et~al.(2022)Black, Biderman, Hallahan, Anthony, Gao, Golding, He, Leahy, McDonell, Phang, Pieler, Prashanth, Purohit, Reynolds, Tow, Wang, and Weinbach]{black-etal-2022-gpt}
Sidney Black, Stella Biderman, Eric Hallahan, Quentin Anthony, Leo Gao, Laurence Golding, Horace He, Connor Leahy, Kyle McDonell, Jason Phang, Michael Pieler, Usvsn~Sai Prashanth, Shivanshu Purohit, Laria Reynolds, Jonathan Tow, Ben Wang, and Samuel Weinbach.
\newblock {GPT}-{N}eo{X}-20{B}: An open-source autoregressive language model.
\newblock In Angela Fan, Suzana Ilic, Thomas Wolf, and Matthias Gall{\'e} (eds.), \emph{Proceedings of BigScience Episode {\#}5 -- Workshop on Challenges {\&} Perspectives in Creating Large Language Models}, pp.\  95--136, virtual+Dublin, May 2022. Association for Computational Linguistics.
\newblock \doi{10.18653/v1/2022.bigscience-1.9}.
\newblock URL \url{https://aclanthology.org/2022.bigscience-1.9}.

\bibitem[Chen et~al.(2022)Chen, Du, Zhu, Ma, Berg-Kirkpatrick, and Dubnov]{chen2022hts}
Ke~Chen, Xingjian Du, Bilei Zhu, Zejun Ma, Taylor Berg-Kirkpatrick, and Shlomo Dubnov.
\newblock Hts-at: A hierarchical token-semantic audio transformer for sound classification and detection.
\newblock In \emph{ICASSP 2022-2022 IEEE International Conference on Acoustics, Speech and Signal Processing (ICASSP)}, pp.\  646--650. IEEE, 2022.

\bibitem[Chen et~al.(2024{\natexlab{a}})Chen, Zhang, Hu, Han, Liu, and Sun]{chen2024stuffedmambastatecollapse}
Yingfa Chen, Xinrong Zhang, Shengding Hu, Xu~Han, Zhiyuan Liu, and Maosong Sun.
\newblock Stuffed mamba: State collapse and state capacity of rnn-based long-context modeling, 2024{\natexlab{a}}.
\newblock URL \url{https://arxiv.org/abs/2410.07145}.

\bibitem[Chen et~al.(2024{\natexlab{b}})Chen, Qian, Tang, Lai, Liu, Han, and Jia]{chen2024longloraefficientfinetuninglongcontext}
Yukang Chen, Shengju Qian, Haotian Tang, Xin Lai, Zhijian Liu, Song Han, and Jiaya Jia.
\newblock Longlora: Efficient fine-tuning of long-context large language models, 2024{\natexlab{b}}.
\newblock URL \url{https://arxiv.org/abs/2309.12307}.

\bibitem[Conneau et~al.(2018)Conneau, Rinott, Lample, Williams, Bowman, Schwenk, and Stoyanov]{conneau2018xnli}
Alexis Conneau, Ruty Rinott, Guillaume Lample, Adina Williams, Samuel Bowman, Holger Schwenk, and Veselin Stoyanov.
\newblock Xnli: Evaluating cross-lingual sentence representations.
\newblock In \emph{Proceedings of the 2018 Conference on Empirical Methods in Natural Language Processing}, pp.\  2475--2485, 2018.

\bibitem[Dao \& Gu(2024)Dao and Gu]{dao2024transformersssmsgeneralizedmodels}
Tri Dao and Albert Gu.
\newblock Transformers are ssms: Generalized models and efficient algorithms through structured state space duality, 2024.
\newblock URL \url{https://arxiv.org/abs/2405.21060}.

\bibitem[{Dao AI Lab}(2023)]{causal-conv1d}
{Dao AI Lab}.
\newblock Causal conv1d.
\newblock \url{https://github.com/Dao-AILab/causal-conv1d}, 2023.
\newblock Accessed: 2025-02-26.

\bibitem[DeepSeek-AI et~al.(2025)DeepSeek-AI, Liu, Feng, Xue, Wang, Wu, Lu, Zhao, Deng, Zhang, Ruan, Dai, Guo, Yang, Chen, Ji, Li, Lin, Dai, Luo, Hao, Chen, Li, Zhang, Bao, Xu, Wang, Zhang, Ding, Xin, Gao, Li, Qu, Cai, Liang, Guo, Ni, Li, Wang, Chen, Chen, Yuan, Qiu, Li, Song, Dong, Hu, Gao, Guan, Huang, Yu, Wang, Zhang, Xu, Xia, Zhao, Wang, Zhang, Li, Wang, Zhang, Zhang, Tang, Li, Tian, Huang, Wang, Zhang, Wang, Zhu, Chen, Du, Chen, Jin, Ge, Zhang, Pan, Wang, Xu, Zhang, Chen, Li, Lu, Zhou, Chen, Wu, Ye, Ye, Ma, Wang, Zhou, Yu, Zhou, Pan, Wang, Yun, Pei, Sun, Xiao, Zeng, Zhao, An, Liu, Liang, Gao, Yu, Zhang, Li, Jin, Wang, Bi, Liu, Wang, Shen, Chen, Zhang, Chen, Nie, Sun, Wang, Cheng, Liu, Xie, Liu, Yu, Song, Shan, Zhou, Yang, Li, Su, Lin, Li, Wang, Wei, Zhu, Zhang, Xu, Xu, Huang, Li, Zhao, Sun, Li, Wang, Yu, Zheng, Zhang, Shi, Xiong, He, Tang, Piao, Wang, Tan, Ma, Liu, Guo, Wu, Ou, Zhu, Wang, Gong, Zou, He, Zha, Xiong, Ma, Yan, Luo, You, Liu, Zhou, Wu, Ren, Ren, Sha, Fu, Xu, Huang, Zhang, Xie, Zhang, Hao,
  Gou, Ma, Yan, Shao, Xu, Wu, Zhang, Li, Gu, Zhu, Liu, Li, Xie, Song, Gao, and Pan]{deepseekai2025deepseekv3technicalreport}
DeepSeek-AI, Aixin Liu, Bei Feng, Bing Xue, Bingxuan Wang, Bochao Wu, Chengda Lu, Chenggang Zhao, Chengqi Deng, Chenyu Zhang, Chong Ruan, Damai Dai, Daya Guo, Dejian Yang, Deli Chen, Dongjie Ji, Erhang Li, Fangyun Lin, Fucong Dai, Fuli Luo, Guangbo Hao, Guanting Chen, Guowei Li, H.~Zhang, Han Bao, Hanwei Xu, Haocheng Wang, Haowei Zhang, Honghui Ding, Huajian Xin, Huazuo Gao, Hui Li, Hui Qu, J.~L. Cai, Jian Liang, Jianzhong Guo, Jiaqi Ni, Jiashi Li, Jiawei Wang, Jin Chen, Jingchang Chen, Jingyang Yuan, Junjie Qiu, Junlong Li, Junxiao Song, Kai Dong, Kai Hu, Kaige Gao, Kang Guan, Kexin Huang, Kuai Yu, Lean Wang, Lecong Zhang, Lei Xu, Leyi Xia, Liang Zhao, Litong Wang, Liyue Zhang, Meng Li, Miaojun Wang, Mingchuan Zhang, Minghua Zhang, Minghui Tang, Mingming Li, Ning Tian, Panpan Huang, Peiyi Wang, Peng Zhang, Qiancheng Wang, Qihao Zhu, Qinyu Chen, Qiushi Du, R.~J. Chen, R.~L. Jin, Ruiqi Ge, Ruisong Zhang, Ruizhe Pan, Runji Wang, Runxin Xu, Ruoyu Zhang, Ruyi Chen, S.~S. Li, Shanghao Lu, Shangyan Zhou, Shanhuang
  Chen, Shaoqing Wu, Shengfeng Ye, Shengfeng Ye, Shirong Ma, Shiyu Wang, Shuang Zhou, Shuiping Yu, Shunfeng Zhou, Shuting Pan, T.~Wang, Tao Yun, Tian Pei, Tianyu Sun, W.~L. Xiao, Wangding Zeng, Wanjia Zhao, Wei An, Wen Liu, Wenfeng Liang, Wenjun Gao, Wenqin Yu, Wentao Zhang, X.~Q. Li, Xiangyue Jin, Xianzu Wang, Xiao Bi, Xiaodong Liu, Xiaohan Wang, Xiaojin Shen, Xiaokang Chen, Xiaokang Zhang, Xiaosha Chen, Xiaotao Nie, Xiaowen Sun, Xiaoxiang Wang, Xin Cheng, Xin Liu, Xin Xie, Xingchao Liu, Xingkai Yu, Xinnan Song, Xinxia Shan, Xinyi Zhou, Xinyu Yang, Xinyuan Li, Xuecheng Su, Xuheng Lin, Y.~K. Li, Y.~Q. Wang, Y.~X. Wei, Y.~X. Zhu, Yang Zhang, Yanhong Xu, Yanhong Xu, Yanping Huang, Yao Li, Yao Zhao, Yaofeng Sun, Yaohui Li, Yaohui Wang, Yi~Yu, Yi~Zheng, Yichao Zhang, Yifan Shi, Yiliang Xiong, Ying He, Ying Tang, Yishi Piao, Yisong Wang, Yixuan Tan, Yiyang Ma, Yiyuan Liu, Yongqiang Guo, Yu~Wu, Yuan Ou, Yuchen Zhu, Yuduan Wang, Yue Gong, Yuheng Zou, Yujia He, Yukun Zha, Yunfan Xiong, Yunxian Ma, Yuting Yan, Yuxiang
  Luo, Yuxiang You, Yuxuan Liu, Yuyang Zhou, Z.~F. Wu, Z.~Z. Ren, Zehui Ren, Zhangli Sha, Zhe Fu, Zhean Xu, Zhen Huang, Zhen Zhang, Zhenda Xie, Zhengyan Zhang, Zhewen Hao, Zhibin Gou, Zhicheng Ma, Zhigang Yan, Zhihong Shao, Zhipeng Xu, Zhiyu Wu, Zhongyu Zhang, Zhuoshu Li, Zihui Gu, Zijia Zhu, Zijun Liu, Zilin Li, Ziwei Xie, Ziyang Song, Ziyi Gao, and Zizheng Pan.
\newblock Deepseek-v3 technical report, 2025.
\newblock URL \url{https://arxiv.org/abs/2412.19437}.

\bibitem[Delétang et~al.(2024)Delétang, Ruoss, Duquenne, Catt, Genewein, Mattern, Grau-Moya, Wenliang, Aitchison, Orseau, Hutter, and Veness]{delétang2024languagemodelingcompression}
Grégoire Delétang, Anian Ruoss, Paul-Ambroise Duquenne, Elliot Catt, Tim Genewein, Christopher Mattern, Jordi Grau-Moya, Li~Kevin Wenliang, Matthew Aitchison, Laurent Orseau, Marcus Hutter, and Joel Veness.
\newblock Language modeling is compression, 2024.
\newblock URL \url{https://arxiv.org/abs/2309.10668}.

\bibitem[Duan et~al.(2024)Duan, Wang, Chen, Zhu, Lu, Lu, Qiao, Li, Dai, and Wang]{duan2024visionrwkv}
Yuchen Duan, Weiyun Wang, Zhe Chen, Xizhou Zhu, Lewei Lu, Tong Lu, Yu~Qiao, Hongsheng Li, Jifeng Dai, and Wenhai Wang.
\newblock Vision-rwkv: Efficient and scalable visual perception with rwkv-like architectures.
\newblock \emph{arXiv preprint arXiv:2403.02308}, 2024.

\bibitem[Elhage et~al.(2021)Elhage, Nanda, Olsson, Henighan, Joseph, Mann, Askell, Bai, Chen, Conerly, DasSarma, Drain, Ganguli, Hatfield-Dodds, Hernandez, Jones, Kernion, Lovitt, Ndousse, Amodei, Brown, Clark, Kaplan, McCandlish, and Olah]{elhage2021mathematical}
Nelson Elhage, Neel Nanda, Catherine Olsson, Tom Henighan, Nicholas Joseph, Ben Mann, Amanda Askell, Yuntao Bai, Anna Chen, Tom Conerly, Nova DasSarma, Dawn Drain, Deep Ganguli, Zac Hatfield-Dodds, Danny Hernandez, Andy Jones, Jackson Kernion, Liane Lovitt, Kamal Ndousse, Dario Amodei, Tom Brown, Jack Clark, Jared Kaplan, Sam McCandlish, and Chris Olah.
\newblock A mathematical framework for transformer circuits.
\newblock \emph{Transformer Circuits Thread}, 2021.
\newblock https://transformer-circuits.pub/2021/framework/index.html.

\bibitem[Fan et~al.(2025)Fan, Huang, and He]{fan2025breakinglowrankdilemmalinear}
Qihang Fan, Huaibo Huang, and Ran He.
\newblock Breaking the low-rank dilemma of linear attention, 2025.
\newblock URL \url{https://arxiv.org/abs/2411.07635}.

\bibitem[Ford et~al.(2019)Ford, Tang, Grondin, and Glass]{ford2019deep}
Logan Ford, Hao Tang, Fran{\c{c}}ois Grondin, and James~R Glass.
\newblock A deep residual network for large-scale acoustic scene analysis.
\newblock In \emph{InterSpeech}, pp.\  2568--2572, 2019.

\bibitem[Fu et~al.(2023)Fu, Dao, Saab, Thomas, Rudra, and Re]{fu2023hungry}
Daniel~Y. Fu, Tri Dao, Khaled~K. Saab, Armin~W. Thomas, Atri Rudra, and Christopher Re.
\newblock Hungry hungry hippos: Towards language modeling with state space models, 2023.

\bibitem[Gao et~al.(2020)Gao, Biderman, Black, Golding, Hoppe, Foster, Phang, He, Thite, Nabeshima, Presser, and Leahy]{gao2020pile}
Leo Gao, Stella Biderman, Sid Black, Laurence Golding, Travis Hoppe, Charles Foster, Jason Phang, Horace He, Anish Thite, Noa Nabeshima, Shawn Presser, and Connor Leahy.
\newblock The pile: An 800gb dataset of diverse text for language modeling, 2020.

\bibitem[Gao et~al.(2023)Gao, Tow, Abbasi, Biderman, Black, DiPofi, Foster, Golding, Hsu, Le~Noac'h, Li, McDonell, Muennighoff, Ociepa, Phang, Reynolds, Schoelkopf, Skowron, Sutawika, Tang, Thite, Wang, Wang, and Zou]{gao10256836framework}
Leo Gao, Jonathan Tow, Baber Abbasi, Stella Biderman, Sid Black, Anthony DiPofi, Charles Foster, Laurence Golding, Jeffrey Hsu, Alain Le~Noac'h, Haonan Li, Kyle McDonell, Niklas Muennighoff, Chris Ociepa, Jason Phang, Laria Reynolds, Hailey Schoelkopf, Aviya Skowron, Lintang Sutawika, Eric Tang, Anish Thite, Ben Wang, Kevin Wang, and Andy Zou.
\newblock A framework for few-shot language model evaluation, 12 2023.
\newblock URL \url{https://zenodo.org/records/10256836}.

\bibitem[Gemmeke et~al.(2017)Gemmeke, Ellis, Freedman, Jansen, Lawrence, Moore, Plakal, and Ritter]{gemmeke2017audio}
Jort~F Gemmeke, Daniel~PW Ellis, Dylan Freedman, Aren Jansen, Wade Lawrence, R~Channing Moore, Manoj Plakal, and Marvin Ritter.
\newblock Audio set: An ontology and human-labeled dataset for audio events.
\newblock In \emph{2017 IEEE international conference on acoustics, speech and signal processing (ICASSP)}, pp.\  776--780. IEEE, 2017.

\bibitem[Grattafiori et~al.(2024)Grattafiori, Dubey, Jauhri, Pandey, Kadian, Al-Dahle, Letman, Mathur, Schelten, Vaughan, Yang, Fan, Goyal, Hartshorn, Yang, Mitra, Sravankumar, Korenev, Hinsvark, Rao, Zhang, Rodriguez, Gregerson, Spataru, Roziere, Biron, Tang, Chern, Caucheteux, Nayak, Bi, Marra, McConnell, Keller, Touret, Wu, Wong, Ferrer, Nikolaidis, Allonsius, Song, Pintz, Livshits, Wyatt, Esiobu, Choudhary, Mahajan, Garcia-Olano, Perino, Hupkes, Lakomkin, AlBadawy, Lobanova, Dinan, Smith, Radenovic, Guzmán, Zhang, Synnaeve, Lee, Anderson, Thattai, Nail, Mialon, Pang, Cucurell, Nguyen, Korevaar, Xu, Touvron, Zarov, Ibarra, Kloumann, Misra, Evtimov, Zhang, Copet, Lee, Geffert, Vranes, Park, Mahadeokar, Shah, van~der Linde, Billock, Hong, Lee, Fu, Chi, Huang, Liu, Wang, Yu, Bitton, Spisak, Park, Rocca, Johnstun, Saxe, Jia, Alwala, Prasad, Upasani, Plawiak, Li, Heafield, Stone, El-Arini, Iyer, Malik, Chiu, Bhalla, Lakhotia, Rantala-Yeary, van~der Maaten, Chen, Tan, Jenkins, Martin, Madaan, Malo, Blecher,
  Landzaat, de~Oliveira, Muzzi, Pasupuleti, Singh, Paluri, Kardas, Tsimpoukelli, Oldham, Rita, Pavlova, Kambadur, Lewis, Si, Singh, Hassan, Goyal, Torabi, Bashlykov, Bogoychev, Chatterji, Zhang, Duchenne, Çelebi, Alrassy, Zhang, Li, Vasic, Weng, Bhargava, Dubal, Krishnan, Koura, Xu, He, Dong, Srinivasan, Ganapathy, Calderer, Cabral, Stojnic, Raileanu, Maheswari, Girdhar, Patel, Sauvestre, Polidoro, Sumbaly, Taylor, Silva, Hou, Wang, Hosseini, Chennabasappa, Singh, Bell, Kim, Edunov, Nie, Narang, Raparthy, Shen, Wan, Bhosale, Zhang, Vandenhende, Batra, Whitman, Sootla, Collot, Gururangan, Borodinsky, Herman, Fowler, Sheasha, Georgiou, Scialom, Speckbacher, Mihaylov, Xiao, Karn, Goswami, Gupta, Ramanathan, Kerkez, Gonguet, Do, Vogeti, Albiero, Petrovic, Chu, Xiong, Fu, Meers, Martinet, Wang, Wang, Tan, Xia, Xie, Jia, Wang, Goldschlag, Gaur, Babaei, Wen, Song, Zhang, Li, Mao, Coudert, Yan, Chen, Papakipos, Singh, Srivastava, Jain, Kelsey, Shajnfeld, Gangidi, Victoria, Goldstand, Menon, Sharma, Boesenberg,
  Baevski, Feinstein, Kallet, Sangani, Teo, Yunus, Lupu, Alvarado, Caples, Gu, Ho, Poulton, Ryan, Ramchandani, Dong, Franco, Goyal, Saraf, Chowdhury, Gabriel, Bharambe, Eisenman, Yazdan, James, Maurer, Leonhardi, Huang, Loyd, Paola, Paranjape, Liu, Wu, Ni, Hancock, Wasti, Spence, Stojkovic, Gamido, Montalvo, Parker, Burton, Mejia, Liu, Wang, Kim, Zhou, Hu, Chu, Cai, Tindal, Feichtenhofer, Gao, Civin, Beaty, Kreymer, Li, Adkins, Xu, Testuggine, David, Parikh, Liskovich, Foss, Wang, Le, Holland, Dowling, Jamil, Montgomery, Presani, Hahn, Wood, Le, Brinkman, Arcaute, Dunbar, Smothers, Sun, Kreuk, Tian, Kokkinos, Ozgenel, Caggioni, Kanayet, Seide, Florez, Schwarz, Badeer, Swee, Halpern, Herman, Sizov, Guangyi, Zhang, Lakshminarayanan, Inan, Shojanazeri, Zou, Wang, Zha, Habeeb, Rudolph, Suk, Aspegren, Goldman, Zhan, Damlaj, Molybog, Tufanov, Leontiadis, Veliche, Gat, Weissman, Geboski, Kohli, Lam, Asher, Gaya, Marcus, Tang, Chan, Zhen, Reizenstein, Teboul, Zhong, Jin, Yang, Cummings, Carvill, Shepard, McPhie,
  Torres, Ginsburg, Wang, Wu, U, Saxena, Khandelwal, Zand, Matosich, Veeraraghavan, Michelena, Li, Jagadeesh, Huang, Chawla, Huang, Chen, Garg, A, Silva, Bell, Zhang, Guo, Yu, Moshkovich, Wehrstedt, Khabsa, Avalani, Bhatt, Mankus, Hasson, Lennie, Reso, Groshev, Naumov, Lathi, Keneally, Liu, Seltzer, Valko, Restrepo, Patel, Vyatskov, Samvelyan, Clark, Macey, Wang, Hermoso, Metanat, Rastegari, Bansal, Santhanam, Parks, White, Bawa, Singhal, Egebo, Usunier, Mehta, Laptev, Dong, Cheng, Chernoguz, Hart, Salpekar, Kalinli, Kent, Parekh, Saab, Balaji, Rittner, Bontrager, Roux, Dollar, Zvyagina, Ratanchandani, Yuvraj, Liang, Alao, Rodriguez, Ayub, Murthy, Nayani, Mitra, Parthasarathy, Li, Hogan, Battey, Wang, Howes, Rinott, Mehta, Siby, Bondu, Datta, Chugh, Hunt, Dhillon, Sidorov, Pan, Mahajan, Verma, Yamamoto, Ramaswamy, Lindsay, Lindsay, Feng, Lin, Zha, Patil, Shankar, Zhang, Zhang, Wang, Agarwal, Sajuyigbe, Chintala, Max, Chen, Kehoe, Satterfield, Govindaprasad, Gupta, Deng, Cho, Virk, Subramanian, Choudhury,
  Goldman, Remez, Glaser, Best, Koehler, Robinson, Li, Zhang, Matthews, Chou, Shaked, Vontimitta, Ajayi, Montanez, Mohan, Kumar, Mangla, Ionescu, Poenaru, Mihailescu, Ivanov, Li, Wang, Jiang, Bouaziz, Constable, Tang, Wu, Wang, Wu, Gao, Kleinman, Chen, Hu, Jia, Qi, Li, Zhang, Zhang, Adi, Nam, Yu, Wang, Zhao, Hao, Qian, Li, He, Rait, DeVito, Rosnbrick, Wen, Yang, Zhao, and Ma]{grattafiori2024llama3herdmodels}
Aaron Grattafiori, Abhimanyu Dubey, Abhinav Jauhri, Abhinav Pandey, Abhishek Kadian, Ahmad Al-Dahle, Aiesha Letman, Akhil Mathur, Alan Schelten, Alex Vaughan, Amy Yang, Angela Fan, Anirudh Goyal, Anthony Hartshorn, Aobo Yang, Archi Mitra, Archie Sravankumar, Artem Korenev, Arthur Hinsvark, Arun Rao, Aston Zhang, Aurelien Rodriguez, Austen Gregerson, Ava Spataru, Baptiste Roziere, Bethany Biron, Binh Tang, Bobbie Chern, Charlotte Caucheteux, Chaya Nayak, Chloe Bi, Chris Marra, Chris McConnell, Christian Keller, Christophe Touret, Chunyang Wu, Corinne Wong, Cristian~Canton Ferrer, Cyrus Nikolaidis, Damien Allonsius, Daniel Song, Danielle Pintz, Danny Livshits, Danny Wyatt, David Esiobu, Dhruv Choudhary, Dhruv Mahajan, Diego Garcia-Olano, Diego Perino, Dieuwke Hupkes, Egor Lakomkin, Ehab AlBadawy, Elina Lobanova, Emily Dinan, Eric~Michael Smith, Filip Radenovic, Francisco Guzmán, Frank Zhang, Gabriel Synnaeve, Gabrielle Lee, Georgia~Lewis Anderson, Govind Thattai, Graeme Nail, Gregoire Mialon, Guan Pang,
  Guillem Cucurell, Hailey Nguyen, Hannah Korevaar, Hu~Xu, Hugo Touvron, Iliyan Zarov, Imanol~Arrieta Ibarra, Isabel Kloumann, Ishan Misra, Ivan Evtimov, Jack Zhang, Jade Copet, Jaewon Lee, Jan Geffert, Jana Vranes, Jason Park, Jay Mahadeokar, Jeet Shah, Jelmer van~der Linde, Jennifer Billock, Jenny Hong, Jenya Lee, Jeremy Fu, Jianfeng Chi, Jianyu Huang, Jiawen Liu, Jie Wang, Jiecao Yu, Joanna Bitton, Joe Spisak, Jongsoo Park, Joseph Rocca, Joshua Johnstun, Joshua Saxe, Junteng Jia, Kalyan~Vasuden Alwala, Karthik Prasad, Kartikeya Upasani, Kate Plawiak, Ke~Li, Kenneth Heafield, Kevin Stone, Khalid El-Arini, Krithika Iyer, Kshitiz Malik, Kuenley Chiu, Kunal Bhalla, Kushal Lakhotia, Lauren Rantala-Yeary, Laurens van~der Maaten, Lawrence Chen, Liang Tan, Liz Jenkins, Louis Martin, Lovish Madaan, Lubo Malo, Lukas Blecher, Lukas Landzaat, Luke de~Oliveira, Madeline Muzzi, Mahesh Pasupuleti, Mannat Singh, Manohar Paluri, Marcin Kardas, Maria Tsimpoukelli, Mathew Oldham, Mathieu Rita, Maya Pavlova, Melanie Kambadur,
  Mike Lewis, Min Si, Mitesh~Kumar Singh, Mona Hassan, Naman Goyal, Narjes Torabi, Nikolay Bashlykov, Nikolay Bogoychev, Niladri Chatterji, Ning Zhang, Olivier Duchenne, Onur Çelebi, Patrick Alrassy, Pengchuan Zhang, Pengwei Li, Petar Vasic, Peter Weng, Prajjwal Bhargava, Pratik Dubal, Praveen Krishnan, Punit~Singh Koura, Puxin Xu, Qing He, Qingxiao Dong, Ragavan Srinivasan, Raj Ganapathy, Ramon Calderer, Ricardo~Silveira Cabral, Robert Stojnic, Roberta Raileanu, Rohan Maheswari, Rohit Girdhar, Rohit Patel, Romain Sauvestre, Ronnie Polidoro, Roshan Sumbaly, Ross Taylor, Ruan Silva, Rui Hou, Rui Wang, Saghar Hosseini, Sahana Chennabasappa, Sanjay Singh, Sean Bell, Seohyun~Sonia Kim, Sergey Edunov, Shaoliang Nie, Sharan Narang, Sharath Raparthy, Sheng Shen, Shengye Wan, Shruti Bhosale, Shun Zhang, Simon Vandenhende, Soumya Batra, Spencer Whitman, Sten Sootla, Stephane Collot, Suchin Gururangan, Sydney Borodinsky, Tamar Herman, Tara Fowler, Tarek Sheasha, Thomas Georgiou, Thomas Scialom, Tobias Speckbacher,
  Todor Mihaylov, Tong Xiao, Ujjwal Karn, Vedanuj Goswami, Vibhor Gupta, Vignesh Ramanathan, Viktor Kerkez, Vincent Gonguet, Virginie Do, Vish Vogeti, Vítor Albiero, Vladan Petrovic, Weiwei Chu, Wenhan Xiong, Wenyin Fu, Whitney Meers, Xavier Martinet, Xiaodong Wang, Xiaofang Wang, Xiaoqing~Ellen Tan, Xide Xia, Xinfeng Xie, Xuchao Jia, Xuewei Wang, Yaelle Goldschlag, Yashesh Gaur, Yasmine Babaei, Yi~Wen, Yiwen Song, Yuchen Zhang, Yue Li, Yuning Mao, Zacharie~Delpierre Coudert, Zheng Yan, Zhengxing Chen, Zoe Papakipos, Aaditya Singh, Aayushi Srivastava, Abha Jain, Adam Kelsey, Adam Shajnfeld, Adithya Gangidi, Adolfo Victoria, Ahuva Goldstand, Ajay Menon, Ajay Sharma, Alex Boesenberg, Alexei Baevski, Allie Feinstein, Amanda Kallet, Amit Sangani, Amos Teo, Anam Yunus, Andrei Lupu, Andres Alvarado, Andrew Caples, Andrew Gu, Andrew Ho, Andrew Poulton, Andrew Ryan, Ankit Ramchandani, Annie Dong, Annie Franco, Anuj Goyal, Aparajita Saraf, Arkabandhu Chowdhury, Ashley Gabriel, Ashwin Bharambe, Assaf Eisenman, Azadeh
  Yazdan, Beau James, Ben Maurer, Benjamin Leonhardi, Bernie Huang, Beth Loyd, Beto~De Paola, Bhargavi Paranjape, Bing Liu, Bo~Wu, Boyu Ni, Braden Hancock, Bram Wasti, Brandon Spence, Brani Stojkovic, Brian Gamido, Britt Montalvo, Carl Parker, Carly Burton, Catalina Mejia, Ce~Liu, Changhan Wang, Changkyu Kim, Chao Zhou, Chester Hu, Ching-Hsiang Chu, Chris Cai, Chris Tindal, Christoph Feichtenhofer, Cynthia Gao, Damon Civin, Dana Beaty, Daniel Kreymer, Daniel Li, David Adkins, David Xu, Davide Testuggine, Delia David, Devi Parikh, Diana Liskovich, Didem Foss, Dingkang Wang, Duc Le, Dustin Holland, Edward Dowling, Eissa Jamil, Elaine Montgomery, Eleonora Presani, Emily Hahn, Emily Wood, Eric-Tuan Le, Erik Brinkman, Esteban Arcaute, Evan Dunbar, Evan Smothers, Fei Sun, Felix Kreuk, Feng Tian, Filippos Kokkinos, Firat Ozgenel, Francesco Caggioni, Frank Kanayet, Frank Seide, Gabriela~Medina Florez, Gabriella Schwarz, Gada Badeer, Georgia Swee, Gil Halpern, Grant Herman, Grigory Sizov, Guangyi, Zhang, Guna
  Lakshminarayanan, Hakan Inan, Hamid Shojanazeri, Han Zou, Hannah Wang, Hanwen Zha, Haroun Habeeb, Harrison Rudolph, Helen Suk, Henry Aspegren, Hunter Goldman, Hongyuan Zhan, Ibrahim Damlaj, Igor Molybog, Igor Tufanov, Ilias Leontiadis, Irina-Elena Veliche, Itai Gat, Jake Weissman, James Geboski, James Kohli, Janice Lam, Japhet Asher, Jean-Baptiste Gaya, Jeff Marcus, Jeff Tang, Jennifer Chan, Jenny Zhen, Jeremy Reizenstein, Jeremy Teboul, Jessica Zhong, Jian Jin, Jingyi Yang, Joe Cummings, Jon Carvill, Jon Shepard, Jonathan McPhie, Jonathan Torres, Josh Ginsburg, Junjie Wang, Kai Wu, Kam~Hou U, Karan Saxena, Kartikay Khandelwal, Katayoun Zand, Kathy Matosich, Kaushik Veeraraghavan, Kelly Michelena, Keqian Li, Kiran Jagadeesh, Kun Huang, Kunal Chawla, Kyle Huang, Lailin Chen, Lakshya Garg, Lavender A, Leandro Silva, Lee Bell, Lei Zhang, Liangpeng Guo, Licheng Yu, Liron Moshkovich, Luca Wehrstedt, Madian Khabsa, Manav Avalani, Manish Bhatt, Martynas Mankus, Matan Hasson, Matthew Lennie, Matthias Reso, Maxim
  Groshev, Maxim Naumov, Maya Lathi, Meghan Keneally, Miao Liu, Michael~L. Seltzer, Michal Valko, Michelle Restrepo, Mihir Patel, Mik Vyatskov, Mikayel Samvelyan, Mike Clark, Mike Macey, Mike Wang, Miquel~Jubert Hermoso, Mo~Metanat, Mohammad Rastegari, Munish Bansal, Nandhini Santhanam, Natascha Parks, Natasha White, Navyata Bawa, Nayan Singhal, Nick Egebo, Nicolas Usunier, Nikhil Mehta, Nikolay~Pavlovich Laptev, Ning Dong, Norman Cheng, Oleg Chernoguz, Olivia Hart, Omkar Salpekar, Ozlem Kalinli, Parkin Kent, Parth Parekh, Paul Saab, Pavan Balaji, Pedro Rittner, Philip Bontrager, Pierre Roux, Piotr Dollar, Polina Zvyagina, Prashant Ratanchandani, Pritish Yuvraj, Qian Liang, Rachad Alao, Rachel Rodriguez, Rafi Ayub, Raghotham Murthy, Raghu Nayani, Rahul Mitra, Rangaprabhu Parthasarathy, Raymond Li, Rebekkah Hogan, Robin Battey, Rocky Wang, Russ Howes, Ruty Rinott, Sachin Mehta, Sachin Siby, Sai~Jayesh Bondu, Samyak Datta, Sara Chugh, Sara Hunt, Sargun Dhillon, Sasha Sidorov, Satadru Pan, Saurabh Mahajan,
  Saurabh Verma, Seiji Yamamoto, Sharadh Ramaswamy, Shaun Lindsay, Shaun Lindsay, Sheng Feng, Shenghao Lin, Shengxin~Cindy Zha, Shishir Patil, Shiva Shankar, Shuqiang Zhang, Shuqiang Zhang, Sinong Wang, Sneha Agarwal, Soji Sajuyigbe, Soumith Chintala, Stephanie Max, Stephen Chen, Steve Kehoe, Steve Satterfield, Sudarshan Govindaprasad, Sumit Gupta, Summer Deng, Sungmin Cho, Sunny Virk, Suraj Subramanian, Sy~Choudhury, Sydney Goldman, Tal Remez, Tamar Glaser, Tamara Best, Thilo Koehler, Thomas Robinson, Tianhe Li, Tianjun Zhang, Tim Matthews, Timothy Chou, Tzook Shaked, Varun Vontimitta, Victoria Ajayi, Victoria Montanez, Vijai Mohan, Vinay~Satish Kumar, Vishal Mangla, Vlad Ionescu, Vlad Poenaru, Vlad~Tiberiu Mihailescu, Vladimir Ivanov, Wei Li, Wenchen Wang, Wenwen Jiang, Wes Bouaziz, Will Constable, Xiaocheng Tang, Xiaojian Wu, Xiaolan Wang, Xilun Wu, Xinbo Gao, Yaniv Kleinman, Yanjun Chen, Ye~Hu, Ye~Jia, Ye~Qi, Yenda Li, Yilin Zhang, Ying Zhang, Yossi Adi, Youngjin Nam, Yu, Wang, Yu~Zhao, Yuchen Hao, Yundi
  Qian, Yunlu Li, Yuzi He, Zach Rait, Zachary DeVito, Zef Rosnbrick, Zhaoduo Wen, Zhenyu Yang, Zhiwei Zhao, and Zhiyu Ma.
\newblock The llama 3 herd of models, 2024.
\newblock URL \url{https://arxiv.org/abs/2407.21783}.

\bibitem[Grazzi et~al.(2024)Grazzi, Siems, Franke, Zela, Hutter, and Pontil]{grazzi2024unlockingstatetrackinglinearrnns}
Riccardo Grazzi, Julien Siems, Jörg K.~H. Franke, Arber Zela, Frank Hutter, and Massimiliano Pontil.
\newblock Unlocking state-tracking in linear rnns through negative eigenvalues, 2024.
\newblock URL \url{https://arxiv.org/abs/2411.12537}.

\bibitem[Gu \& Dao(2023)Gu and Dao]{gu2023mamba}
Albert Gu and Tri Dao.
\newblock Mamba: Linear-time sequence modeling with selective state spaces, 2023.

\bibitem[Gu et~al.(2022)Gu, Goel, and Re]{gu2022efficiently}
Albert Gu, Karan Goel, and Christopher Re.
\newblock Efficiently modeling long sequences with structured state spaces, 2022.

\bibitem[Hampel(1974)]{hampel1974influence}
Frank~R Hampel.
\newblock The influence curve and its role in robust estimation.
\newblock \emph{Journal of the american statistical association}, 69\penalty0 (346):\penalty0 383--393, 1974.

\bibitem[Han et~al.(2024)Han, Pu, Xia, Han, Pan, Li, Lu, Song, and Huang]{han2024bridgingdividereconsideringsoftmax}
Dongchen Han, Yifan Pu, Zhuofan Xia, Yizeng Han, Xuran Pan, Xiu Li, Jiwen Lu, Shiji Song, and Gao Huang.
\newblock Bridging the divide: Reconsidering softmax and linear attention, 2024.
\newblock URL \url{https://arxiv.org/abs/2412.06590}.

\bibitem[Hendrycks et~al.(2021)Hendrycks, Burns, Basart, Zou, Mazeika, Song, and Steinhardt]{hendrycks2021measuringmassivemultitasklanguage}
Dan Hendrycks, Collin Burns, Steven Basart, Andy Zou, Mantas Mazeika, Dawn Song, and Jacob Steinhardt.
\newblock Measuring massive multitask language understanding, 2021.
\newblock URL \url{https://arxiv.org/abs/2009.03300}.

\bibitem[Horn \& Johnson(2012)Horn and Johnson]{horn2012matrix}
Roger~A Horn and Charles~R Johnson.
\newblock \emph{Matrix analysis}.
\newblock Cambridge university press, 2012.

\bibitem[Hou et~al.(2024)Hou, Zeng, Ma, and Yu]{Hou2024VisualRWKVER}
Haowen Hou, Peigen Zeng, Fei Ma, and Fei~Richard Yu.
\newblock Visualrwkv: Exploring recurrent neural networks for visual language models.
\newblock \emph{ArXiv}, abs/2406.13362, 2024.
\newblock URL \url{https://api.semanticscholar.org/CorpusID:270620870}.

\bibitem[Hudson \& Manning(2019)Hudson and Manning]{Hudson2019GQAAN}
Drew~A. Hudson and Christopher~D. Manning.
\newblock Gqa: A new dataset for real-world visual reasoning and compositional question answering.
\newblock \emph{2019 IEEE/CVF Conference on Computer Vision and Pattern Recognition (CVPR)}, pp.\  6693--6702, 2019.
\newblock URL \url{https://api.semanticscholar.org/CorpusID:152282269}.

\bibitem[Kaddour(2023)]{kaddour2023minipile}
Jean Kaddour.
\newblock The minipile challenge for data-efficient language models, 2023.

\bibitem[Katharopoulos et~al.(2020{\natexlab{a}})Katharopoulos, Vyas, Pappas, and Fleuret]{katharopoulos2020lineartransformers}
Angelos Katharopoulos, Apoorv Vyas, Nikolaos Pappas, and Fran{\c{c}}ois Fleuret.
\newblock Transformers are rnns: Fast autoregressive transformers with linear attention.
\newblock In \emph{International conference on machine learning}, pp.\  5156--5165. PMLR, 2020{\natexlab{a}}.

\bibitem[Katharopoulos et~al.(2020{\natexlab{b}})Katharopoulos, Vyas, Pappas, and Fleuret]{linearTrans2020inputDepend}
Angelos Katharopoulos, Apoorv Vyas, Nikolaos Pappas, and Fran¸cois Fleuret.
\newblock Transformers are rnns: Fast autoregressive transformers with linear attention.
\newblock \emph{Proceedings of the 37 th International Conference on Machine Learning}, 2020{\natexlab{b}}.

\bibitem[Li et~al.(2024{\natexlab{a}})Li, Fang, Smyrnis, Ivgi, Jordan, Gadre, Bansal, Guha, Keh, Arora, Garg, Xin, Muennighoff, Heckel, Mercat, Chen, Gururangan, Wortsman, Albalak, Bitton, Nezhurina, Abbas, Hsieh, Ghosh, Gardner, Kilian, Zhang, Shao, Pratt, Sanyal, Ilharco, Daras, Marathe, Gokaslan, Zhang, Chandu, Nguyen, Vasiljevic, Kakade, Song, Sanghavi, Faghri, Oh, Zettlemoyer, Lo, El-Nouby, Pouransari, Toshev, Wang, Groeneveld, Soldaini, Koh, Jitsev, Kollar, Dimakis, Carmon, Dave, Schmidt, and Shankar]{li2024datacomplm}
Jeffrey Li, Alex Fang, Georgios Smyrnis, Maor Ivgi, Matt Jordan, Samir Gadre, Hritik Bansal, Etash Guha, Sedrick Keh, Kushal Arora, Saurabh Garg, Rui Xin, Niklas Muennighoff, Reinhard Heckel, Jean Mercat, Mayee Chen, Suchin Gururangan, Mitchell Wortsman, Alon Albalak, Yonatan Bitton, Marianna Nezhurina, Amro Abbas, Cheng-Yu Hsieh, Dhruba Ghosh, Josh Gardner, Maciej Kilian, Hanlin Zhang, Rulin Shao, Sarah Pratt, Sunny Sanyal, Gabriel Ilharco, Giannis Daras, Kalyani Marathe, Aaron Gokaslan, Jieyu Zhang, Khyathi Chandu, Thao Nguyen, Igor Vasiljevic, Sham Kakade, Shuran Song, Sujay Sanghavi, Fartash Faghri, Sewoong Oh, Luke Zettlemoyer, Kyle Lo, Alaaeldin El-Nouby, Hadi Pouransari, Alexander Toshev, Stephanie Wang, Dirk Groeneveld, Luca Soldaini, Pang~Wei Koh, Jenia Jitsev, Thomas Kollar, Alexandros~G. Dimakis, Yair Carmon, Achal Dave, Ludwig Schmidt, and Vaishaal Shankar.
\newblock Datacomp-lm: In search of the next generation of training sets for language models, 2024{\natexlab{a}}.

\bibitem[Li et~al.(2023{\natexlab{a}})Li, Allal, Zi, Muennighoff, Kocetkov, Mou, Marone, Akiki, Li, Chim, Liu, Zheltonozhskii, Zhuo, Wang, Dehaene, Davaadorj, Lamy-Poirier, Monteiro, Shliazhko, Gontier, Meade, Zebaze, Yee, Umapathi, Zhu, Lipkin, Oblokulov, Wang, Murthy, Stillerman, Patel, Abulkhanov, Zocca, Dey, Zhang, Fahmy, Bhattacharyya, Yu, Singh, Luccioni, Villegas, Kunakov, Zhdanov, Romero, Lee, Timor, Ding, Schlesinger, Schoelkopf, Ebert, Dao, Mishra, Gu, Robinson, Anderson, Dolan-Gavitt, Contractor, Reddy, Fried, Bahdanau, Jernite, Ferrandis, Hughes, Wolf, Guha, von Werra, and de~Vries]{li2023starcoder}
Raymond Li, Loubna~Ben Allal, Yangtian Zi, Niklas Muennighoff, Denis Kocetkov, Chenghao Mou, Marc Marone, Christopher Akiki, Jia Li, Jenny Chim, Qian Liu, Evgenii Zheltonozhskii, Terry~Yue Zhuo, Thomas Wang, Olivier Dehaene, Mishig Davaadorj, Joel Lamy-Poirier, João Monteiro, Oleh Shliazhko, Nicolas Gontier, Nicholas Meade, Armel Zebaze, Ming-Ho Yee, Logesh~Kumar Umapathi, Jian Zhu, Benjamin Lipkin, Muhtasham Oblokulov, Zhiruo Wang, Rudra Murthy, Jason Stillerman, Siva~Sankalp Patel, Dmitry Abulkhanov, Marco Zocca, Manan Dey, Zhihan Zhang, Nour Fahmy, Urvashi Bhattacharyya, Wenhao Yu, Swayam Singh, Sasha Luccioni, Paulo Villegas, Maxim Kunakov, Fedor Zhdanov, Manuel Romero, Tony Lee, Nadav Timor, Jennifer Ding, Claire Schlesinger, Hailey Schoelkopf, Jan Ebert, Tri Dao, Mayank Mishra, Alex Gu, Jennifer Robinson, Carolyn~Jane Anderson, Brendan Dolan-Gavitt, Danish Contractor, Siva Reddy, Daniel Fried, Dzmitry Bahdanau, Yacine Jernite, Carlos~Muñoz Ferrandis, Sean Hughes, Thomas Wolf, Arjun Guha, Leandro von
  Werra, and Harm de~Vries.
\newblock Starcoder: may the source be with you!, 2023{\natexlab{a}}.

\bibitem[Li et~al.(2023{\natexlab{b}})Li, Du, Zhou, Wang, Zhao, and rong Wen]{Li2023EvaluatingOH}
Yifan Li, Yifan Du, Kun Zhou, Jinpeng Wang, Wayne~Xin Zhao, and Ji~rong Wen.
\newblock Evaluating object hallucination in large vision-language models.
\newblock In \emph{Conference on Empirical Methods in Natural Language Processing}, 2023{\natexlab{b}}.
\newblock URL \url{https://api.semanticscholar.org/CorpusID:258740697}.

\bibitem[Li et~al.(2024{\natexlab{b}})Li, Guo, Guerin, and Lin]{li2024evaluatinglargelanguagemodels}
Yucheng Li, Yunhao Guo, Frank Guerin, and Chenghua Lin.
\newblock Evaluating large language models for generalization and robustness via data compression, 2024{\natexlab{b}}.
\newblock URL \url{https://arxiv.org/abs/2402.00861}.

\bibitem[Lin et~al.(2022)Lin, Mihaylov, Artetxe, Wang, Chen, Simig, Ott, Goyal, Bhosale, Du, et~al.]{lin2022few}
Xi~Victoria Lin, Todor Mihaylov, Mikel Artetxe, Tianlu Wang, Shuohui Chen, Daniel Simig, Myle Ott, Naman Goyal, Shruti Bhosale, Jingfei Du, et~al.
\newblock Few-shot learning with multilingual generative language models.
\newblock In \emph{Proceedings of the 2022 Conference on Empirical Methods in Natural Language Processing}, pp.\  9019--9052, 2022.

\bibitem[Liu et~al.(2024)Liu, Wang, Wu, Feng, Stone, and Liu]{liu2024longhornstatespacemodels}
Bo~Liu, Rui Wang, Lemeng Wu, Yihao Feng, Peter Stone, and Qiang Liu.
\newblock Longhorn: State space models are amortized online learners, 2024.
\newblock URL \url{https://arxiv.org/abs/2407.14207}.

\bibitem[Lozhkov et~al.(2024)Lozhkov, Ben~Allal, von Werra, and Wolf]{lozhkov2024fineweb-edu}
Anton Lozhkov, Loubna Ben~Allal, Leandro von Werra, and Thomas Wolf.
\newblock Fineweb-edu: the finest collection of educational content, 2024.
\newblock URL \url{https://huggingface.co/datasets/HuggingFaceFW/fineweb-edu}.

\bibitem[Lu et~al.(2022)Lu, Mishra, Xia, Qiu, Chang, Zhu, Tafjord, Clark, and Kalyan]{Lu2022LearnTE}
Pan Lu, Swaroop Mishra, Tony Xia, Liang Qiu, Kai-Wei Chang, Song-Chun Zhu, Oyvind Tafjord, Peter Clark, and A.~Kalyan.
\newblock Learn to explain: Multimodal reasoning via thought chains for science question answering.
\newblock \emph{ArXiv}, abs/2209.09513, 2022.
\newblock URL \url{https://api.semanticscholar.org/CorpusID:252383606}.

\bibitem[Lutati et~al.(2023)Lutati, Zimerman, and Wolf]{lutati2023focus}
Shahar Lutati, Itamar Zimerman, and Lior Wolf.
\newblock Focus your attention (with adaptive iir filters), 2023.

\bibitem[McCandlish et~al.(2018)McCandlish, Kaplan, Amodei, and Team]{McCandlish2018AnEM}
Sam McCandlish, Jared Kaplan, Dario Amodei, and OpenAI~Dota Team.
\newblock An empirical model of large-batch training.
\newblock \emph{ArXiv}, abs/1812.06162, 2018.
\newblock URL \url{https://api.semanticscholar.org/CorpusID:56262183}.

\bibitem[Merrill et~al.(2024)Merrill, Petty, and Sabharwal]{illusionstate_2024_merril}
William Merrill, Jackson Petty, and Ashish Sabharwal.
\newblock The illusion of state in state-space models.
\newblock \emph{ArXiv}, abs/2404.08819, 2024.
\newblock URL \url{https://api.semanticscholar.org/CorpusID:269149086}.

\bibitem[Molybog et~al.(2023)Molybog, Albert, Chen, DeVito, Esiobu, Goyal, Koura, Narang, Poulton, Silva, Tang, Liskovich, Xu, Zhang, Kambadur, Roller, and Zhang]{molybog2023theoryadaminstabilitylargescale}
Igor Molybog, Peter Albert, Moya Chen, Zachary DeVito, David Esiobu, Naman Goyal, Punit~Singh Koura, Sharan Narang, Andrew Poulton, Ruan Silva, Binh Tang, Diana Liskovich, Puxin Xu, Yuchen Zhang, Melanie Kambadur, Stephen Roller, and Susan Zhang.
\newblock A theory on adam instability in large-scale machine learning, 2023.
\newblock URL \url{https://arxiv.org/abs/2304.09871}.

\bibitem[Olsson et~al.(2022)Olsson, Elhage, Nanda, Joseph, DasSarma, Henighan, Mann, Askell, Bai, Chen, Conerly, Drain, Ganguli, Hatfield-Dodds, Hernandez, Johnston, Jones, Kernion, Lovitt, Ndousse, Amodei, Brown, Clark, Kaplan, McCandlish, and Olah]{olsson2022incontext}
Catherine Olsson, Nelson Elhage, Neel Nanda, Nicholas Joseph, Nova DasSarma, Tom Henighan, Ben Mann, Amanda Askell, Yuntao Bai, Anna Chen, Tom Conerly, Dawn Drain, Deep Ganguli, Zac Hatfield-Dodds, Danny Hernandez, Scott Johnston, Andy Jones, Jackson Kernion, Liane Lovitt, Kamal Ndousse, Dario Amodei, Tom Brown, Jack Clark, Jared Kaplan, Sam McCandlish, and Chris Olah.
\newblock In-context learning and induction heads, 2022.

\bibitem[Paperno et~al.(2016)Paperno, Kruszewski, Lazaridou, Pham, Bernardi, Pezzelle, Baroni, Boleda, and Fern{\'a}ndez]{paperno2016lambada}
Denis Paperno, Germ{\'a}n Kruszewski, Angeliki Lazaridou, Quan~Ngoc Pham, Raffaella Bernardi, Sandro Pezzelle, Marco Baroni, Gemma Boleda, and Raquel Fern{\'a}ndez.
\newblock The lambada dataset: Word prediction requiring a broad discourse context.
\newblock \emph{arXiv preprint arXiv:1606.06031}, 2016.

\bibitem[Paster et~al.(2023)Paster, Santos, Azerbayev, and Ba]{paster2023openwebmath}
Keiran Paster, Marco~Dos Santos, Zhangir Azerbayev, and Jimmy Ba.
\newblock Openwebmath: An open dataset of high-quality mathematical web text, 2023.

\bibitem[Peng et~al.(2023)Peng, Alcaide, Anthony, Albalak, Arcadinho, Biderman, Cao, Cheng, Chung, Derczynski, Du, Grella, Gv, He, Hou, Kazienko, Kocon, Kong, Koptyra, Lau, Lin, Mantri, Mom, Saito, Song, Tang, Wind, Wo{\'z}niak, Zhang, Zhou, Zhu, and Zhu]{peng2023rwkv}
Bo~Peng, Eric Alcaide, Quentin Anthony, Alon Albalak, Samuel Arcadinho, Stella Biderman, Huanqi Cao, Xin Cheng, Michael Chung, Leon Derczynski, Xingjian Du, Matteo Grella, Kranthi Gv, Xuzheng He, Haowen Hou, Przemyslaw Kazienko, Jan Kocon, Jiaming Kong, Bart{\l}omiej Koptyra, Hayden Lau, Jiaju Lin, Krishna Sri~Ipsit Mantri, Ferdinand Mom, Atsushi Saito, Guangyu Song, Xiangru Tang, Johan Wind, Stanis{\l}aw Wo{\'z}niak, Zhenyuan Zhang, Qinghua Zhou, Jian Zhu, and Rui-Jie Zhu.
\newblock {RWKV}: Reinventing {RNN}s for the transformer era.
\newblock In Houda Bouamor, Juan Pino, and Kalika Bali (eds.), \emph{Findings of the Association for Computational Linguistics: EMNLP 2023}, pp.\  14048--14077, Singapore, December 2023. Association for Computational Linguistics.
\newblock \doi{10.18653/v1/2023.findings-emnlp.936}.
\newblock URL \url{https://aclanthology.org/2023.findings-emnlp.936}.

\bibitem[Peng et~al.(2024{\natexlab{a}})Peng, Goldstein, Anthony, Albalak, Alcaide, Biderman, Cheah, Du, Ferdinan, Hou, Kazienko, GV, Kocoń, Koptyra, Krishna, Jr., Lin, Muennighoff, Obeid, Saito, Song, Tu, Wirawan, Woźniak, Zhang, Zhao, Zhao, Zhou, Zhu, and Zhu]{peng2024eaglefinchrwkvmatrixvalued}
Bo~Peng, Daniel Goldstein, Quentin Anthony, Alon Albalak, Eric Alcaide, Stella Biderman, Eugene Cheah, Xingjian Du, Teddy Ferdinan, Haowen Hou, Przemysław Kazienko, Kranthi~Kiran GV, Jan Kocoń, Bartłomiej Koptyra, Satyapriya Krishna, Ronald~McClelland Jr., Jiaju Lin, Niklas Muennighoff, Fares Obeid, Atsushi Saito, Guangyu Song, Haoqin Tu, Cahya Wirawan, Stanisław Woźniak, Ruichong Zhang, Bingchen Zhao, Qihang Zhao, Peng Zhou, Jian Zhu, and Rui-Jie Zhu.
\newblock Eagle and finch: Rwkv with matrix-valued states and dynamic recurrence, 2024{\natexlab{a}}.
\newblock URL \url{https://arxiv.org/abs/2404.05892}.

\bibitem[Peng et~al.(2024{\natexlab{b}})Peng, Goldstein, Anthony, Albalak, Alcaide, Biderman, Cheah, Ferdinan, GV, Hou, Krishna, Jr., Muennighoff, Obeid, Saito, Song, Tu, Zhang, Zhao, Zhao, Zhu, and Zhu]{rwkv6_colm}
Bo~Peng, Daniel Goldstein, Quentin~Gregory Anthony, Alon Albalak, Eric Alcaide, Stella Biderman, Eugene Cheah, Teddy Ferdinan, Kranthi~Kiran GV, Haowen Hou, Satyapriya Krishna, Ronald~McClelland Jr., Niklas Muennighoff, Fares Obeid, Atsushi Saito, Guangyu Song, Haoqin Tu, Ruichong Zhang, Bingchen Zhao, Qihang Zhao, Jian Zhu, and Rui-Jie Zhu.
\newblock Eagle and finch: {RWKV} with matrix-valued states and dynamic recurrence.
\newblock In \emph{First Conference on Language Modeling}, 2024{\natexlab{b}}.
\newblock URL \url{https://openreview.net/forum?id=soz1SEiPeq}.

\bibitem[Poli et~al.(2023)Poli, Massaroli, Nguyen, Fu, Dao, Baccus, Bengio, Ermon, and R{\'e}]{poli2023hyena}
Michael Poli, Stefano Massaroli, Eric Nguyen, Daniel~Y Fu, Tri Dao, Stephen Baccus, Yoshua Bengio, Stefano Ermon, and Christopher R{\'e}.
\newblock Hyena hierarchy: Towards larger convolutional language models.
\newblock In \emph{International Conference on Machine Learning}, pp.\  28043--28078. PMLR, 2023.

\bibitem[Poli et~al.(2024)Poli, Thomas, Nguyen, Ponnusamy, Deiseroth, Kersting, Suzuki, Hie, Ermon, Ré, Zhang, and Massaroli]{poli2024mechanisticdesignscalinghybrid}
Michael Poli, Armin~W Thomas, Eric Nguyen, Pragaash Ponnusamy, Björn Deiseroth, Kristian Kersting, Taiji Suzuki, Brian Hie, Stefano Ermon, Christopher Ré, Ce~Zhang, and Stefano Massaroli.
\newblock Mechanistic design and scaling of hybrid architectures, 2024.
\newblock URL \url{https://arxiv.org/abs/2403.17844}.

\bibitem[Ponti et~al.(2020)Ponti, Glava{\v{s}}, Majewska, Liu, Vuli{\'c}, and Korhonen]{ponti-etal-2020-xcopa}
Edoardo~Maria Ponti, Goran Glava{\v{s}}, Olga Majewska, Qianchu Liu, Ivan Vuli{\'c}, and Anna Korhonen.
\newblock {XCOPA}: A multilingual dataset for causal commonsense reasoning.
\newblock In Bonnie Webber, Trevor Cohn, Yulan He, and Yang Liu (eds.), \emph{Proceedings of the 2020 Conference on Empirical Methods in Natural Language Processing (EMNLP)}, pp.\  2362--2376, Online, November 2020. Association for Computational Linguistics.
\newblock \doi{10.18653/v1/2020.emnlp-main.185}.
\newblock URL \url{https://aclanthology.org/2020.emnlp-main.185}.

\bibitem[Qwen et~al.(2025)Qwen, :, Yang, Yang, Zhang, Hui, Zheng, Yu, Li, Liu, Huang, Wei, Lin, Yang, Tu, Zhang, Yang, Yang, Zhou, Lin, Dang, Lu, Bao, Yang, Yu, Li, Xue, Zhang, Zhu, Men, Lin, Li, Tang, Xia, Ren, Ren, Fan, Su, Zhang, Wan, Liu, Cui, Zhang, and Qiu]{qwen2025qwen25technicalreport}
Qwen, :, An~Yang, Baosong Yang, Beichen Zhang, Binyuan Hui, Bo~Zheng, Bowen Yu, Chengyuan Li, Dayiheng Liu, Fei Huang, Haoran Wei, Huan Lin, Jian Yang, Jianhong Tu, Jianwei Zhang, Jianxin Yang, Jiaxi Yang, Jingren Zhou, Junyang Lin, Kai Dang, Keming Lu, Keqin Bao, Kexin Yang, Le~Yu, Mei Li, Mingfeng Xue, Pei Zhang, Qin Zhu, Rui Men, Runji Lin, Tianhao Li, Tianyi Tang, Tingyu Xia, Xingzhang Ren, Xuancheng Ren, Yang Fan, Yang Su, Yichang Zhang, Yu~Wan, Yuqiong Liu, Zeyu Cui, Zhenru Zhang, and Zihan Qiu.
\newblock Qwen2.5 technical report, 2025.
\newblock URL \url{https://arxiv.org/abs/2412.15115}.

\bibitem[Rae et~al.(2019)Rae, Potapenko, Jayakumar, and Lillicrap]{rae2019compressivetransformerslongrangesequence}
Jack~W. Rae, Anna Potapenko, Siddhant~M. Jayakumar, and Timothy~P. Lillicrap.
\newblock Compressive transformers for long-range sequence modelling, 2019.
\newblock URL \url{https://arxiv.org/abs/1911.05507}.

\bibitem[Rudelson \& Vershynin(2007)Rudelson and Vershynin]{rudelson2007samplingfromlargematrices}
Mark Rudelson and Roman Vershynin.
\newblock Sampling from large matrices: An approach through geometric functional analysis.
\newblock \emph{J. ACM}, 54\penalty0 (4):\penalty0 21–es, July 2007.
\newblock ISSN 0004-5411.
\newblock \doi{10.1145/1255443.1255449}.
\newblock URL \url{https://doi.org/10.1145/1255443.1255449}.

\bibitem[Sakaguchi et~al.(2021)Sakaguchi, Bras, Bhagavatula, and Choi]{sakaguchi2021winogrande}
Keisuke Sakaguchi, Ronan~Le Bras, Chandra Bhagavatula, and Yejin Choi.
\newblock Winogrande: An adversarial winograd schema challenge at scale.
\newblock \emph{Communications of the ACM}, 64\penalty0 (9):\penalty0 99--106, 2021.

\bibitem[Schlag et~al.(2021)Schlag, Irie, and Schmidhuber]{schlag_deltarule}
Imanol Schlag, Kazuki Irie, and Jürgen Schmidhuber.
\newblock Linear transformers are secretly fast weight programmers, 2021.
\newblock URL \url{https://arxiv.org/abs/2102.11174}.

\bibitem[Schmidhuber(1992)]{fastweights}
Jürgen Schmidhuber.
\newblock Learning to control fast-weight memories: An alternative to dynamic recurrent networks.
\newblock \emph{Neural Computation}, 4\penalty0 (1):\penalty0 131--139, 1992.
\newblock \doi{10.1162/neco.1992.4.1.131}.

\bibitem[Schultz et~al.(2024)Schultz, Adamek, Jusup, Lanctot, Kaisers, Perrin, Hennes, Shar, Lewis, Ruoss, Zahavy, Veličković, Prince, Singh, Malmi, and Tomašev]{schultz2024masteringboardgamesexternal}
John Schultz, Jakub Adamek, Matej Jusup, Marc Lanctot, Michael Kaisers, Sarah Perrin, Daniel Hennes, Jeremy Shar, Cannada Lewis, Anian Ruoss, Tom Zahavy, Petar Veličković, Laurel Prince, Satinder Singh, Eric Malmi, and Nenad Tomašev.
\newblock Mastering board games by external and internal planning with language models, 2024.
\newblock URL \url{https://arxiv.org/abs/2412.12119}.

\bibitem[Shah et~al.(2024)Shah, Bikshandi, Zhang, Thakkar, Ramani, and Dao]{shah2024flashattention}
Jay Shah, Ganesh Bikshandi, Ying Zhang, Vijay Thakkar, Pradeep Ramani, and Tri Dao.
\newblock Flash\-attention-3: Fast and accurate attention with asynchrony and low-precision.
\newblock \emph{arXiv preprint arXiv:2407.08608}, 10, 2024.

\bibitem[Singh et~al.(2019)Singh, Natarajan, Shah, Jiang, Chen, Batra, Parikh, and Rohrbach]{singh2019vqa}
Amanpreet Singh, Vivek Natarajan, Meet Shah, Yu~Jiang, Xinlei Chen, Dhruv Batra, Devi Parikh, and Marcus Rohrbach.
\newblock Towards vqa models that can read, 2019.

\bibitem[Smith et~al.(2018)Smith, Kindermans, and Le]{l.2018dontdecay}
Samuel~L. Smith, Pieter-Jan Kindermans, and Quoc~V. Le.
\newblock Don't decay the learning rate, increase the batch size.
\newblock In \emph{International Conference on Learning Representations}, 2018.
\newblock URL \url{https://openreview.net/forum?id=B1Yy1BxCZ}.

\bibitem[Soboleva et~al.(2023)Soboleva, Al-Khateeb, Myers, Steeves, Hestness, and Dey]{cerebras2023slimpajama}
Daria Soboleva, Faisal Al-Khateeb, Robert Myers, Jacob~R Steeves, Joel Hestness, and Nolan Dey.
\newblock {SlimPajama: A 627B token cleaned and deduplicated version of RedPajama}.
\newblock \url{https://www.cerebras.net/blog/slimpajama-a-627b-token-cleaned-and-deduplicated-version-of-redpajama}, June 2023.
\newblock URL \url{https://huggingface.co/datasets/cerebras/SlimPajama-627B}.

\bibitem[Soldaini et~al.(2024)Soldaini, Kinney, Bhagia, Schwenk, Atkinson, Authur, Bogin, Chandu, Dumas, Elazar, Hofmann, Jha, Kumar, Lucy, Lyu, Lambert, Magnusson, Morrison, Muennighoff, Naik, Nam, Peters, Ravichander, Richardson, Shen, Strubell, Subramani, Tafjord, Walsh, Zettlemoyer, Smith, Hajishirzi, Beltagy, Groeneveld, Dodge, and Lo]{dolma}
Luca Soldaini, Rodney Kinney, Akshita Bhagia, Dustin Schwenk, David Atkinson, Russell Authur, Ben Bogin, Khyathi Chandu, Jennifer Dumas, Yanai Elazar, Valentin Hofmann, Ananya~Harsh Jha, Sachin Kumar, Li~Lucy, Xinxi Lyu, Nathan Lambert, Ian Magnusson, Jacob Morrison, Niklas Muennighoff, Aakanksha Naik, Crystal Nam, Matthew~E. Peters, Abhilasha Ravichander, Kyle Richardson, Zejiang Shen, Emma Strubell, Nishant Subramani, Oyvind Tafjord, Pete Walsh, Luke Zettlemoyer, Noah~A. Smith, Hannaneh Hajishirzi, Iz~Beltagy, Dirk Groeneveld, Jesse Dodge, and Kyle Lo.
\newblock {Dolma: an Open Corpus of Three Trillion Tokens for Language Model Pretraining Research}.
\newblock \emph{arXiv preprint}, 2024.

\bibitem[Sun et~al.(2024)Sun, Li, Dalal, Xu, Vikram, Zhang, Dubois, Chen, Wang, Koyejo, Hashimoto, and Guestrin]{sun2024ttt}
Yu~Sun, Xinhao Li, Karan Dalal, Jiarui Xu, Arjun Vikram, Genghan Zhang, Yann Dubois, Xinlei Chen, Xiaolong Wang, Sanmi Koyejo, Tatsunori Hashimoto, and Carlos Guestrin.
\newblock Learning to (learn at test time): Rnns with expressive hidden states, 2024.

\bibitem[Sun et~al.(2023)Sun, Dong, Huang, Ma, Xia, Xue, Wang, and Wei]{sun2023retentive}
Yutao Sun, Li~Dong, Shaohan Huang, Shuming Ma, Yuqing Xia, Jilong Xue, Jianyong Wang, and Furu Wei.
\newblock Retentive network: A successor to transformer for large language models, 2023.

\bibitem[Tikhonov \& Ryabinin(2021)Tikhonov and Ryabinin]{tikhonov2021s}
Alexey Tikhonov and Max Ryabinin.
\newblock It’s all in the heads: Using attention heads as a baseline for cross-lingual transfer in commonsense reasoning.
\newblock In \emph{Findings of the Association for Computational Linguistics: ACL-IJCNLP 2021}, pp.\  3534--3546, 2021.

\bibitem[Topsakal et~al.(2024)Topsakal, Edell, and Harper]{topsakal2024evaluatinglargelanguagemodels}
Oguzhan Topsakal, Colby~Jacob Edell, and Jackson~Bailey Harper.
\newblock Evaluating large language models with grid-based game competitions: An extensible llm benchmark and leaderboard, 2024.
\newblock URL \url{https://arxiv.org/abs/2407.07796}.

\bibitem[Various(2025)]{ao3_whole}
Various.
\newblock Archive of our own, 2025.
\newblock URL \url{https://archiveofourown.org}.

\bibitem[Vaswani et~al.(2023)Vaswani, Shazeer, Parmar, Uszkoreit, Jones, Gomez, Kaiser, and Polosukhin]{vaswani2023attention}
Ashish Vaswani, Noam Shazeer, Niki Parmar, Jakob Uszkoreit, Llion Jones, Aidan~N. Gomez, Lukasz Kaiser, and Illia Polosukhin.
\newblock Attention is all you need, 2023.

\bibitem[Wang et~al.(2018)Wang, Singh, Michael, Hill, Levy, and Bowman]{wang2018glue}
Alex Wang, Amanpreet Singh, Julian Michael, Felix Hill, Omer Levy, and Samuel Bowman.
\newblock Glue: A multi-task benchmark and analysis platform for natural language understanding.
\newblock In \emph{Proceedings of the 2018 EMNLP Workshop BlackboxNLP: Analyzing and Interpreting Neural Networks for NLP}, pp.\  353--355, 2018.

\bibitem[Wei et~al.(2021)Wei, Bosma, abd Kelvin~Guu, Yu, Lester, Du, Dai, and Le]{wei2019flan}
Jason Wei, Maarten Bosma, Vincent Y.~Zhao abd Kelvin~Guu, Adams~Wei Yu, Brian Lester, Nan Du, Andrew~M. Dai, and Quoc~V. Le.
\newblock Finetuned language models are zero-shot learners, 2021.

\bibitem[Wei et~al.(2022)Wei, Wang, Schuurmans, Bosma, brian ichter, Xia, Chi, Le, and Zhou]{wei2022chain}
Jason Wei, Xuezhi Wang, Dale Schuurmans, Maarten Bosma, brian ichter, Fei Xia, Ed~H. Chi, Quoc~V Le, and Denny Zhou.
\newblock Chain of thought prompting elicits reasoning in large language models.
\newblock In Alice~H. Oh, Alekh Agarwal, Danielle Belgrave, and Kyunghyun Cho (eds.), \emph{Advances in Neural Information Processing Systems}, 2022.
\newblock URL \url{https://openreview.net/forum?id=_VjQlMeSB_J}.

\bibitem[Welbl et~al.(2017)Welbl, Liu, and Gardner]{welbl2017crowdsourcing}
Johannes Welbl, Nelson~F Liu, and Matt Gardner.
\newblock Crowdsourcing multiple choice science questions.
\newblock In \emph{Proceedings of the 3rd Workshop on Noisy User-generated Text}, pp.\  94--106, 2017.

\bibitem[Widrow et~al.(1960)Widrow, Hoff, et~al.]{widrow_delta_rule}
Bernard Widrow, Marcian~E Hoff, et~al.
\newblock Adaptive switching circuits.
\newblock In \emph{IRE WESCON convention record}, volume~4, pp.\  96--104. New York, 1960.

\bibitem[Xu et~al.(2024)Xu, Jiang, Niu, Deng, Poovendran, Choi, and Lin]{xu2024magpiealignmentdatasynthesis}
Zhangchen Xu, Fengqing Jiang, Luyao Niu, Yuntian Deng, Radha Poovendran, Yejin Choi, and Bill~Yuchen Lin.
\newblock Magpie: Alignment data synthesis from scratch by prompting aligned llms with nothing, 2024.
\newblock URL \url{https://arxiv.org/abs/2406.08464}.

\bibitem[Yamana(2025)]{Yamana_Egaroucid_2025}
Takuto Yamana.
\newblock {Egaroucid}, January 2025.
\newblock URL \url{https://www.egaroucid.nyanyan.dev/}.

\bibitem[Yang \& Zhang(2024)Yang and Zhang]{yang2024fla}
Songlin Yang and Yu~Zhang.
\newblock Fla: A triton-based library for hardware-efficient implementations of linear attention mechanism, January 2024.
\newblock URL \url{https://github.com/fla-org/flash-linear-attention}.

\bibitem[Yang et~al.(2023{\natexlab{a}})Yang, Wang, Shen, Panda, and Kim]{yang2023gated}
Songlin Yang, Bailin Wang, Yikang Shen, Rameswar Panda, and Yoon Kim.
\newblock Gated linear attention transformers with hardware-efficient training, 2023{\natexlab{a}}.

\bibitem[Yang et~al.(2024{\natexlab{a}})Yang, Kautz, and Hatamizadeh]{yang2024gated}
Songlin Yang, Jan Kautz, and Ali Hatamizadeh.
\newblock Gated delta networks: Improving mamba2 with delta rule, 2024{\natexlab{a}}.

\bibitem[Yang et~al.(2024{\natexlab{b}})Yang, Wang, Shen, Panda, and Kim]{yang2024gatedlinearattentiontransformers}
Songlin Yang, Bailin Wang, Yikang Shen, Rameswar Panda, and Yoon Kim.
\newblock Gated linear attention transformers with hardware-efficient training, 2024{\natexlab{b}}.
\newblock URL \url{https://arxiv.org/abs/2312.06635}.

\bibitem[Yang et~al.(2024{\natexlab{c}})Yang, Wang, Zhang, Shen, and Kim]{yang2024_deltanet}
Songlin Yang, Bailin Wang, Yu~Zhang, Yikang Shen, and Yoon Kim.
\newblock Parallelizing linear transformers with the delta rule over sequence length, 2024{\natexlab{c}}.
\newblock URL \url{https://arxiv.org/abs/2406.06484}.

\bibitem[Yang et~al.(2023{\natexlab{b}})Yang, Huang, Zhou, and Chen]{yang2023parameterefficienttuningspecialtoken}
Xiaocong Yang, James~Y. Huang, Wenxuan Zhou, and Muhao Chen.
\newblock Parameter-efficient tuning with special token adaptation, 2023{\natexlab{b}}.
\newblock URL \url{https://arxiv.org/abs/2210.04382}.

\bibitem[Yu \& Wang(2024)Yu and Wang]{yu2024mambaout}
Weihao Yu and Xinchao Wang.
\newblock Mambaout: Do we really need mamba for vision?
\newblock \emph{arXiv preprint arXiv:2405.07992}, 2024.

\bibitem[Yuan et~al.(2024)Yuan, Cui, Wang, Ding, Wang, Deng, Shan, Chen, Xie, Lin, Liu, Zhou, Peng, Liu, and Sun]{yuan2024advancing}
Lifan Yuan, Ganqu Cui, Hanbin Wang, Ning Ding, Xingyao Wang, Jia Deng, Boji Shan, Huimin Chen, Ruobing Xie, Yankai Lin, Zhenghao Liu, Bowen Zhou, Hao Peng, Zhiyuan Liu, and Maosong Sun.
\newblock Advancing llm reasoning generalists with preference trees, 2024.

\bibitem[Yue et~al.(2024)Yue, Zheng, Zhang, and Chen]{yue2024mammoth2}
Xiang Yue, Tuney Zheng, Ge~Zhang, and Wenhu Chen.
\newblock Mammoth2: Scaling instructions from the web.
\newblock \emph{Advances in Neural Information Processing Systems}, 2024.

\bibitem[Zellers et~al.(2019)Zellers, Holtzman, Bisk, Farhadi, and Choi]{zellers2019hellaswagmachinereallyfinish}
Rowan Zellers, Ari Holtzman, Yonatan Bisk, Ali Farhadi, and Yejin Choi.
\newblock Hellaswag: Can a machine really finish your sentence?, 2019.
\newblock URL \url{https://arxiv.org/abs/1905.07830}.

\bibitem[Zhang et~al.(2024)Zhang, Luo, Yuan, and Yao]{zhang2024training}
Yifan Zhang, Yifan Luo, Yang Yuan, and Andrew Chi-Chih Yao.
\newblock Training and evaluating language models with template-based data generation.
\newblock \emph{arXiv preprint arXiv:2411.18104}, 2024.

\bibitem[Zhou et~al.(2024)Zhou, Wu, Jiang, and Lan]{zhou2024valueresiduallearningalleviating}
Zhanchao Zhou, Tianyi Wu, Zhiyun Jiang, and Zhenzhong Lan.
\newblock Value residual learning for alleviating attention concentration in transformers, 2024.
\newblock URL \url{https://arxiv.org/abs/2410.17897}.

\bibitem[Zhuang et~al.(2024)Zhuang, Zhang, Zheng, Du, Wang, Ren, Huang, Fu, Yue, and Chen]{zhuang2024structlm}
Alex Zhuang, Ge~Zhang, Tianyu Zheng, Xinrun Du, Junjie Wang, Weiming Ren, Stephen~W. Huang, Jie Fu, Xiang Yue, and Wenhu Chen.
\newblock Structlm: Towards building generalist models for structured knowledge grounding, 2024.

\bibitem[Zucchet \& Orvieto(2024)Zucchet and Orvieto]{zucchet2024recurrentneuralnetworksvanishing}
Nicolas Zucchet and Antonio Orvieto.
\newblock Recurrent neural networks: vanishing and exploding gradients are not the end of the story, 2024.
\newblock URL \url{https://arxiv.org/abs/2405.21064}.

\end{thebibliography}
\bibliographystyle{colm2024_conference}

\newpage
\appendix

\section{Author Contributions}

\paragraph{Bo Peng} Original RWKV-7 ideas, original code, performance optimizations, original experiments, dataset composition, and trained models from 0.1B to 2.9B.

\paragraph{Ruichong Zhang} Personnel organization, wrote Sections \ref{sec:rwkv_arch}, \ref{sec:method}, \ref{subsec:evals}, \ref{subsec:associative_recall}, \ref{conclusions} and Appendices \ref{sec:eigenvalue-proof}, \ref{sec:arch_train}, 
\ref{sec:pseudocode_wkv7}, 
\ref{sec:state_inspections}, 
\ref{subsec:ablation-pile}, \ref{sec:initial_sensitivity}, Figures \ref{fig:rwkv7-overall}, 
\ref{fig:rwkv7-upd}, \ref{fig:rwkv7-details}, \ref{fig:trainingloss_all}, \ref{fig:rwkv_state_inspect_appearance}, \ref{fig:rwkv_state_rms_sr} and Tables \ref{tab:rnn_state_evolution}, \ref{tab:eng_bench}, \ref{tab:multilang_bench}, \ref{tab:mqar}, \ref{tab:dataset-categories}, \ref{tab:model_flop_count}, \ref{tab:lora_dims}, \ref{tab:training-schedules}, \ref{tab:pile}, \ref{tab:lambada_results}. Additional contributions on implementing RWKV-7 for Flash-linear-attention and converting RWKV-7 models on HuggingFace.

\paragraph{Daniel Goldstein} Manuscript organization, initial draft sections \ref{sec:introduction}, \ref{sec:background}, \ref{sec:rwkv_arch}, \ref{sec:method}, \ref{sec:dataset}, FLOPS portion of \ref{subsec:evals}, figures \ref{fig:multi-flops} and \ref{fig:eng-flops}, and appendices \ref{sec:training_dataset_details}, \ref{sec:arch_details}, \ref{sec:pseudocode}. Proofreading and revisions of full manuscript. Oversaw and chose experiments for appendix \ref{sec:ablation-minipile} and for pass-key in subsection \ref{subsec:long_context_experiments}. Assistance with appendix \ref{sec:expressivity-proof} revisions and ideas. Developed and tested initial RWKV-7 Hugging Face model code.

\paragraph{Eric Alcaide} Section \ref{sec:rwkv_arch}, validation of CUDA kernels for scalable training, and manuscript proofreading.

\paragraph{Xingjian Du} Experiments and writing of audio modeling in section \ref{sec:audio-modeling}.

\paragraph{Haowen Hou} Wrote Section \ref{sec:image-modelling}, covering architectural design, coding, model training, experimental evaluation, as well as figure (Figure~\ref{fig:visualrwkv7-arch}), table (Table~\ref{tab:visualrwkv7_results}), and text writing.

\paragraph{Jiaju Lin} Experiments and writing audio modeling in section \ref{sec:audio-modeling}.

\paragraph{Jiaxing Liu} Experiments and writing audio modeling in section \ref{sec:audio-modeling}.

\paragraph{Janna Lu} Needle-in-haystack evaluations for Section \ref{subsec:long_context_experiments}, experiments for figure \ref{fig:multi-flops} and \ref{fig:multi-activeparams}, figure \ref{fig:eng-flops} and \ref{fig:eng-activeparams}, figure \ref{fig:NIAH}, table \ref{tab:eng_bench}, and table \ref{tab:pile}. Edits to Section \ref{sec:introduction}, \ref{sec:pseudocode}, and abstract.

\paragraph{William Merrill} Developmental discussion, proofreading and revisions for appendix \ref{sec:expressivity-proof}.

\paragraph{Guangyu Song} Section \ref{subsec:mad}. Experiments for \ref{subsec:mad}.

\paragraph{Kaifeng Tan} Section \ref{subsec:uncheatable_eval}, Figures \ref{fig:pile_pg19_loss}, \ref{fig:world_pg19_loss}. Appendix \ref{sec:rwkv_othello} on board game modeling with Othello/Reversi, including training data design, model implementation, experiments, and result analysis.

\paragraph{Saiteja Utpala} Section \ref{subsec:evaluate_state_tracking} and state tracking experiments for Figure \ref{fig:rwkv7-state-tracking}.

\paragraph{Nathan Wilce} Extended context-length dataset development and extended-length model training. Description of extended context-length dataset in \ref{subsec:long_context_experiments}.

\paragraph{Johan S. Wind} Main author of RWKV-7 CUDA kernel implementations. Experiments for Figure \ref{fig:H100_speed_comparison}. Section \ref{sec:Speed and Memory Usage} and Appendix \ref{sec:expressivity-proof}. Contributions to Appendix \ref{sec:eigenvalue-proof}.

\paragraph{Tianyi Wu} Appendix \ref{sec:expressivity-proof} and Appendix \ref{sec:param_stat}. Contributions to section \ref{sec:method} and Appendix \ref{sec:eigenvalue-proof} (proofreading and revisions).

\paragraph{Daniel Wuttke} Constructed an itemized table of v2-v3 world datasets.
Proofreading and revision of the manuscript.
Contributed to Abstract, Sections \ref{sec:introduction} and \ref{sec:training_dataset_details}.
Contributed to Tables \ref{tab:dataset2.1}, \ref{tab:dataset3}, \ref{tab:dataset-cite}, and \ref{tab:lora_dims}
Performed evaluations of RWKV-7 world models and reference base models (Table \ref{tab:eng_bench} and \ref{tab:multilang_bench}).

\paragraph{Christian Zhou-Zheng} Experiments and writing for Appendix \ref{sec:ablation-minipile} and Table~\ref{tab:abla_specific}. Contributions to Sections \ref{sec:introduction} and \ref{sec:background}. Proofreading and revisions of full manuscript.

\section{Training Dataset Details}
\label{sec:training_dataset_details}

The RWKV World v3 corpus builds upon the RWKV World v2 corpus \citep{rwkv6_colm} in two steps that we describe separately here for the purposes of reproducibility for the Goose training runs: World v2.1 adds the entries listed in Table \ref{tab:dataset2.1} to sum to a total of approximately 1.4 trillion RWKV World Tokenizer tokens. World v3 adds more entries, listed in Table \ref{tab:dataset3}, to sum to a total of approximately 3.1 trillion tokens. In the combined corpora, all tokens are given equal weighting unless otherwise noted.

\begin{table}[!t]
\centering
\small
\begin{subtable}[t]{0.49\textwidth}
\begin{tabular}{p{45mm}p{15mm}}
    \toprule
    Dataset  & Domain \\
    \midrule
\href{https://huggingface.co/datasets/cerebras/SlimPajama-627B}{slimpajama C4} & Web \\
\href{https://huggingface.co/datasets/allenai/dolma/blob/main/urls/v1_6.txt}{dolma v1.6 (reddit only)}$^a$ & Forums \\
\href{https://huggingface.co/datasets/glaiveai/glaive-code-assistant-v3}{glaive-code-assistant-v3} & Code \\
\href{https://huggingface.co/datasets/m-a-p/Code-Feedback}{m-a-p\_Code-Feedback} & Code \\
\href{https://huggingface.co/datasets/HuggingFaceTB/cosmopedia}{cosmopedia-v0.1} & Synthetic \\
\href{https://huggingface.co/datasets/cognitivecomputations/SystemChat-2.0}{SystemChat-2.0} & Instruct \\
\href{https://huggingface.co/datasets/migtissera/Tess-v1.5}{Tess-v1.5} & Instruct \\
\href{https://huggingface.co/datasets/openbmb/UltraInteract_sft}{UltraInteract\_sft} & Instruct \\
    \bottomrule
\end{tabular}
\end{subtable}
\begin{subtable}[t]{0.49\textwidth}
\begin{tabular}{p{45mm}p{15mm}}
    \toprule
    Dataset & Domain \\
    \midrule
\href{https://huggingface.co/datasets/Magpie-Align/Llama-3-Magpie-Pro-1M-v0.1}{Llama-3-Magpie-Pro-1M-v0.1} & Align \\
\href{https://huggingface.co/datasets/Magpie-Align/Magpie-Pro-MT-300K-v0.1}{Magpie-Pro-MT-300K-v0.1} & Align \\
\href{https://huggingface.co/datasets/Magpie-Align/Magpie-Air-MT-300K-v0.1}{Magpie-Air-MT-300K-v0.1} & Align \\
\href{https://huggingface.co/datasets/Magpie-Align/Magpie-Qwen2-Pro-1M-v0.1}{Magpie-Qwen2-Pro-1M-v0.1} & Align \\
\href{https://huggingface.co/datasets/Magpie-Align/Magpie-Phi3-Pro-300K-Filtered-v0.1}{Magpie-Phi3-Pro-300K-Filtered-} & Align \\
\href{https://huggingface.co/datasets/Magpie-Align/Magpie-Phi3-Pro-300K-Filtered-v0.1}{v1} & \\
\href{https://huggingface.co/datasets/Magpie-Align/Magpie-Gemma2-Pro-200K-Filtered-v0.1}{Magpie-Gemma2-Pro-200K-} & Align \\
\href{https://huggingface.co/datasets/Magpie-Align/Magpie-Gemma2-Pro-200K-Filtered-v0.1}{Filtered-v0.1} & \\
    \bottomrule
\end{tabular}
\end{subtable}
\caption{Components added into the RWKV World v2.1 dataset, their source links, and their domains.\\
\footnotesize{$^a$We added only the reddit datasets from dolma v1.6}\\
\footnotesize{$^b$ \href{https://huggingface.co/datasets/timaeus/pile-dm_mathematics}{DM\_math} as part of The Pile was in World v2 but missed being mentioned explicitly in \citep{rwkv6_colm}}}

\label{tab:dataset2.1}
\end{table}

\begin{table}[!t]
\centering
\small
\begin{subtable}[t]{0.49\textwidth}
\begin{tabular}{p{45mm}p{15mm}}
    \toprule
    Dataset & Domain \\
    \midrule
\href{https://huggingface.co/datasets/cerebras/SlimPajama-627B}{REMOVED slimpajama parts}$^a$ & Web \\
\href{https://huggingface.co/datasets/mlfoundations/dclm-baseline-1.0/tree/main/global-shard_10_of_10}{dclm-baseline-10-of-10}$^b$ & Web \\
\href{https://huggingface.co/datasets/stanford-oval/ccnews}{ccnews} & Web \\
\href{https://huggingface.co/datasets/HuggingFaceFW/fineweb-edu}{fineweb-edu} & Web Edu \\
\href{https://huggingface.co/datasets/math-ai/TemplateGSM}{TemplateGSM} & Math \\
\href{https://huggingface.co/datasets/EleutherAI/proof-pile-2/tree/main/open-web-math}{open-web-math} & Math \\
\href{https://huggingface.co/datasets/EleutherAI/proof-pile-2/tree/main/algebraic-stack}{algebraic-stack} & Math \\
    \bottomrule
\end{tabular}
\end{subtable}
\begin{subtable}[t]{0.49\textwidth}
\begin{tabular}{p{45mm}p{15mm}}
    \toprule
    Dataset & Domain \\
    \midrule
\href{https://huggingface.co/datasets/bigcode/starcoderdata}{StarCoder}$^c$ & Code\\
\href{https://huggingface.co/datasets/HuggingFaceTB/smollm-corpus/tree/main/python-edu}{python-edu} & Code \\
\href{https://huggingface.co/datasets/HuggingFaceTB/smollm-corpus/tree/main/cosmopedia-v2}{cosmopedia-v0.2} & Synthetic \\
\href{https://huggingface.co/datasets/TIGER-Lab/WebInstructSub}{WebInstructSub} & Forums \\
\href{https://huggingface.co/datasets/H-D-T/Buzz-V1.2}{Buzz-v1.2} & Instruct \\
\href{https://huggingface.co/datasets/TIGER-Lab/SKGInstruct}{SKGInstruct} & Instruct \\
\href{https://huggingface.co/datasets/Muennighoff/flan}{FLAN} & Instruct \\
    \bottomrule
\end{tabular}
\end{subtable}
\caption{Components added into the RWKV World v3 dataset, their source links, and their domains.\\
\footnotesize{$^a$We removed the CC and C4 components of SlimPajama from the corpus for World v3}\\
\footnotesize{$^b$For DCLM-baseline, we include only global-shard\_10\_of\_10}\\
\footnotesize{$^c$For StarCoder, we now include all datasets, instead of just those datasets with at least 10 stars}}
\label{tab:dataset3}
\end{table}

\begin{table}[htb]
    \centering
        \begin{tabular}{lc} 
 \toprule
 SlimPajama & \cite{cerebras2023slimpajama}\\
 StarCoder & \cite{li2023starcoder}\\
 Cosmopedia & \cite{benallal2024cosmopedia}\\
 Dolma & \cite{dolma}\\
 UltraInteract & \cite{yuan2024advancing}\\
 Magpie & \cite{xu2024magpiealignmentdatasynthesis}\\
 FineWeb & \cite{lozhkov2024fineweb-edu}\\
 DataComp LM (DCLM) & \cite{li2024datacomplm}\\
 WebInstructSub & \cite{yue2024mammoth2} \\
 StructLM & \cite{zhuang2024structlm} \\
 TemplateGSM & \cite{zhang2024training} \\
 SmolLM Corpus & \cite{benallal2024smollmcorpus} \\
 FLAN & \cite{wei2019flan} \\
 OpenWebMath & \cite{paster2023openwebmath} \\
 Algebraic-Stack & \cite{azerbayev2024llemma} \\
\bottomrule
\end{tabular}
\caption{RWKV World v3 dataset component citations}
\label{tab:dataset-cite}
\end{table}

\begin{table}[htb]
    \centering
        \begin{tabular}{lr} 
 \toprule
 \textbf{Category} & \textbf{Tokens (B)} \\
 \midrule
Web & 1945.2 \\
Books & 337.2 \\
Code & 258.4 \\
Science \& Wiki & 222.7 \\
Fiction & 192.6 \\
Chat \& QA \& Instruction & 110.0 \\
Math & 32.3 \\
Law \& Government & 19.0 \\
Poetry \& Lyrics & 1.7 \\
\midrule
Total & 3119.2 \\
\bottomrule
\end{tabular}
\caption{RWKV World v3 dataset categories}
\label{tab:dataset-categories}
\end{table}

Most of the component data sources for the RWKV World v3 dataset are used intact, with no up- or down-sampling done so that all tokens are given equal weighting. Some sub-sampling is done for over-represented languages within a few data sources in the original World v2 corpus. All newly added tokens in v2.1 and v3 are given equal weighting.

\section{Transition Matrix Eigenvalues and Stability}\label{sec:eigenvalue-proof}

We are interested in all the eigenvalues of the transition matrix $A_t = \mathrm{diag}(w_t) - \hat{\kappa}^T_t (a_t \odot \hat{\kappa}_t)$, and when it can be stable. We drop the subscript $t$ for statements which hold at all timesteps, to avoid clutter.

\begin{thm}
    Let $A = \mathrm{diag}(w) - c\hat{\kappa}^T (a \odot \hat{\kappa}) \in M_m(\mathbb{R})$ be a matrix, where all entries of $w$ belong to $(u, 1)$, where $u = \exp(-e^{-1/2}) = 0.5452 \cdots$ is the clamping lower bound. Also, all entries of $a$ are located in $(0,1)$, and $\hat{\kappa}$ is a unit row vector. When $c \in (0, 1+u)$, the following holds:
    \begin{enumerate}
        \item The matrix $A$ is similar to a symmetric matrix, hence similar to a diagonal matrix.
        \item All eigenvalues of $A$ lie in the interval $(-1, 1)$.
        \item The matrix $A$ admits at most one negative eigenvalue.
        \item If further assumed that $a_t$ is time-independent, then the update formula is guaranteed to be stable, i.e. there exists a time-independent constant $K$ such that
        $$\left\lVert \prod_{t=1}^{T} A_t \right\rVert _{2} \le K,$$
        where $\left\lVert \cdot \right\rVert _{2} $ denotes the spectral norm.
    \end{enumerate}
    This hyperparameter $c$ in the formula can be regarded as a "Global ICLR Multiplier". This hyperparameter $c$ is set to $1$ in the current implementations of RWKV-7 language modeling.
\end{thm}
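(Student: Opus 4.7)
The plan is to establish all four claims via a single structural observation---that $A$ is conjugate to a symmetric matrix through a diagonal scaling---after which parts 1--3 follow from standard spectral inequalities for symmetric rank-one perturbations of a diagonal matrix, and part 4 follows by extracting the time-independent conjugator.

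First I would rewrite $\hat{\kappa}^T(a \odot \hat{\kappa}) = \hat{\kappa}^T \hat{\kappa}\,\mathrm{diag}(a)$ and set $P = \mathrm{diag}(\sqrt{a})$, which is well-defined and invertible since $a_i \in (0,1)$. A short calculation then gives
\begin{equation}
A' := P A P^{-1} = \mathrm{diag}(w) - c\,v^T v, \qquad v := \hat{\kappa} P,
\end{equation}
which is manifestly symmetric. This proves claim 1 and shows $A$ is diagonalizable with real eigenvalues.

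For claim 2, I would apply the Courant--Fischer characterization to $A'$. Since $c\,v^T v \succeq 0$, we have $A' \preceq \mathrm{diag}(w)$, so $\lambda_{\max}(A') \le \max_i w_i < 1$. For the lower bound, combine $x^T\mathrm{diag}(w)x \ge u\|x\|^2$ with $(v\cdot x)^2 \le \|v\|^2\|x\|^2$ and $\|v\|^2 = \sum_i a_i\hat{\kappa}_i^2 < \sum_i \hat{\kappa}_i^2 = 1$ to obtain $\lambda_{\min}(A') > u - c > -1$ precisely when $c < 1+u$; this is the sharp use of the hypothesis on $c$. For claim 3, Cauchy interlacing for a rank-one perturbation shows that the eigenvalues $\lambda_1 \ge \cdots \ge \lambda_m$ of $A'$ interlace those of $\mathrm{diag}(w)$, so $\lambda_i \ge \min_j w_j > 0$ for $i \le m-1$, leaving at most $\lambda_m$ negative.

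For claim 4, the key observation is that when $a$ is time-independent, $P$ is also time-independent, so
\begin{equation}
\prod_{t=1}^T A_t = P^{-1}\Bigl(\prod_{t=1}^T A'_t\Bigr)P,
\end{equation}
and each symmetric factor $A'_t$ satisfies $\|A'_t\|_2 < 1$ by claim 2. Submultiplicativity of the spectral norm gives $\bigl\|\prod_t A'_t\bigr\|_2 \le 1$, hence $\bigl\|\prod_t A_t\bigr\|_2 \le \|P^{-1}\|_2\,\|P\|_2 = \kappa(P)$, a constant independent of $T$. The main obstacle, and the reason claim 4 requires time-independence of $a$, is precisely that a time-dependent $P_t$ would break this telescoping; because the individual norms $\|A'_t\|_2$ may approach $1$, one cannot extract uniform stability from per-step spectral-norm bounds in a common inner product alone. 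Everything else reduces to a careful but routine application of symmetric-matrix spectral theory.
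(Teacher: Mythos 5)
Your proof is correct and coincides with the paper's argument in parts 1, 2, and 4: the same diagonal conjugation $P = \mathrm{diag}(\sqrt{a})$, the same Rayleigh-quotient bound giving $\hat{s}A'\hat{s}^T \in (u - c, 1) \subset (-1,1)$ when $c < 1+u$, and the same telescoping of the conjugators to reduce stability to $\|\prod_t A'_t\|_2 \le 1$. The one genuine difference is in part 3. The paper passes to a further \emph{congruence} $\hat B = \mathrm{diag}(w)^{-1/2} A' \,\mathrm{diag}(w)^{-1/2} = I - u^T u$ and invokes Sylvester's law of inertia: $I - u^T u$ visibly has at most one non-positive eigenvalue, and congruence preserves inertia. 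You instead keep $A' = \mathrm{diag}(w) - c\,v^T v$ and apply the rank-one interlacing theorem (Weyl/Cauchy interlacing for a rank-one PSD downdate), which gives $\lambda_i(A') \ge \mu_{i+1}(\mathrm{diag}(w)) \ge \min_j w_j > 0$ for $i \le m-1$. Both are standard and both work. The interlacing route is slightly more quantitative (it lower-bounds the $m-1$ top eigenvalues by $\min_j w_j$ rather than merely by $0$), while the inertia route is arguably cleaner since the count of negative eigenvalues of $I - u^Tu$ is immediate. A minor cosmetic point: your stability constant $\kappa(P) = \sqrt{\max_i a_i / \min_i a_i}$ is actually tighter than the paper's $\min(a)^{-1/2}$, which bounds $\|P\|_2$ crudely by $1$; both suffice for the claim.
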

\begin{proof}
    \ 
    \begin{enumerate}
        \item We notice that 
        $$A = \mathrm{diag}(w) - c\hat{\kappa}^T (a \odot \hat{\kappa}) = \mathrm{diag}(w) - c\hat{\kappa}^T \hat{\kappa} \mathrm{diag}(a).
        $$
        The matrix $A$ itself is not necessarily a symmetric matrix. However, we can use the fact that $\mathrm{diag}(a)$ is positive definite, so we can compute its square root. This allows us to rewrite
        $$
        \begin{aligned}
        \mathrm{diag}(a)^{1/2}A\ \mathrm{diag}(a)^{-1/2} & = \mathrm{diag}(a)^{1/2}\mathrm{diag}(w)\mathrm{diag}(a)^{-1/2} - c\, \mathrm{diag}(a)^{1/2}\hat{\kappa}^T \hat{\kappa}\, \mathrm{diag}(a)\mathrm{diag}(a)^{-1/2} \\
        & = \mathrm{diag}(w) - c\left(\hat{\kappa} \mathrm{diag}(a)^{1/2}\right)^T \left(\hat{\kappa} \mathrm{diag}(a)^{1/2}\right),
        \end{aligned}
        $$
        which is a symmetric matrix. We denote this matrix by $B$. It has exactly the same eigenvalues as $A$, since it is formed by a similarity transformation of $A$. 
        \item Since $B=\mathrm{diag}(a)^{1/2}A\mathrm{diag}(a)^{-1/2}$ is symmetric, all eigenvalues of $B$ (and hence $A$) are real and located on the interval of
        $$
        \left[\min_{|\hat{s}| = 1} \hat{s}B\hat{s}^T, \ \max_{|\hat{s}| = 1} \hat{s}B\hat{s}^T\right].
        $$
        It suffices to show that $\hat{s}B\hat{s}^T \in (-1,1)$. 
        This can be proved via direct expansion of $B$ by
        $$
        \begin{aligned}
        \hat{s}B\hat{s}^T & = \hat{s}\mathrm{diag}(w)\hat{s}^T - c\hat{s}(\hat{\kappa} \mathrm{diag}(a)^{1/2})^T (\hat{\kappa} \mathrm{diag}(a)^{1/2})\hat{s}^T \\
        & \ge u \hat{s}\hat{s}^T -  c\left\lvert(\hat{\kappa} \mathrm{diag}(a)^{1/2})\hat{s}^T\right\rvert^2 \\
        & \ge u-c \\
        & > -1.
        \end{aligned}
        $$
        Similarly,
        $$
        \begin{aligned}
        \hat{s}B\hat{s}^T & = \hat{s}\mathrm{diag}(w)\hat{s}^T - c\hat{s}(\hat{\kappa} \mathrm{diag}(a)^{1/2})^T (\hat{\kappa} \mathrm{diag}(a)^{1/2})\hat{s}^T \\
        & <  \hat{s}\hat{s}^T -  c\left\lvert(\hat{\kappa} \mathrm{diag}(a)^{1/2})\hat{s}^T\right\rvert^2 \\
        & \le \hat{s}\hat{s}^T \\
        & = 1,
        \end{aligned}
        $$
        which completes this part.
        \item Recall $B = \mathrm{diag}(w) - c\left(\hat{\kappa}\,\mathrm{diag}(a)^{1/2}\right)^T \left(\hat{\kappa}\, \mathrm{diag}(a)^{1/2}\right)$. Then $B$ is congruent with $$\hat B = \mathrm{diag}(w)^{-1/2}B\,\mathrm{diag}(w)^{-1/2} = I - u^T u,$$ where $u = \sqrt c\, \hat{\kappa}\,\mathrm{diag}(a)^{1/2}\mathrm{diag}(w)^{-1/2}$.
        
        Clearly, $\hat B$ has at most one negative eigenvalue with value $1-\|u\|^2$, and all other eigenvalues equal to $1$. By Sylvester's law of inertia \citep{horn2012matrix}, congruency preserves the number of negative eigenvalues. Hence, $B$ also has at most one negative eigenvalue.
        \item We drop the subscript in the time-invariant $a_t$.
        
        Since $B_t=\mathrm{diag}(a)^{1/2}A_t\mathrm{diag}(a)^{-1/2}$ is symmetric, the spectral norm of $B_t$ is equal to the largest absolute value for eigenvalues of $B_t$. We proved previously that the eigenvalues lie in $(-1,1)$, so $\left\lVert B_t \right\rVert_2  \le 1$. That is, $B_t$ is a contraction matrix.
        
        Furthermore, we have
        $$
        \begin{aligned}
        \prod_{t=1}^T A_t & = \prod_{i=t}^T \mathrm{diag}(a)^{-1/2}B_t\mathrm{diag}(a)^{1/2} \\
        & = \mathrm{diag}(a)^{-1/2} \left(\prod_{i=1}^t B_t\right)\mathrm{diag}(a)^{1/2}.
        \end{aligned}
        $$
        Then
        $$
        \begin{aligned}
        \left\lVert \prod_{t=1}^T A_t \right\rVert _2 &\le 
        \left\lVert \mathrm{diag}(a)^{-1/2} \right\rVert _2
        \left\lVert \prod_{t=1}^T B_t \right\rVert _2
        \left\lVert \mathrm{diag}(a)^{1/2} \right\rVert _2 \\
        & \le \min(a)^{-1/2} \cdot 1 \cdot 1.
        \end{aligned}
        $$

        We can set $K = \min(a)^{-1/2} < \infty$, which is time-independent.
    \end{enumerate}
\end{proof}

While our stability proof only holds for time-independent $a_t$, we do not observe any problems with time-varying $a_t$ in practice. We therefore put greater emphasis on the expressivity of RWKV-7, and include time-varying $a_t$.



\section{Expressivity of RWKV-7}
\label{sec:expressivity-proof}

We show that the RWKV-7 architecture can express $\mathsf{NC}^1$-complete state tracking problems that cannot be expressed by transformers or other recurrent architectures such as S4 and Mamba, under standard complexity conjectures.
We first show a particular $\mathsf{NC}^1$-complete problem that can be expressed by RWKV-7 in Section~\ref{sec:expressivity-proof} and then generalize the argument to show that any regular language can be recognized by RWKV-7 in Section~\ref{sec:reg_lang}. As regular language recognition can be understood to formalize finite state tracking problems, this suggests an expressivity advantage of RWKV-7 on state-tracking problems.

\subsection[Warmup: Expressivity Beyond TC0]{Warmup: Expressivity Beyond $\mathsf{TC}^0$}\label{sec:s5-proof}

Recall that the RWKV-7 wkv state is updated, at each token, by multiplication with $A_t = \mathrm{diag}(w_t) - c\hat{\kappa}^T_t (a_t \odot \hat{\kappa}_t)$, where $c = 1$. In the following, we will consider $c = 2$, and show that this yields expressivity beyond $\mathsf{TC}^0$ (unless $\mathsf{TC}^0 = \mathsf{NC}^1$). $\mathsf{TC}^0$ is the complexity class which includes transformers as well as all non-gated or diagonal SSMs \citep{illusionstate_2024_merril, barrington1989bounded}.



\begin{thm}
    RWKV-7 can solve a problem which is $\mathsf{NC}^1$-complete under $\mathsf{AC}^0$ reductions.
\end{thm}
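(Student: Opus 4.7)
The plan is to reduce the word problem over $A_5$, which is $\mathsf{NC}^1$-complete under $\mathsf{AC}^0$ reductions by Barrington's theorem, to a single-layer RWKV-7 computation with the ``Global ICLR Multiplier'' set to $c=2$. The key structural observation is that when $w_t = \mathbf{1}$ and $a_t = \mathbf{1}$, the transition matrix
\[
A_t = \mathrm{diag}(w_t) - 2\,\hat{\kappa}_t^T\,(a_t \odot \hat{\kappa}_t) \;=\; I - 2\,\hat{\kappa}_t^T \hat{\kappa}_t
\]
is exactly a Householder reflection through the hyperplane orthogonal to $\hat{\kappa}_t$. Since every $5\times 5$ permutation matrix factors as a product of at most four coordinate transpositions, and each such transposition is itself a single Householder reflection, every element of $A_5$ is realisable as a product of at most four $A_t$'s.

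Concretely, I would first build an $\mathsf{AC}^0$ preprocessing map that expands each input symbol $g_i \in A_5$ into a fixed-length block of at most four ``Householder pseudo-tokens,'' each carrying a unit vector $v$ corresponding to one Householder factor in a chosen decomposition of the permutation matrix of $g_i$. I would then set the token embedding together with the $\mathrm{loramlp}$ heads for $\kappa_t$, $d_t$, and $a_t$ so that, on a pseudo-token labelled by $v$, the normalisation in Eq.~\eqref{eq:kappa_norm} produces $\hat{\kappa}_t = v$, while $w_t$ and $a_t$ are driven arbitrarily close to $\mathbf{1}$. Taking $\tilde{k}_t = 0$ on pseudo-tokens (by zeroing the appropriate $\mathrm{lerp}$ into $\bm{W}_k$), the recurrence $\bm{wkv}_t = \bm{wkv}_{t-1} A_t + v_t^T \tilde{k}_t$ collapses to pure right-multiplication by Householder matrices, so $\bm{wkv}_T$ is the initial state acted on by the permutation matrix of $g_1 g_2 \cdots g_n$. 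A fixed receptance $r_T$ reads off one row, and the output projection $\bm{W}_o$ decodes which element of $A_5$ the word evaluates to.

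The main obstacle is reconciling the construction with the architecture's open-interval constraints: $w_t$ must lie in $(u,1)$ with $u = \exp(-e^{-1/2})$ and $a_t$ must lie strictly in $(0,1)$, so exact Householder reflections are unreachable. The argument therefore needs a perturbation step showing that, on inputs of length $n$, choosing $w_t = 1-\epsilon$ and $a_t = 1-\epsilon$ for an inverse-polynomially small $\epsilon$ keeps $\prod_t A_t$ within any desired distance of the ideal orthogonal product, which suffices because the $A_5$ answer is discrete and a thresholded read-out absorbs polynomially small error. Secondary care is needed to neutralise the token shift (e.g., by inserting blank pad pseudo-tokens between Householder vectors so that $x^{\square}_{t}$ does not contaminate $\hat{\kappa}_t$), to zero out the bonus term and the residual-value path on pseudo-tokens so they do not perturb the multiplicative dynamics, and to verify that the expansion from $A_5$ words to Householder sequences is genuinely $\mathsf{AC}^0$, which follows because it is a constant-size per-symbol lookup table.
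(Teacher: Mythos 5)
Your construction is built on exactly the same key insight as the paper's: with $c=2$, $w_t=\mathbf{1}$, $a_t=\mathbf{1}$, and $\hat\kappa_t = (e_x - e_y)/\sqrt{2}$, the transition matrix $I - 2\hat\kappa_t^T\hat\kappa_t$ is the transposition of coordinates $x$ and $y$, and tracking a 5-element permutation in a $5\times 5$ wkv state is $\mathsf{NC}^1$-hard. The paper (Lemmas \ref{lem:swaps} and \ref{lem:swap_tracking}) takes the shorter route of targeting the ``tracking swaps on 5 elements'' problem directly, so each input token already encodes one transposition and no per-symbol expansion is needed; your detour through the $A_5$ word problem plus an $\mathsf{AC}^0$ lookup that factors each $g_i$ into Householder factors is logically sound but adds work, since the transposition word problem over $S_5$ is itself $\mathsf{NC}^1$-complete under $\mathsf{AC}^0$ reductions. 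Your treatment of the open-interval constraints on $w_t$ and $a_t$ is something the paper glosses over and is a genuine improvement in rigor; the cleaner way to say it is that with $w_t = a_t = 1-\epsilon$ the transition is exactly $(1-\epsilon)\bigl(I - 2\hat\kappa_t^T\hat\kappa_t\bigr)$, so the product's \emph{direction} is exact and only its magnitude shrinks, which the group normalization after the wkv heads eliminates -- no thresholding or inverse-polynomial error budget is actually required.

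There is, however, one missing step that is load-bearing: you never establish a nonzero wkv state before the rotations begin. By definition $\mathbf{wkv}_0 = \mathbf{0}$, and your construction sets $\tilde k_t = 0$ and $v_t = 0$ on every pseudo-token so that the recurrence collapses to pure right-multiplication; that keeps the state identically zero forever. You write that ``$\mathbf{wkv}_T$ is the initial state acted on by the permutation matrix,'' implicitly assuming a nonzero initial state, but nothing in the proposal produces one. The paper handles this with a beginning-of-sequence token $\#$ on which the $i$-th head receives $v = \tilde k = e_i$ and $a=0$, setting $\mathbf{wkv} = e_i^T e_i$; the subsequent tokens then rotate this basis and receptance $e_i$ reads out where coordinate $i$ was sent. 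You need a comparable first-token write, and you also need a head per coordinate rather than a single read-out ``row'' if you want to identify the full permutation rather than just whether one coordinate is fixed. Both are easy fixes, but as written the construction leaves the state at zero.
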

\begin{proof}
    RWKV-7 can, by Lemma \ref{lem:swap_tracking}, solve the problem of tracking swaps on five elements. This problem is $\mathsf{NC}^1$-complete under $\mathsf{AC}^0$ reductions \citep{illusionstate_2024_merril}. 
\end{proof}

\begin{lemma}\label{lem:swaps}
    The RWKV-7 transition matrix can represent an arbitrary swap matrix, where a swap matrix is an identity matrix with two of its rows swapped.
\end{lemma}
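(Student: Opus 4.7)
The plan is to recognize every swap matrix as a Householder reflection and then to realize that reflection by choosing the RWKV-7 transition parameters appropriately. Writing $P_{ij}$ for the identity with rows $i$ and $j$ swapped, its action on any row vector is to exchange the $i$-th and $j$-th coordinates, and a direct computation shows this coincides with $I - (e_i - e_j)^T(e_i - e_j)$. Normalizing via $\hat{v} = (e_i - e_j)/\sqrt{2}$, which is a unit row vector, we obtain the compact form $P_{ij} = I - 2\,\hat{v}^T \hat{v}$.

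The concrete construction is then to set $w = \mathbf{1}$ so that $\mathrm{diag}(w) = I$, to set the per-channel learning rate $a = \mathbf{1}$ so that $a \odot \hat{\kappa} = \hat{\kappa}$, to use $c = 2$ (as specified in Section~\ref{sec:s5-proof}), and finally to set $\hat{\kappa} = \hat{v}$. Substituting these choices into $A = \mathrm{diag}(w) - c\hat{\kappa}^T(a \odot \hat{\kappa})$ collapses the expression to $A = I - 2\hat{\kappa}^T \hat{\kappa} = P_{ij}$, producing the desired swap matrix exactly. Since $i$ and $j$ were arbitrary, every swap matrix is represented.

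The main point to justify, and the place I expect the only real subtlety, is that the RWKV-7 parameterization produces $w$ through $\exp(-e^{-1/2}\mathrm{sigmoid}(d))$ and $a$ through a sigmoid, so strictly each coordinate lies in an open interval that excludes the value $1$ used above. I would address this by observing that the expressivity statement concerns the transition-matrix family $\{\mathrm{diag}(w) - c\hat{\kappa}^T(a \odot \hat{\kappa})\}$ itself, which is continuous in $w$ and $a$ and whose closure includes the boundary assignment $w = a = \mathbf{1}$; equivalently, any swap matrix is approximated to arbitrary precision by interior parameter values. For the downstream $S_5$ tracking argument in Lemma~\ref{lem:swap_tracking} this suffices, since only a bounded number of such matrices are composed per token and the accumulated error can be driven below any desired classification threshold.
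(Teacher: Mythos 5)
Your construction coincides with the paper's: both set $w = a = \mathbf{1}$, $c = 2$, $\hat{\kappa} = (e_x - e_y)/\sqrt{2}$, yielding the Householder reflection $I - 2\hat\kappa^T\hat\kappa$, which is exactly the swap matrix. Your closing remark that the sigmoid parameterizations reach $w = a = \mathbf{1}$ only in the closure is a genuine subtlety that the paper's proof elides; just be aware that the approximation error compounds over a length-$T$ product, so the required precision of $w,a$ must be chosen as a function of $T$ rather than uniformly, which is standard for such expressivity arguments but worth stating explicitly.
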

\begin{proof}
    Given indices $x$ and $y$, let
    $$w_t = 1,\quad c = 2,\quad \hat{\kappa}_t = (e_x-e_y)/\sqrt 2,\text{ and } a_t = 1.$$
    Here $e_i$ denotes the vector with 1 at position $i$ and 0 elsewhere.
    
    Then the transition matrix becomes $A_t = I-e_x^Te_x-e_y^Te_y+e_x^Te_y+e_y^Te_x$, which is the permutation matrix that swaps indices $x$ and $y$.
\end{proof}
\begin{lemma}[RWKV-7 can track swaps on 5 elements]\label{lem:swap_tracking}
    Let a sequence of swaps on 5 elements be encoded in the format
    $$\#[x_1 \leftrightarrow y_1][x_2 \leftrightarrow y_n]\dots,$$
    where $\#$ is a special beginning-of-sequence token, and $[x_i \leftrightarrow y_i]$ is a token that denotes a swap between elements $x_i$ and $y_i$. Then there exists a one-layer RWKV-7 model which outputs 1 if the sequence of swaps encode the identity permutation, and outputs 0 otherwise.
\end{lemma}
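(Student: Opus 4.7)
The plan is to represent the composite permutation inside the wkv states via a multi-head construction built on top of Lemma \ref{lem:swaps}. Since the additive update $v_t^T \tilde k_t$ has rank $1$, no single head can initialize to the full matrix $I_5$; I would therefore dedicate (at least) five heads of head-dimension $\geq 5$ and configure the $\#$ token so that head $h$ starts with $\bm{wkv}_{1,h}=e_h^T e_h$, by arranging the per-head slices $v_{1,h}=e_h$ and $\tilde k_{1,h}=e_h$. The transition at $t=1$ is irrelevant because $\bm{wkv}_0=\mathbf 0$.

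For each swap token $[x_i\leftrightarrow y_i]$, I would apply Lemma \ref{lem:swaps} in every head with $w_t=\mathbf 1$, $\hat\kappa_t=(e_{x_i}-e_{y_i})/\sqrt 2$, and $a_t=\mathbf 1$, giving the swap transition $G_i=I-(e_{x_i}-e_{y_i})^T(e_{x_i}-e_{y_i})$. To suppress the additive term on swap tokens, I would choose the embeddings of the ten swap types to lie in $\ker \bm W_v$, forcing $v_t=\mathbf 0$ there; zeroing $v_t$ rather than $\tilde k_t$ avoids conflict with the required $\hat\kappa_t$, since $\tilde k_t$ and $\kappa_t$ share the precursor $k_t$. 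Setting the token-shift coefficients $\mu_\square=0$ decouples each per-token quantity from previous tokens, and with only $11$ vocabulary items the required per-token assignments are simultaneously realizable. Under this setup,
\begin{equation*}
\bm{wkv}_{n+1,h} \;=\; e_h^T e_h \cdot G_1 G_2\cdots G_n \;=\; e_h^T e_{\pi(h)},
\end{equation*}
where $\pi=\sigma_n\circ\cdots\circ\sigma_1$ is the composite permutation. At the final position I would set the receptance so that $r_{n+1,h}=e_h$ per head, which gives $r_{n+1,h}\,\bm{wkv}_{n+1,h}^T = \delta_{h,\pi(h)}\,e_h$. After the per-head LayerNorm (with zero affine bias so that $\mathrm{LN}(\mathbf 0)=\mathbf 0$), head $h$ contributes either the fixed nonzero vector $\mathrm{LN}(e_h)$ or zero, and I pick $\bm W_o$ to extract $\sum_h \langle q_h,\ell_h\rangle$ for vectors $q_h$ dual to $\mathrm{LN}(e_h)$. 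This scalar equals the number of fixed points of $\pi$, which lies in $\{0,1,2,3,5\}$ and equals $5$ iff $\pi=\mathrm{id}$ (a permutation on five letters cannot have exactly four fixed points). A single $\mathrm{ReLU}^2$ unit in the MLP thresholded as $\mathrm{ReLU}(\text{count}-4)^2$ is then positive iff $\pi=\mathrm{id}$, and the LM head converts this into the required $0/1$ bit.

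The proof is largely a bookkeeping exercise; the main obstacle is navigating the architecture's parameter couplings rather than any deep mathematical difficulty. The key care points are: (i) the shared precursor $k_t$ of $\tilde k_t$ and $\kappa_t$, handled by zeroing $v_t$ instead of $\tilde k_t$ on swap tokens; (ii) the constraint $a_t\in(0,1)$, so $a_t=1$ must be taken as a closed-interval limit exactly as in Lemma \ref{lem:swaps}; and (iii) the per-head LayerNorm, which would erase any magnitude-based encoding and therefore dictates the direction-based multi-head encoding above (a single-head vector fingerprint $v^T(uP)$ is collapsed by LayerNorm to a sign). Once these are in place, the remaining steps are direct linear algebra.
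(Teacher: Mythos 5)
Your proposal is correct and follows essentially the same construction as the paper's proof: five heads of head dimension (at least) five, initializing the $i$th head to $e_i^T e_i$ at the $\#$ token, applying the swap transition of Lemma~\ref{lem:swaps} with $v_t=\tilde k_t=\mathbf 0$ on swap tokens, reading out with receptance $e_i$ per head, and finishing in the MLP by checking agreement of all five heads. Your additional care about the shared precursor $k_t$ (zeroing $v_t$ rather than $\tilde k_t$), the explicit fixed-point counting argument with $\mathrm{ReLU}^2$ thresholding, and the LayerNorm handling are slightly more detailed renderings of steps the paper leaves implicit, not a different route.
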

\begin{proof}
    Let the RWKV-7 model have 5 wkv heads of head dimension 5. The embedding weights and "weight preparation" part of the time mixing layer are set such that the following two properties hold:
    
    Firstly, when the model sees the special beginning-of-sequence token, the $i$th wkv head receives 
    $$w = 1,\ c = 2,\ \hat{\kappa} = e_i,\ a = 0, \text{ and } v = \tilde k = e_i.$$
    Here $e_i$ denotes the vector with 1 at position $i$ and 0 elsewhere.
    This sets the state of the $i$th wkv head to $\textbf{wkv} = e_i^T e_i$, which represents state $i$.
    
    Secondly, when token represents a swap between tokens $1 \le x < y \le 5$. In this case, using Lemma \ref{lem:swaps}, all wkv heads receive
    $$w = 1,\ c = 2,\ \hat{\kappa} = (e_x-e_y)/\sqrt 2,\ a = 1, \text{ and }v = \tilde k = 0.$$
    
    This changes the state to $y$ if it was $x$, or $x$ if it was $y$, or keeps it unchanged otherwise.
    
    To calculate the output, the $i$th wkv head checks whether it represents state $i$, by applying receptance $r = e_i$. Finally, the MLP layer combines these outputs to check if all 5 heads agree with the identity permutation, and the output head outputs 1 if this is true, and 0 otherwise.
\end{proof}

\subsection{Main Result: RWKV-7 Can Recognize Any Regular Language}\label{sec:reg_lang}

Moreover, we are able to demonstrate that RWKV-7 has the capability to recognize any regular language.
The regular languages are precisely those which can be recognized by a deterministic finite automaton (DFA). Therefore, it is sufficient to show that RWKV-7 can simulate any DFA. We define DFAs in the usual way:

\begin{definition}\label{def:DFA}
Classically, a DFA is a tuple 
$\mathcal{A}= (Q, \Sigma, \delta, q_0, F)$ where $Q$ is a finite set of states, $\Sigma$ is a finite vocabulary of tokens, $\delta_\sigma : Q \to Q$ is a transition function for each token $\sigma \in \Sigma$, $q_0 \in Q$ is the initial state, and $F \subseteq Q$ is a set of accepting states.

Equivalently, the DFA's computation on $w \in \Sigma^{\ast}$ can be represented by matrix computations.
Each $\delta_\sigma$ can be represented by a boolean matrix $M_\sigma \in \{0, 1\}^{|Q| \times |Q|}$, where $M_w (i, j) = 1$ iff $\delta_w (q_j) = q_i$.
The initial state $q_0$ can be represented as a one-hot vector $\alpha \in \{0, 1\}^{\lvert Q \rvert}$, and the set of accepting states $F$ can be represented as a multi-hot vector $\omega \in \{0, 1\}^{\lvert Q \rvert}$.

For a given string $w_1 \cdots w_T$, the DFA computes
\begin{equation} \alpha \cdot M_{w_1} \cdots M_{w_T} \cdot \omega^\top \label{eq:DFA_calc} \end{equation}
We say that $w \in L$ if and only if this expression evaluates to 1.
\end{definition}



Having defined regular languages, we are in a position to show our main result:

\begin{thm}\label{thm:reg_lang}
    For any regular language, there exists a 4-layer RWKV-7 model that recognizes it.
\end{thm}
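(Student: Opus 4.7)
I would simulate the DFA $\mathcal A = (Q, \Sigma, \delta, q_0, F)$ recognizing $L$ directly inside the wkv state, computing the matrix-product form $\alpha \, M_{w_1}\cdots M_{w_T}\, \omega^\top$ from Definition \ref{def:DFA}. The plan splits into three parts: (i) representing each DFA transition $M_\sigma$ as a short product of RWKV-7-realizable primitives, (ii) distributing that product across the four layers and their heads, and (iii) reading out membership in $F$ at the last position.

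For (i) I would combine the two primitive lemmas available for the RWKV-7 transition matrix: Lemma \ref{lem:swaps}, which shows that any transposition matrix is a single RWKV-7 transition (using $c=2$, $w=1$, $a=1$, $\hat\kappa = (e_x-e_y)/\sqrt 2$), and Lemma \ref{lem:factor_transition}, which provides the same for a ``copy'' (collapse) transition. Together these generate the full transformation monoid $T_{|Q|}$, so every $M_\sigma$ admits a factorization into RWKV-7 transitions. For (ii) I would initialize the wkv state of a designated head to the one-hot encoding of $q_0$ via the beginning-of-sequence token, exactly as in the proof of Lemma \ref{lem:swap_tracking}. At each subsequent position $t$, the token embedding for $w_t$ carries, through the low-rank MLPs and token-shift of layers $1$ through $3$, all the parameters $(w,\hat\kappa,a,\tilde k,v)$ needed to realize the factors of $M_{w_t}$; different heads in the same layer handle non-interacting factors in parallel, and composition across layers is arranged by channel mixing routing the output of one layer's relevant head into the key/decay inputs of the next. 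For (iii) the final layer applies receptance vectors $e_i^\top$ to the DFA-state head and aggregates the resulting indicators against $\omega$ through its $\mathrm{ReLU}^2$ MLP, emitting $1$ iff $q_T \in F$.

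The main obstacle is making the factorization length a uniform constant rather than growing with $|Q|$. A naive decomposition of an arbitrary $\delta_\sigma : Q \to Q$ into transpositions and copies has length $\Theta(|Q|)$, which would demand $\Omega(|Q|)$ layers if applied sequentially. Achieving the claimed 4-layer bound requires exploiting parallelism across many heads per layer, so that several elementary transitions are composed simultaneously on disjoint substructures of the state encoding, together with a careful argument that the RWKV-7 channel-mixing and token-shift pipelines can produce the required per-head parameters and the between-layer routing for every $\sigma \in \Sigma$ in one shot. I expect this parallel packing, and the verification that the composed update across the four wkv layers indeed equals multiplication by $M_{w_t}$ on the DFA-state encoding, to be the technical heart of the proof; everything else is bookkeeping built on the primitive-representation lemmas already in hand.
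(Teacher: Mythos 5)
Your high-level plan is in the right place: factor each DFA transition into swaps/copies, realize those as RWKV-7 wkv transitions, initialize the state to the one-hot $q_0$, and read off membership against $\omega$ in the final MLP. You also correctly identify the crux, namely that a naive factorization of $\delta_\sigma$ has length $\Theta(|Q|)$ and so cannot fit in four layers applied sequentially per token. But the mechanism you propose for escaping this---composing several elementary transitions ``simultaneously on disjoint substructures'' by using many heads in parallel---does not work. Distinct wkv heads maintain independent states; within a single token's forward pass the transitions applied by different heads are never multiplied together, and the channel-mixing block between layers can only route vectors, not compose matrix products held in separate heads' states. With only four layers, only four wkv transitions ever act on any single head's state per token, which is a constant independent of $|Q|$; head parallelism gives you many \emph{independent} length-4 products, not a single length-$\Theta(|Q|)$ one. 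So the ``parallel packing'' step in your argument is a genuine gap, not bookkeeping.

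The paper closes this gap by amortizing the factorization across \emph{time} rather than across heads. The input is chopped into blocks of $n = |Q|$ consecutive tokens. The product of the $n$ DFA transition matrices in a block is again a DFA transition matrix, so Lemma~\ref{lem:factor_transition} factors it into exactly $n$ elementary transitions $G_{l,1},\dots,G_{l,n}$---one per position in the block. The final (fourth) layer applies one elementary transition per token, so each block of $n$ tokens accumulates precisely the block product in the wkv state. The remaining difficulty is that the correct $G_{l,\hat t}$ to apply at position $t = ln + \hat t$ is a function of the entire block, not just the current token. This is where the first three layers come in: one layer computes first-token / parity flags, a second computes the position modulo $2n$ (via a rotation trick in a two-dimensional subspace of the wkv state), and a third stores the last $2n$ tokens in its wkv state columns and, together with the position clock, implements a lookup table (Lemma~\ref{lem:lookup}) that emits the right elementary transition parameters to the fourth layer. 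A similar lookup supplies a tail correction $\hat\omega_T$ at the final position so that $\hat\alpha_T \hat\omega_T^\top$ equals the DFA's acceptance value even when $T$ is not a multiple of $n$. Your proposal has no analogue of this clock-and-lookup machinery, and without it the constant-layer bound cannot be reached.
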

\begin{proof}
    We demonstrate that the RWKV-7 architecture can recognize strings in an arbitrary regular language $L$. Consider a string $w \in \Sigma^{\ast}$ and its membership in $L$. There exists a DFA (Definition \ref{def:DFA}) which recognizes $L$ by evaluating \eqref{eq:DFA_calc}. To prove that RWKV-7 can recognize any regular language, it is therefore sufficient to construct a RWKV-7 that evaluates \eqref{eq:DFA_calc}.

    \textbf{Construction overview.}
    A standard way to recognize a regular language would be to construct the transition matrices for each token and then multiply them to compute the final state.
    However, this does not work for RWKV-7 because because an arbitrary DFA transition can have rank $n = \lvert Q \rvert$ (i.e., the number of states), whereas a wkv head can only implement a simple elementary transition matrix per input token.
    A natural idea is to factor each DFA transition into $n$ \textit{elementary transition matrices} (Lemma \ref{lem:factor_transition}), each of which can be directly implemented by wkv heads (Lemma \ref{lem:elementary_construction}).
    However, expanding DFA transitions in this way gives more elementary transition matrices than tokens, which means it cannot be directly implemented by RWKV-7.
    
    Fortunately, a simple modification of this idea will allow us to implement regular language recognition in RWKV-7. Rather than expanding each transition matrix to $n$ elementary matrices, we use the first three layers to convert blocks of $n$ DFA transitions to products of $n$ elementary matrices with the same product. The final layer then multiplies these elementary matrices to obtain the final state.

    \textbf{Details: information routing via residual stream and Layernorm.}
    The output of each layer is stored in the residual stream of the architecture, in independent subspaces, which makes them all available to deeper layers. Since the outputs come from a finite set, they may be represented by one-hot encoding.

    The input to each time mixing block therefore contains the outputs of all previous layers. First, a Layernorm is applied. Note that one-hot encodings have constant norm, which ensures that the Layernorm preserves the encoded information.

    Next is the "weight preparation" part of the time mixing block, which takes input $x_t$ encoding the output from previous layers, and constructs $r_t,w_t,\tilde k_t,v_t,\hat{\kappa}_t$ and $a_t$ for the wkv heads. The weight preparation is sufficiently expressive to allow each of the 6 output variables to be an arbitrary linear transform of $x_t$. This allows selecting $r_t,w_t,\tilde k_t,v_t,\hat{\kappa}_t$ and $a_t$ based on arbitrary outputs from previous layers. Therefore, the wkv heads are the main part of the construction.

    \textbf{WKV head construction.} For any $1 \leq t \leq T$, the current position $t$ can be split into $t = ln + \hat t$, for integers $0 \le l$ and $1 \le \hat t \le n$. We view the input sequence as blocks of length $n$, indexed by $l$.

    First, we describe $l \ge 1$. Consider the product of DFA transitions $\tilde M_l = M_{w_{(l-1)n+1}}\dots M_{w_{(l-1)n+n}}$. This product is also a DFA transition matrix (i.e., it has a single 1 per column). Hence, Lemma \ref{lem:factor_transition} allows us to factor $\tilde M_l = G_{l,1}G_{l,2}\dots G_{l,n}$, where $G_{l,1},\dots,G_{l,n}$ are elementary transition matrices. Fix one such factorization for each possible DFA transition matrix. At position $t = ln+\hat t$, the wkv state in the fourth (final) layer is right-multiplied by the elementary transition matrix $G_{l,\hat t}$. Since $G_{l,\hat t}$ is uniquely defined by the last $2n$ tokens and position modulo $2n$, it can be computed in the third layer by Lemma \ref{lem:lookup}, and fed to the fourth layer.

    The first block $l = 0$ is handled as a special case. Again, we consider the fourth layer. At the first token, set $v_1 = e_1$ and $\tilde k_1 = \alpha$. All subsequent transitions $t \ge 2$ set $v_t = \tilde k_t = {\bf 0}$, and implement identity transition matrices from Lemma \ref{lem:elementary_construction}. Then ${\bf wkv}_t = e_1^T \alpha$ for $1 \le t \le n$.

    With this construction, for all $1 \le t \le T$, the fourth layer's wkv state becomes \[{\bf wkv}_t = e_1^T \hat\alpha_t\text{, where } \hat\alpha_t = \alpha M_{w_1}M_{w_2} \dots M_{w_{(l-1)n}} G_{l,1}G_{l,2} \dots G_{l,\hat t},\]
    where as usual, empty products (such as for $l \le 1$) evaluate to identity matrices.

    Duplicate this construction into $n$ wkv heads, where the $i$th applies receptance $e_i$. In combination, the wkv heads read out the whole vector $\hat\alpha_t$.

    \textbf{Final tokens.} Consider $t = T$. To match the DFA evaluation formula from \eqref{eq:DFA_calc}, $\hat\alpha_T$ can be multiplied by
    \[\hat\omega_T^T = G_{l,\hat t+1}\dots G_{l,n} M_{w_{ln+1}}M_{w_{ln+2}}\dots M_{w_{ln+\hat t}} \omega^T.\]

    Fortunately, this expression is a fixed function of the current position modulo $2n$ and the last $2n$ tokens, and can hence be found by a lookup in the third layer by Lemma \ref{lem:lookup}. The final MLP on the final token can thus output the scalar product $\hat\alpha_T \hat\omega_T^T$, which is equal to \eqref{eq:DFA_calc}, thus completing the construction.
\end{proof}

Our construction uses MLPs to implement lookup tables with sizes on the order of $|\Sigma|^{2n}$, which may require MLP layers that are exponentially wide in the number of states of the original DFA.

\subsection{Lemmas for Theorem~\ref{thm:reg_lang}}
The proof for Theorem~\ref{thm:reg_lang} requires many different lemmas, which are stated and proved in the following.

A single RWKV-7 wkv state transition cannot directly implement an arbitrary DFA transition. However, DFA transition matrices can be decomposed into a product of \textit{elementary transition matrices} that can be directly simulated by wkv state transitions (cf. Lemma:~\ref{lem:swaps}):

\begin{lemma}\label{lem:factor_transition}
    Let $M$ be a DFA transition matrix. I.e., $M$ has shape $n\times n$, and contains a single 1 in each column. Then $M$ can be factored into a product of $n$ \textit{elementary transition matrices} $G_1,\dots,G_n$. Specifically, 
    \[M = G_1G_2\dots G_n,\]
    where each of $G_1,\dots,G_n$ has one of the following forms:
    \begin{enumerate}
        \item Identity matrix.
        \item Swap matrix $x \leftrightarrow y$; an identity matrix with rows $x$ and $y$ swapped.
        \item Copy matrix $x \to y$; an identity matrix with column $y$ replaced by a copy of column $x$.
    \end{enumerate}
\end{lemma}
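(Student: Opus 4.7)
The plan is to interpret the product $G_1 G_2 \cdots G_n$ as a sequence of $n$ column operations applied to the identity matrix: right-multiplying by a swap matrix $G_{x \leftrightarrow y}$ exchanges columns $x$ and $y$, right-multiplying by a copy matrix $G_{x \to y}$ overwrites column $y$ with a copy of column $x$, and the identity leaves the running product unchanged. Thus the task reduces to reaching $M$ from $I$ in $n$ such steps. Let $\delta : [n] \to [n]$ be the function encoded by $M$ (so column $j$ of $M$ is $e_{\delta(j)}$), let $k = |\mathrm{Im}(\delta)|$, and let $P_1, \dots, P_k$ be the preimage partition with $\delta(P_i) = \{y_i\}$.

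The construction proceeds in two phases. First, pick a representative $r_i \in P_i$ from each preimage class and build a permutation $\pi$ of $[n]$ satisfying $\pi(r_i) = y_i$, extending $\pi$ on $[n] \setminus R$ arbitrarily to a bijection; realize $P_\pi$ as a product of swap matrices. Second, for each non-representative $j \in P_i \setminus \{r_i\}$, apply the copy matrix that overwrites column $j$ with column $r_i$. Since the permutation phase has set column $r_i$ to $e_{y_i}$, after the copies every column in $P_i$ equals $e_{y_i}$, and the matrix is exactly $M$. The copy phase uses $\sum_i (|P_i| - 1) = n - k$ operations, so the remaining task is to realize $P_\pi$ using at most $k$ swaps.

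To bound the swap count, encode the constraints $\pi(r_i) = y_i$ as a directed graph on $R \cup Y$ with edges $r_i \to y_i$; since each vertex has in- and out-degree at most one, the graph is a disjoint union of paths and cycles. Extend $\pi$ by closing every path into a cycle and fixing every index outside $R \cup Y$. Using the fact that a permutation of $[n]$ with $c$ cycles is a product of $n - c$ transpositions, a short calculation shows that $P_\pi$ factors into exactly $k - c_{\mathrm{cyc}}$ swaps, where $c_{\mathrm{cyc}}$ is the number of cycles in the constraint graph; in particular this is at most $k$. The total is at most $k + (n-k) = n$ elementary factors, with any shortfall padded by identity matrices. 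The main subtlety is this cycle-counting step: the bound $\le k$ rather than the naive $\le n - 1$ bound for an arbitrary permutation is exactly what allows the total to fit within $n$ operations when $k$ is small compared to $n$.
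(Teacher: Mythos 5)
Your proof is correct, but it takes a genuinely different route from the paper's. You give an explicit, structured factorization: choose one representative $r_i$ from each preimage class of $\delta$, realize a permutation sending $r_i \mapsto y_i$ by swaps, then fan each representative's column out to the rest of its class by copies. The key step is the cycle count: the constraint graph on $R \cup Y$ (edges $r_i \to y_i$) splits into paths and cycles because in- and out-degrees are at most one, and closing each path into a cycle while fixing everything outside $R \cup Y$ yields a permutation realizable in $k - c_{\mathrm{cyc}} \le k$ transpositions, which together with the $n - k$ copies fits within $n$ operations. The paper instead runs a greedy loop: starting from $X = I$, it applies swaps whenever some column of $X$ is wrong at its own position but would be correct elsewhere, then applies copies to the remaining mismatches, and observes that every step strictly decreases the number of mismatched columns from its initial value of at most $n$. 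The greedy argument is shorter and needs no bookkeeping of cycle structure, but it leaves partly implicit an invariant your construction sidesteps by design: after the swap phase terminates, the columns needed as copy sources are exactly the already-matched ones (an unmatched column's content no longer appears anywhere in $M$), so later copies never overwrite a needed source. Your version makes the source-preservation automatic, since representatives are never copy targets, at the cost of invoking the standard fact that a permutation of $[n]$ with $c$ cycles is a product of $n - c$ transpositions and of choosing the extension of $\pi$ carefully (closing paths to cycles rather than extending arbitrarily, which is actually required for the $\le k$ bound even though you describe the extension as arbitrary when first introducing it).
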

\begin{proof}
    We will greedily build $M$ from the identity matrix by right-multiplying elementary transition matrices. Right-multiplying by the identity matrix does nothing, right-multiplying with a swap matrix $x \leftrightarrow y$ swaps columns $x$ and $y$, and right-multiplying with a copy matrix $x \to y$ replaces column $x$ by a copy of column $y$.

    We use $X$ to denote the current partial product of elementary transition matrices. Initially, $X$ is the identity matrix, and the goal is to apply $n$ transitions to make $X = M$. We proceed greedily in three stages.
    \begin{enumerate}
        \item Find a column $c$ of $X$ that differs from column $c$ of $M$, but which matches a different column $c'$ of $M$. If no such position exists, proceed to the next stage. Otherwise, right-multiply $X$ by a swap matrix that swaps columns $c$ and $c'$.

            Note that $M$ and $X$ differed in columns $c$ and $c'$ before the swap, but match in column $c$ after the swap.
        \item Find a column $c$ where $M$ and $X$ differ. If no such column exists, proceed to the next stage. Otherwise, the previous stage has ensured that there exists a column $c'$ of $X$, necessarily different from $c$, which contains a column identical to column $c$ of $M$. 

            Right-multiply by a copy matrix that replaces column $c$ of $X$ with column $c'$ of $X$. Note that $M$ and $X$ differed in column $c$ before the move, while they agree afterwards.
        \item Right-multiply by identity matrices until $X$ is the product of $n$ elementary transition matrices.

            Initially, $M$ and $X$ differed in at most $n$ columns. Each subsequent right-multiplication by an elementary transition matrix reduced the number of differing columns by at least one. Thus, stages 1 and 2 required at most $n$ multiplications.
    \end{enumerate}
\end{proof}

Recall that the RWKV-7 wkv state is at token index $t$ updated by multiplication with \[A_t = \mathrm{diag}(w_t) - c\hat{\kappa}^T_t (a_t \odot \hat{\kappa}_t),\] with $c = 1$. To simplify the presentation of the core idea, we will instead present a construction with $c = 2$, and then show how to remove this assumption in Section \ref{sec:c_2}.
\begin{lemma}\label{lem:elementary_construction}
    For any elementary transition matrix $G$ (in the sense of Lemma~\ref{lem:factor_transition}), there exist $n$-dimensional vectors $\hat{\kappa}$ and $\vec a$, where $\|\hat{\kappa}\| = 1$, $\vec a$ has elements in $\{0,1\}$, and
    \[G = \mathrm{diag}(w) - c\hat{\kappa}^T (\vec a \odot \hat{\kappa}),\]
    where $c = 2$ and $w = {\bf 1}$.
\end{lemma}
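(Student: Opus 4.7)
My plan is to handle the three types of elementary transition matrices (identity, swap, copy) from Lemma~\ref{lem:factor_transition} as three separate cases, exhibiting explicit unit vectors $\hat{\kappa}$ and $\{0,1\}$-vectors $a$ in each case and verifying the identity $G = I - 2\hat{\kappa}^T(a \odot \hat{\kappa})$ by direct computation. Since $w = \mathbf{1}$ forces $\mathrm{diag}(w) = I$, the whole lemma reduces to realizing a specific rank-at-most-one correction via the outer product $\hat{\kappa}^T(a \odot \hat{\kappa})$.

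The first two cases are essentially immediate. For the identity matrix I would take $a = \mathbf{0}$ and any unit $\hat{\kappa}$ (say $e_1$), so that $a \odot \hat{\kappa} = \mathbf{0}$ and the correction vanishes. For the swap matrix $x \leftrightarrow y$, the proof of Lemma~\ref{lem:swaps} already supplies the construction: take $\hat{\kappa} = (e_x - e_y)/\sqrt{2}$ and $a = \mathbf{1}$. Then $a \odot \hat{\kappa} = \hat{\kappa}$, and $2\hat{\kappa}^T\hat{\kappa} = (e_x - e_y)^T(e_x - e_y)$, which after subtraction from $I$ gives the swap permutation matrix.

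The main obstacle is the copy matrix $x \to y$, because its correction $e_y^T e_y - e_x^T e_y = (e_y - e_x)^T e_y$ is \emph{not symmetric}, whereas $\hat{\kappa}^T\hat{\kappa}$ is symmetric. The key observation is that the formula uses $\hat{\kappa}^T(a \odot \hat{\kappa})$ rather than $\hat{\kappa}^T\hat{\kappa}$, so the Hadamard product with $a$ gives us the freedom to choose a left factor differing from the right factor. Concretely, I would take $\hat{\kappa} = (e_y - e_x)/\sqrt{2}$ (unit norm) and $a = e_y$ (entries in $\{0,1\}$), so that $a \odot \hat{\kappa} = e_y/\sqrt{2}$ while $\hat{\kappa}$ itself retains its nonzero component at $x$. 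Then
\[
2\hat{\kappa}^T(a \odot \hat{\kappa}) = (e_y - e_x)^T e_y,
\]
whose only nonzero entries are $1$ at $(y,y)$ and $-1$ at $(x,y)$. Subtracting from $I$ yields an identity matrix with $(y,y)$ zeroed out and a $1$ placed at $(x,y)$, which is exactly the copy matrix $x \to y$.

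Having covered all three forms allowed by Lemma~\ref{lem:factor_transition}, the lemma follows. The only subtle point worth emphasizing in the write-up is why the doubling factor $c = 2$ is essential here: it exactly matches the $1/2$ from the $\sqrt{2}$ normalizations of $\hat{\kappa}$, so that the rank-one correction has unit weight at the target entries. With $c = 1$, the same constructions would produce a correction scaled by $1/2$, which could not recover the $\{0,1\}$-valued DFA transitions; bridging from $c = 2$ back to $c = 1$ is deferred to Section~\ref{sec:c_2}, so we may freely assume $c = 2$ in this lemma.
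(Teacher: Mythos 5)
Your proof is correct and follows essentially the same case-by-case construction as the paper: identical choices for the identity ($\hat{\kappa}=e_1$, $\vec a=\mathbf{0}$) and swap ($\hat{\kappa}=(e_x-e_y)/\sqrt{2}$, $\vec a=\mathbf{1}$) cases, and for the copy case your choice $\hat{\kappa}=(e_y-e_x)/\sqrt{2}$, $\vec a=e_y$ is the same construction as the paper's $\hat{\kappa}=(e_x-e_y)/\sqrt{2}$, $\vec a=e_x$ up to relabeling $x\leftrightarrow y$. Your explanation of why the Hadamard product with $\vec a$ enables the necessary asymmetry, and your closing remark on the role of $c=2$, go beyond the paper's terse write-up but support rather than alter the argument.
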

\begin{proof}
    We use $e_i$ to denote the $n$-dimensional vector with 1 at position $i$ and 0 elsewhere.
    \begin{enumerate}
        \item Identity matrix: Select any length one vector $\hat{\kappa}$, for example $\hat{\kappa} = e_1$, and $\vec a = {\bf 0}$.
        \item Swap matrix; an identity matrix with rows $x$ and $y$ swapped: Select $\hat{\kappa} = (e_x-e_y)/\sqrt 2$ and $\vec a = {\bf 1}$.
        \item Copy matrix; an identity matrix with column $y$ replaced by a copy of column $x$: Select $\hat{\kappa} = (e_x-e_y)/\sqrt 2$ and $\vec a = e_x$.
    \end{enumerate}
\end{proof}

We now move on to the explicit construction of the first three layers.

\begin{lemma}\label{lem:pos_parity}
    There is a 1-layer RWKV-7 which outputs whether the current position is first, and whether the current position is even or odd.
\end{lemma}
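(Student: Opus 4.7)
The plan is to construct a single-layer RWKV-7 with one wkv head of head dimension $2$, using per-channel token-shift coefficients to obtain a clean position-$1$ indicator and a swap transition matrix to encode parity in the wkv state.

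The first step produces an ``is-first'' signal via token shift. Reserve two residual-stream channels $c_1, c_2$ on which every token embedding has the value $1$, and choose the per-channel shift coefficient $\mu_v$ so that $\mu_v[c_1] = 1$ and $\mu_v[c_2] = 0$. Because the token-shift initial state $x_0$ is the zero vector, $x^v_t[c_1] = 0$ at $t = 1$ and $x^v_t[c_1] = 1$ for $t \ge 2$, while $x^v_t[c_2] = 1$ at every $t$. A row of $\bm W_v$ computing the linear combination $x^v_t[c_2] - x^v_t[c_1]$ therefore yields a dimension of $v_t$ equal to $[t = 1]$; selecting the remaining entries of $\bm W_v$ to be zero gives $v_t = [t=1]\, e_1$.

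The second step configures the wkv head for parity. Set $w_t = 1$ and, using the always-on channel $c_2$ with appropriate $\xi$ to hold $k_t$ at a constant value and the bias in $\mathrm{loramlp}_a$ to hold $a_t$ constant, arrange that $\hat\kappa_t = (e_1 - e_2)/\sqrt 2$ and $a_t = 1$ at every token. By Lemma~\ref{lem:swaps} the transition matrix is then the row-swap $S$. Choose the replacement-rate booster $\alpha$ so that $\tilde k_t = k_t \odot \mathrm{lerp}(1, a_t, \alpha)$ is some fixed vector $(\mathsf a, \mathsf b)$ with $\mathsf a \ne \mathsf b$ (breaking the swap-symmetry needed later). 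Combined with $v_t = [t=1]\, e_1$ from Step~1, the only injection happens at $t = 1$, producing $\mathbf{wkv}_1 = e_1^T (\mathsf a, \mathsf b)$, and for $t \ge 2$ the state evolves purely by right-multiplication by $S$. Hence $\mathbf{wkv}_t = e_1^T (\mathsf a, \mathsf b)$ at odd $t$ and $e_1^T (\mathsf b, \mathsf a)$ at even $t$.

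The third step is readout. With a constant receptance $r_t = (1, -1)$ (produced from the always-on channel through $\bm W_r$), a direct computation gives $r_t \mathbf{wkv}_t^T = (\mathsf a - \mathsf b)\, e_1$ at odd $t$ and $-(\mathsf a - \mathsf b)\, e_1$ at even $t$. The per-head $\mathrm{LayerNorm}$ preserves sign as opposite unit directions in $\mathbb R^2$, so the parity bit appears on a designated output coordinate. Independently, the bonus $u_t = (r_t \cdot (\rho \odot \tilde k_t)^T)\, v_t$ is a nonzero multiple of $e_1$ at $t = 1$ but exactly zero for $t \ge 2$ (because $v_t = 0$ there), which directly yields the is-first bit on a separate coordinate. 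The gate $g_t$, the output matrix $\bm W_o$, and the trailing MLP then route the parity and is-first signals into the designated output channels.

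The one delicate point is to obtain an injection that is nonzero only at $t = 1$ and is independent of the token content (so the construction works on every input). This is exactly what Step~1 provides: the combination of per-channel $\mu_v$ values with the fact that $x_0 = 0$ turns the indicator of $t = 1$ into a linear function of the shifted input, which then feeds cleanly into the linear map $\bm W_v$ producing $v_t$. All other weight-preparation outputs ($w_t$, $\hat\kappa_t$, $a_t$, $\tilde k_t$, $r_t$) can be held at constants via the always-on channel, so no further tokenwise dependence is needed.
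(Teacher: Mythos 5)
Your proof is correct and follows essentially the same approach as the paper: use token-shift with $x_0 = 0$ to isolate position~$1$, inject into the wkv state only at $t = 1$, track parity by repeated right-multiplication by a $c=2$ Householder-like transition, and read out the sign via receptance followed by the MLP. The paper picks the coordinate reflection $\hat\kappa_t = e_1$ (so the transition $I - 2e_1^Te_1$ just flips the sign of $\mathbf{wkv}$) where you pick the swap $\hat\kappa_t = (e_1 - e_2)/\sqrt 2$ with $\tilde k_1 = (\mathsf a, \mathsf b)$, $\mathsf a \ne \mathsf b$; these are cosmetically different but functionally equivalent, and you are actually more explicit than the paper about carrying the is-first signal to the output via the bonus term $u_t$.
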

\begin{proof}
    RWKV-7 performs a token-shift operation before the wkv heads are reached. This token shift takes as input the last token, and can therefore detect whether a previous token exists.

    The first layer's wkv state can track position parity by selecting $\tilde k_1 = v_1 = e_1$ for the first position $t = 1$, and subsequently letting $c = 2$, $w_t = a_t = {\bf 1}$, $\hat{\kappa}_t = e_1$ and $\tilde k_t = v_t = {\bf 0}$ for $t \ge 2$. This leads to ${\bf wkv}_1 = e_1^T e_1$ and subsequently ${\bf wkv}_t = {\bf wkv}_{t-1}(I-2e_1^Te_1)$ for $t \ge 2$. Then ${\bf wkv}_t = e_1^T e_1$ for odd $t$ and ${\bf wkv}_t = -e_1^T e_1$ for even $t$.

    Receptance $r = e_1$ can then be used to read out the sign of the wkv state, which encodes the current position's parity.

    The subsequent MLP can transform these two boolean outputs to an arbitrary format.
\end{proof}

\begin{lemma}\label{lem:mod_2n}
    For any positive integer $n$, there is a 2-layer RWKV-7 that outputs the position modulo $2n$.
\end{lemma}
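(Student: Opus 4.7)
The plan is to extend Lemma~\ref{lem:pos_parity} by using the layer-1 parity signal to drive a planar rotation in a second wkv layer. The key observation is that although the transition matrix $A_t = \mathrm{diag}(w_t) - c\hat{\kappa}_t^T(a_t \odot \hat{\kappa}_t)$ has only real eigenvalues and therefore cannot realize a length-$2n$ cyclic shift in one step, the composition of two Householder reflections in a 2D plane is a rotation by twice the angle between their normals. By alternating two appropriately chosen reflections driven by parity, the planar state rotates by $2\pi/n$ every two tokens, producing an orbit of period exactly $2n$ with $2n$ distinct configurations.

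Concretely, layer 1 is the construction from Lemma~\ref{lem:pos_parity}, which writes the first-token indicator and position parity into the residual stream. Layer 2 uses a single wkv head of head dimension $d \ge 3$ with $c = 2$, and via token shift can additionally detect ``is second'' from the first-token signal at the previous position. The weight preparation sets $v_1 = \tilde{k}_1 = e_1$ and $v_2 = \tilde{k}_2 = e_2$ with $a_1 = a_2 = 0$ (so $A_1 = A_2 = I$), producing the rank-two initial state $\mathbf{wkv}_2 = e_1^T e_1 + e_2^T e_2$ supported on the $(e_1, e_2)$-plane. For $t \ge 3$, set $v_t = \tilde{k}_t = 0$, $w_t = 1$, $a_t = 1$, and take $\hat{\kappa}_t = e_1$ when $t$ is odd and $\hat{\kappa}_t = (\cos(\pi/n), \sin(\pi/n), 0, \ldots, 0)$ when $t$ is even. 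With $c = 2$ these are two Householder reflections whose composition is a rotation by $2\pi/n$ in the $(e_1, e_2)$-plane and the identity on its orthogonal complement.

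Writing $B_k = \prod_{s=3}^{2+k} A_s$, the matrices $B_0, B_1, \ldots, B_{2n-1}$ consist of $n$ distinct rotations (at even $k$) alternating with $n$ distinct reflections (at odd $k$), and $B_{2n} = I$. Hence $\mathbf{wkv}_{2+k}$ takes $2n$ distinct values and is periodic in $k$ with period $2n$, so the positions $t = 2, 3, \ldots, 2n+1$ map bijectively onto the residues modulo $2n$. To expose the residue to the MLP, use receptance $r_t = (\cos\phi, \sin\phi, 0, \ldots, 0)$ with a generic $\phi$; the choice $\phi = \pi/(4n)$ never satisfies the collision condition $\phi \equiv \pi/2 \pmod{\pi/n}$, so the readouts $r_t \mathbf{wkv}_t^T$ give $2n$ distinct unit 2D vectors (embedded with zero padding). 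Because LayerNorm is injective on such unit-vector readouts once $d \ge 3$, the $2n$ residues produce $2n$ distinct LayerNorm outputs, which the layer-2 MLP then decodes. The special case $t = 1$, where $\mathbf{wkv}_1 = e_1^T e_1$ is not part of the cycle, is handled by the MLP using the first-token indicator in the residual stream.

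The main obstacle is structural: the eigenvalue restriction on $A_t$ forbids any one-step cyclic shift of length greater than $2$, so the required rotation must be manufactured from parity plus two alternating reflections. A secondary technical point is that a naive receptance can collide rotations with reflections after projection and LayerNorm; this is controlled by requiring head dimension at least $3$, so that LayerNorm preserves the angle of the 2D readout, together with a generic receptance angle such as $\phi = \pi/(4n)$.
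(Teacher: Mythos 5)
Your proposal is correct and takes essentially the same core idea as the paper --- using the layer-1 parity signal (via Lemma~\ref{lem:pos_parity}) to drive two alternating Householder reflections $I - 2e_1^Te_1$ and $I - 2\hat{\kappa}'^T\hat{\kappa}'$ whose composition is a planar rotation by $2\pi/n$, yielding a length-$2n$ orbit --- but the two proofs differ genuinely in how they read the orbit out of the wkv state. The paper initializes the state to the \emph{rank-one} matrix $e_1^Te_1$, so the state at time $t$ is a single rotating/reflecting row vector $e_1^T u_t$, and then uses $2n$ wkv heads with receptances $r_k = \cos(\pi k/n)e_1 + \sin(\pi k/n)e_2$; after the per-head group normalization each head contributes only the sign bit $\mathrm{sign}(r_k\cdot u_t)$, and the $2n$ sign bits together with the layer-1 parity determine $t\bmod 2n$. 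Your version instead builds a \emph{rank-two} state $e_1^Te_1 + e_2^Te_2$, so the state carries the whole $2\times 2$ orthogonal block (rotation at even offsets, reflection at odd offsets), and you read it out with a \emph{single} head via a generic receptance angle $\phi = \pi/(4n)$, arguing that LayerNorm is injective on 2D unit vectors padded into $\mathbb{R}^d$ with $d\ge 3$ and that the rotation/reflection readout angles never collide for this $\phi$. Both are valid constructions; what each buys: the paper's $2n$-head sign readout is immediate and needs no non-collision or normalization-injectivity side conditions, whereas your one-head construction is more economical in heads and avoids using parity at the decoding stage (the determinant of the $2\times 2$ block supplies it implicitly), at the cost of two extra technical checks --- (i) injectivity of LayerNorm on $\{(\cos\theta,\sin\theta,0,\dots,0)\}$, which indeed holds for $d\ge 3$ since the third coordinate $-m/\sigma$ with $m=(\cos\theta+\sin\theta)/d$ and $\sigma=\sqrt{1/d-m^2}$ is strictly monotone in $m$, and (ii) the non-collision condition $\phi\not\equiv \pi/2 \pmod{\pi/n}$, which $\phi=\pi/(4n)$ satisfies since $2n-1$ is odd. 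A small additional wrinkle of your version is that it needs the ``is second'' signal as well as ``is first'', obtained by token-shifting the first-token indicator, which the paper's rank-one construction avoids.
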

\begin{proof}
    We use Lemma \ref{lem:pos_parity} for the first layer, which tells the second layer whether the current position is first, and the parity of the current position.

    At the first position, set $\tilde k_1 = v_1 = e_1$, such that ${\bf wkv}_1 = e_1^T e_1$. For all subsequent positions $t \ge 2$, set $c = 2$, $w_t = a_t = {\bf 1}$ and $\tilde k_t = v_t = {\bf 0}$. Furthermore, set $\hat{\kappa}_t = e_1$ for even $t$ and $\hat{\kappa}_t = \cos(\pi / n) e_1 + \sin(\pi / n) e_2$ at odd $t$. Then ${\bf wkv}_T = e_1^T e_1 (I-2\hat{\kappa}_2^T \hat{\kappa}_2)\dots(I-2\hat{\kappa}_T^T \hat{\kappa}_T)$. Note that for even $t \ge 2$, the matrix $(I-2\hat{\kappa}_t^T \hat{\kappa}_t)(I-2\hat{\kappa}_{t+1}^T \hat{\kappa}_{t+1})$ rotates the first two coordinates by an angle $2\pi/n$. Thus,
    \[{\bf wkv}_t = \begin{cases}e_1^T \left(\cos(\pi (t-1)/n)e_1 + \sin(\pi (t-1)/n) e_2\right), \quad &\text{if $t$ is odd}\\
    e_1^T \left(-\cos(\pi (t-2)/n)e_1 + \sin(\pi (t-2)/n) e_2\right), \quad &\text{if $t$ is even}\end{cases}.\]

    The wkv heads are immediately followed by group normalization, which discards information about magnitudes. We therefore use $2n$ wkv heads of the type above, where the $k$th head applies receptance $r = \cos(\pi k/n)e_1 + \sin(\pi k/n) e_2$. The signs of these readouts, along with the parity from the first layer, can then be combined in the subsequent MLP layer to deduce the current position modulo $2n$.
\end{proof}

\begin{lemma}\label{lem:lookup}
    Let $\tilde t \equiv t$ modulo $2n$ be the current position modulo $2n$, and let $w_{t},w_{t-1},\dots,w_{t-(2n-1)}$ be the last $2n$ tokens. Define $w_t = |\Sigma|+1$ for before-sequence tokens $t \le 0$. Let $\Xi[\tilde t, w_{t},w_{t-1},\dots,w_{t-(2n-1)}]$ be a lookup table that takes as key the current position modulo $2n$ and the $2n$ most recent tokens.

    Given inputs $\tilde t$ and $w_{t},w_{t-1},\dots,w_{t-(2n-1)}$, there is a layer of RWKV-7 that simulates $\Xi$.

    Moreover, there is a 3-layer RWKV-7 that simulates $\Xi$. 
\end{lemma}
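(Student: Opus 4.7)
Assume the residual stream at position $t$ carries $\tilde t$ and each of $w_t,\dots,w_{t-(2n-1)}$ as one-hot vectors in disjoint coordinate blocks (this is the natural format in which Part~2, and the construction of Theorem~\ref{thm:reg_lang} more generally, will produce those inputs). I would arrange the time-mixing block to contribute $\vec 0$, for instance by choosing $r_t = \vec 0$, and let the channel-mixing MLP $\mathrm{ReLU}(\cdot)^2 W_{v'}$ of hidden width $4D$ do all the work. Because the key of $\Xi$ takes only finitely many values, I would allocate one hidden unit per key: its pre-activation is a linear combination of the $2n+1$ matching one-hot coordinates, shifted by a bias so that only the matching key yields a strictly positive value and every other key yields a non-positive one. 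The $\mathrm{ReLU}^2$ then fires only on the matching key and $W_{v'}$ places the tabulated value into the output subspace. Taking $D$ large enough to host the $2n(|\Sigma|+1)^{2n}$ required hidden units concludes Part~1.

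\textbf{Part 2 (three layers from raw tokens).} Use Layers~1--2 exactly as in Lemma~\ref{lem:mod_2n} to write a one-hot $\tilde t\in\{0,\dots,2n-1\}$ into the residual stream while carrying the current token embedding through untouched. Layer~3 then has two jobs: it records the last $2n$ tokens into its wkv state and, via its channel-mix, executes the lookup of Part~1. For the recording, I would allocate $2n$ wkv heads, with head $i$ assigned to residue class $i$ and configured to maintain the invariant that its state always has the form $e_1^T(\alpha_i e_2)$, where $\alpha_i$ is a scalar encoding of a token in $\{1,\dots,|\Sigma|+1\}$. Taking $\alpha = 1$ so that $\tilde k_t = k_t\odot a_t$, head $i$ is told to \emph{write} exactly when $\tilde t = i$ by making its $a_t$ the sigmoid of a large linear detector of the one-hot $\tilde t$ (so $a_t \approx \vec 1$ on $\tilde t = i$ and $\approx \vec 0$ otherwise), with the fixed choices $w_t = \vec 1$, $\hat{\kappa}_t = e_2$, $k_t = e_2$, and $v_t = \alpha_t e_1$. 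When $\tilde t \ne i$ the transition is the identity and $\tilde k_t = \vec 0$, so the state is preserved; when $\tilde t = i$ the transition $I - e_2^T e_2$ annihilates the prior state (which lies in $e_1^T\mathbb{R} e_2$ by the invariant) and $v_t^T\tilde k_t = \alpha_t e_1^T e_2$ installs the current token. A per-head receptance $r = e_2$ exposes $\alpha_i$ into the post-time-mix output, so the layer-3 MLP sees the full key $(\tilde t, w_t,\dots,w_{t-(2n-1)})$ and invokes the Part~1 construction. Positions $t \le 0$ are handled by initializing the wkv states to the sentinel encoding $|\Sigma|+1$, analogous to the first-block special case in the proof of Theorem~\ref{thm:reg_lang}.

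\textbf{Main obstacle.} The delicate point is the clean-overwrite step: a single wkv transition is diagonal-plus-rank-one and cannot in general annihilate an arbitrary prior state. The plan turns on a structural invariant --- by forcing every write to emit a state of the form $e_1^T(\alpha e_2)$, which is rank one with fixed column direction $e_2$, the direction requiring erasure is predetermined, so the hard-coded choice $\hat{\kappa}_t = e_2$ suffices without having to infer which specific token was stored previously. Everything else (per-head gating of $a_t$ on $\tilde t$ via sigmoid, the linear construction of $k_t$, $r_t$, and $v_t$ from the $\tilde t$-subspace and the token embedding, and the lookup MLP) follows from the standard expressivity of the weight-preparation maps and the channel-mix MLP already exploited in Lemmas~\ref{lem:pos_parity}--\ref{lem:mod_2n}.
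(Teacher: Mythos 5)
Your overall structure mirrors the paper's: layers 1--2 compute $\tilde t$ via Lemma~\ref{lem:mod_2n}, layer 3's wkv heads memorize the last $2n$ tokens, and layer 3's channel-mix performs the lookup. However, there is a genuine gap in Part~2, and it is precisely the issue the paper takes care to avoid.

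You store the token identity as a \emph{scalar} $\alpha_i$ in the head state $e_1^T(\alpha_i e_2)$, so that receptance $r=e_2$ exposes $\alpha_i e_1 \in \mathbb{R}^{D/h}$. But the wkv output is passed through a per-head LayerNorm/GroupNorm (Equation~\ref{subeq:addbonus} and the following line) \emph{before} the MLP sees it. LayerNorm normalizes away the magnitude of $\alpha_i e_1$: every positive $\alpha_i$ maps to the same normalized vector, and every negative $\alpha_i$ to its negation, so at most the sign of $\alpha_i$ survives. Your construction therefore cannot distinguish more than two tokens per head, let alone $|\Sigma|+1$. The paper is explicit about this hazard --- in the "information routing" paragraph of the proof of Theorem~\ref{thm:reg_lang} it notes that one-hot encodings have constant norm so that LayerNorm preserves the encoded information, and the paper's own Lemma~\ref{lem:lookup} proof correspondingly sets $v_t = e_{w_t}$ so each read-out is a one-hot vector with fixed norm. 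The fix is local: set $v_t = e_{w_t}$ rather than $v_t = \alpha_t e_1$, making head $i$'s state $e_{w_{t_i}}^T e_2$; then $r = e_2$ reads out the one-hot $e_{w_{t_i}}$ and normalization is harmless. With that fix your per-head-gated design (gating the write on $\tilde t = i$ through $a_t$, with fixed $\hat{\kappa}_t = e_2$, using $2n$ heads) is a valid alternative to the paper's single shared state that writes to column $\tilde t$ via $\hat{\kappa}_t = \tilde k_t = e_{\tilde t}$ --- it spends more heads in exchange for hard-coding the removal key direction, but both arrive at the same readout interface for the MLP.

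A smaller remark: your Part~1 assumes all $2n$ tokens are already laid out as one-hot blocks in the residual stream and uses only the channel-mix MLP; the paper instead reads those tokens from the third layer's wkv state, which is the form in which Theorem~\ref{thm:reg_lang} actually invokes the lemma. Your two-part split is still a legitimate reading of the statement, since your Part~2 supplies those tokens to Part~1's MLP within the same layer, so this is an expository difference rather than a gap.
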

\begin{proof}
    Recall that the wkv state is initialized to all zeros. We apply the wkv state update
    \[{\bf wkv}_t = {\bf wkv}_{t-1}(I-e_{\tilde t}^Te_{\tilde t})+e_{w_t}^T e_{\tilde t}.\]
    This can be achieved by selecting $c \ge 1, w_t = {\bf 1}, a_t = \frac{1}{c}{\bf 1}, \tilde{k}_t = \hat{\kappa}_t = e_{\tilde t}$ and $v = e_{w_t}$.
    
    In words, the state update replaces the $\tilde t$th column of the wkv state with $e_{w_t}$. Hence, the wkv state stores the last $2n$ tokens.

    We make $n$ such wkv heads, where the $i$th wkv head applies receptance $r = e_i$. This reads out the full state, which contains the last $2n$ tokens. The state and $\tilde t$ are fed into the subsequent MLP layer, which performs the lookup into $\Xi$.

    Note that by Lemma \ref{lem:mod_2n}, the first two layers can compute current position modulo $2n$ and the $2n$ most recent tokens. Hence, a 3-layer RWKV can compute $\Xi$.
\end{proof}

\paragraph{Removing the assumption \texorpdfstring{$c = 2$}{c = 2}}\label{sec:c_2}
Some of our constructions use $c = 2$, while the actual model uses $c = 1$. However, since the transition matrix is $A_t = \mathrm{diag}(w_t) - c\hat{\kappa}^T_t (a_t \odot \hat{\kappa}_t)$, halving both $c$ and $w_t$ simply causes $A_t$ to be halved. This causes the wkv state to halve in magnitude at each token. However, since the wkv heads are immediately followed by group normalizations, the magnitude of the wkv state does not affect subsequent calculations. Additionally, since floating point numbers store a separate exponent, this rescaling only requires log-precision.
The shrinking state could in principle be mismatched with the scales of $v_t$ and $\tilde k_t$, but our constructions always satisfy $v_t = \tilde k_t = {\bf 0}$ whenever $c \ne 1$ is required.

\section{Additional Architectural and Training Details}\label{sec:arch_train}

\paragraph{Architecture Diagram}
\begin{figure*}[htbp]
    \centering
    \includegraphics[width=0.7\linewidth]{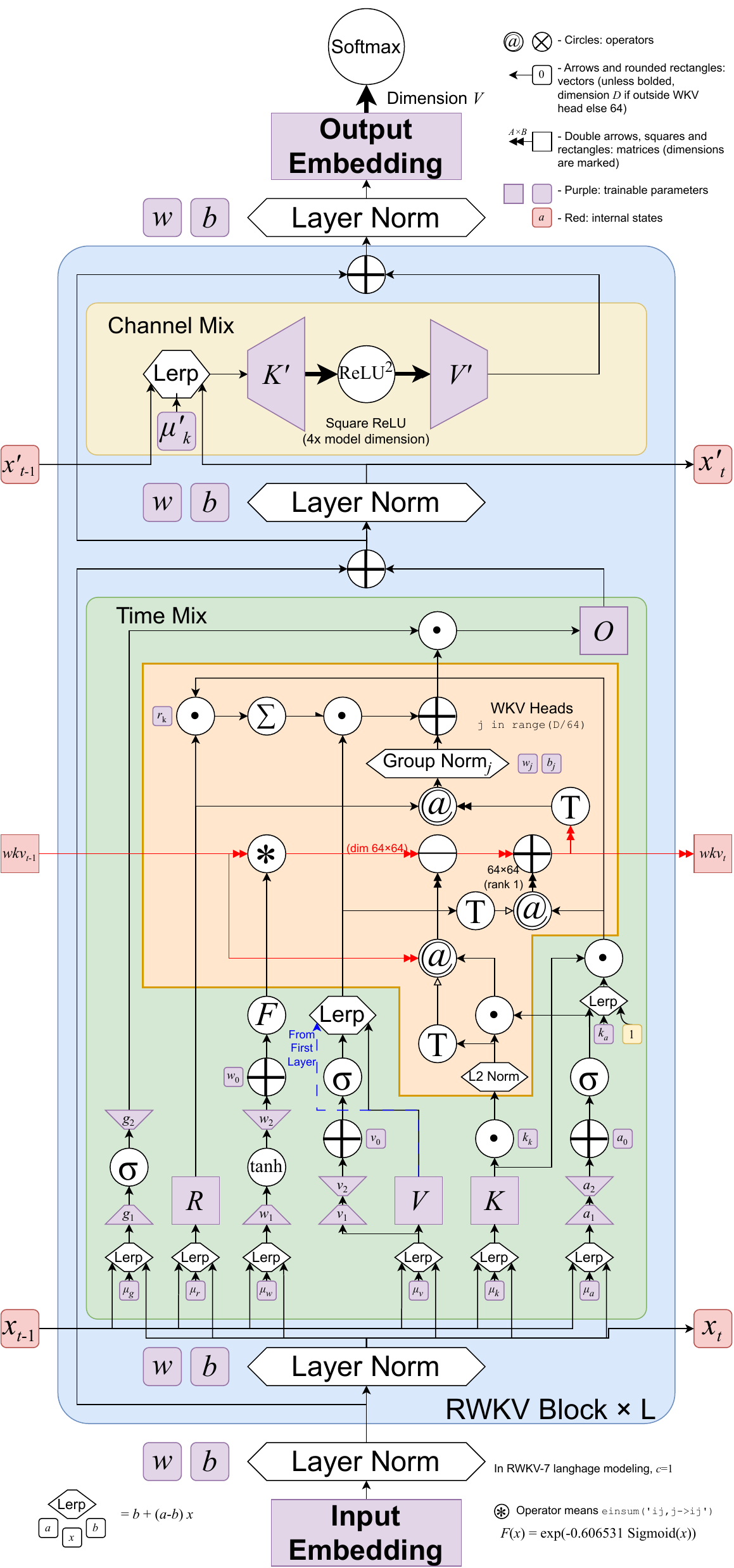}
    \caption{The architecture of RWKV-7, drawn in detail.}
    \label{fig:rwkv7-details}
\end{figure*}

We provide a comprehensive architecture diagram (Figure \ref{fig:rwkv7-details}) in order to help readers thoroughly understand our architecture design.

\paragraph{Parameters and Dimensions}

Throughout this section, we denote by $D$ the model dimension, $L$ the number of layers, $h=D/D_h$ the number of heads, and $V$ the vocabulary size. All models are trained with head size $D_h=64$, i.e., each time-mixing has $h=D/64$ heads with dimension $64 \times 64$.

Pile models are trained with $V=50304$ with the GPT-NeoX 20B tokenizer. World models are trained with $V=65536$ with RWKV World tokenizer.

\begin{table}[htb]
    \centering
    \begin{adjustbox}{max width=\linewidth}
        \begin{tabular}{lcccc} 
 \toprule
Model Name & $L$ & $D$ & State Size (WKV + Shift) & Parameters \\
\midrule
RWKV7-World3-0.1B & 12 & 768 & \num{589824} + \num{18432} & \num{191034624}  \\
RWKV7-World3-0.4B & 24 & 1024 & \num{1572864} + \num{49152} & \num{450767872}  \\
RWKV7-World3-1.5B & 24 & 2048 & \num{3145728} + \num{98304} & \num{1527404544}  \\
RWKV7-World3-2.9B & 32 & 2560 & \num{5242880} + \num{163840} & \num{2947735040}  \\
\bottomrule\\
\end{tabular}
\end{adjustbox}
\caption{Released Goose models with parameters and state size.} \label{tab:model_flop_count}
\end{table}

RWKV-7 uses four low-rank MLPs for decay $w$, value residual $v$, in-context learning rate $a$ and gate $g$ respectively. The intermediate dimensions are listed in Table \ref{tab:lora_dims}. These values are based on our mere speculation of how much information can be passed through.

\begin{table}[ht]
\centering
\begin{adjustbox}{max width=\linewidth}
\begin{tabular}{ccccc} 
\hline
Dimension ($D$) & $d_w$ & $d_a$ & $d_v$ & $d_g$ \\
\hline
768 & 64 & 64 & 32 & 128 \\
1024 & 64 & 64 & 32 & 128 \\
2048 & 96 & 96 & 64 & 256 \\
2560 & 96 & 96 & 64 & 320 \\
4096 & 128 & 128 & 96 & 480 \\
6144 & 128 & 128 & 96 & 640 \\
\hline\\
\end{tabular}
\end{adjustbox}
\caption{Suggested Intermediate Dimensions for the low-rank MLPs for RWKV-7 models}
\label{tab:lora_dims}
\end{table}

The number of parameters for all RWKV-7 models can be computed by the formula: 
\begin{align}
\label{eq:params-7}
\#(\mathrm{Params}) = 2DV + 4D + LD \left(12D + 2\left(d_w + d_a +d_v +d_g \right) + 19 \right) - (2Dd_v + D).
\end{align}

Where:
\begin{itemize}
    \item The weights of the embeddings and head, and the Layernorms beside them, yield $2DV + 4D$ parameters;
    \item The weights of each layer yield $D \left(12D + 2\left(d_w + d_a +d_v +d_g \right) + 19 \right)$ parameters, except for the first layer;
    \item The low-rank MLP for the value residual is not present in the first layer, subtracting $(2Dd_v + D)$ parameters.
\end{itemize}

\paragraph{Parameter Initializations}
\label{sec:initialization}

Proper parameter initialization is crucial for ensuring training stability and achieving optimal performance for language models. RWKV-7 employs a carefully designed initialization strategy tailored to its architecture. The detailed initialization scheme is beyond the scope here but can be found in the official code repository. We emphasize that using the recommended initialization is essential for replicating the results in this paper. Deviations from the prescribed initialization may lead to performance degradation.

\paragraph{Dataset Loading}
\label{sec:dataset_loading}

The dataset used for pretraining consists of $ \num{3119194079123} $ tokens stored on disk, which are memory-mapped using the \texttt{mmap} mechanism. To ensure a diverse and pseudo-random sampling of training sequences, we employ a custom data loading strategy based on a mathematical function with desirable properties. Specifically, we utilize a pseudo-random number generator defined by the function $ f(x) = a x^3 $ over the finite field $ \mathbb{Z}/p\mathbb{Z} $, where $ p $ is a prime number of the form $ 3n+2 $. This function is chosen because it is a bijection (full map) in $ \mathbb{Z}/p\mathbb{Z} $, ensuring that all possible indices are eventually accessed exactly once within one epoch.

For pretraining with a sequence length of $4096$, the relative address of the $k$-th sample is determined as:
\[
\text{start\_address} = 4096 \cdot (a k^3\mod p), \quad \text{end\_address} = \text{start\_address} + 4097.
\]
Here, $p$ is chosen as the largest prime of the form $3n+2$ smaller than $\lfloor \text{dataset\_size} / 4096 \rfloor$, yielding $p = \num{761521949}$. The parameter $a$ is set to an integer close to $0.618 p$ that ensures good mixing properties of the generated addresses.

This approach guarantees both simple calculation and uniform access to the dataset while maintaining pseudo-randomness.
By leveraging the properties of modular arithmetic and cubic mappings, we achieve a balance between computational efficiency and data diversity during pretraining. 

\paragraph{Training Details}
\label{sec:hyperparameters}

All RWKV-7 models were trained under \texttt{bfloat16} format on nodes of $8\times$ Nvidia H800. 
The AdamW optimizer was configured with $\beta_1 = 0.9$, $\beta_2 = 0.99$, $\epsilon = \num{1e-18}$, and a weight decay of $0.1$ applied exclusively to linear layers and embedding weights. The choice of such a small $\epsilon$ value is motivated by the theory proposed by \citet{molybog2023theoryadaminstabilitylargescale}, which suggests that reducing $\epsilon$ can help stabilize training in large-scale models by ensuring that intermediate layers remain in a regime of active updates, thus mitigating sudden loss spikes and promoting smoother convergence.

The context length for pretraining was 4096 tokens. The base decay rate $w_0$ parameters are placed into a special 2x learning rate multiplier grouping.

Besides the traditional cosine learning rate decay schedule, we used a phased dynamic batch size scaling strategy inspired by the concept of critical batch size proposed by \citet{McCandlish2018AnEM} and similar to the approaches in \citet{l.2018dontdecay}. Our strategy involves progressively increasing the batch size during training, accompanied by corresponding adjustments to the learning rate.

The detailed training schedules for different model sizes are listed in Table \ref{tab:training-schedules}.

\begin{table}
\centering
\begin{adjustbox}{max width=\linewidth}
\begin{tabular}{lccccc} 
\toprule
\textbf{Model} & \textbf{Phase} & \textbf{Nodes} & \textbf{Batch size} & \textbf{Proposed Initial LR} & \textbf{Final Loss}\\
\midrule
RWKV7-World3-0.1B & 1 & 1 & $240 \times 4096$ & $\num{6e-4}$ & 2.5290 \\
\midrule
RWKV7-World3-0.4B & 1 & 1 & $240 \times 4096$ & $\num{5e-4}$ \\
  & 2 & 2 & $480 \times 4096$ & $\num{6e-4}$ & 2.2580\\
\midrule
RWKV7-World3-1.5B & 1 & 3 & $480 \times 4096$ & $\num{4e-4}$ \\
  & 2 & 4 & $672 \times 4096$ & $\num{4.5e-4}$ \\
  & 3 & 6 & $1152 \times 4096$ & $\num{6.1e-4}$ & 1.9970 \\
\midrule
RWKV7-World3-2.9B & 1 & 4 & $640 \times 4096$ & $\num{4e-4}$ \\
  & 2 & 6 & $1008 \times 4096$ & $\num{5e-4}$ \\
  & 3 & 7 & $1120 \times 4096$ & $\num{5.4e-4}$ \\
  & 4 & 12 & $2016 \times 4096$ & $\num{8e-4}$  & 1.8745 \\
\bottomrule
\end{tabular}
\end{adjustbox}
\caption{Training Schedules and Batch Sizes}
\label{tab:training-schedules}
\end{table}

The learning rate undergoes a cosine decay schedule from the proposed initial learning rate at the beginning of the entire training run to the expected final learning rate of $ \num{1e-5} $ at the end of the entire run, but the implied initial rate varies across phases.

This approach not only enhances training efficiency but also utilizes GPU resources economically. After smaller models complete their training, additional GPU resources become available for the later stages of training larger models. This cascading resource allocation ensures that computational power is dynamically reallocated, maximizing hardware utilization and reducing idle time.

We observe extremely stable training without any loss spikes in all four runs, indicating that the likelihood of encountering such spikes during the training of a very large 
RWKV-7 model is minimal.

See Figure \ref{fig:trainingloss_all} for the resulting learning rates and observed loss curves.

Despite the general stability of our loss curves, we did sometimes observe NaN loss across a single training step, which we theorize may be due to our use of such an extremely low AdamW $\epsilon$. When this occurs, we rewind the training to the prior checkpoint, clear optimizer states, and continue from that point.

\begin{figure*}[ht!]
    \centering
    \begin{subfigure}[b]{0.45\textwidth}
        \includegraphics[width=\textwidth]{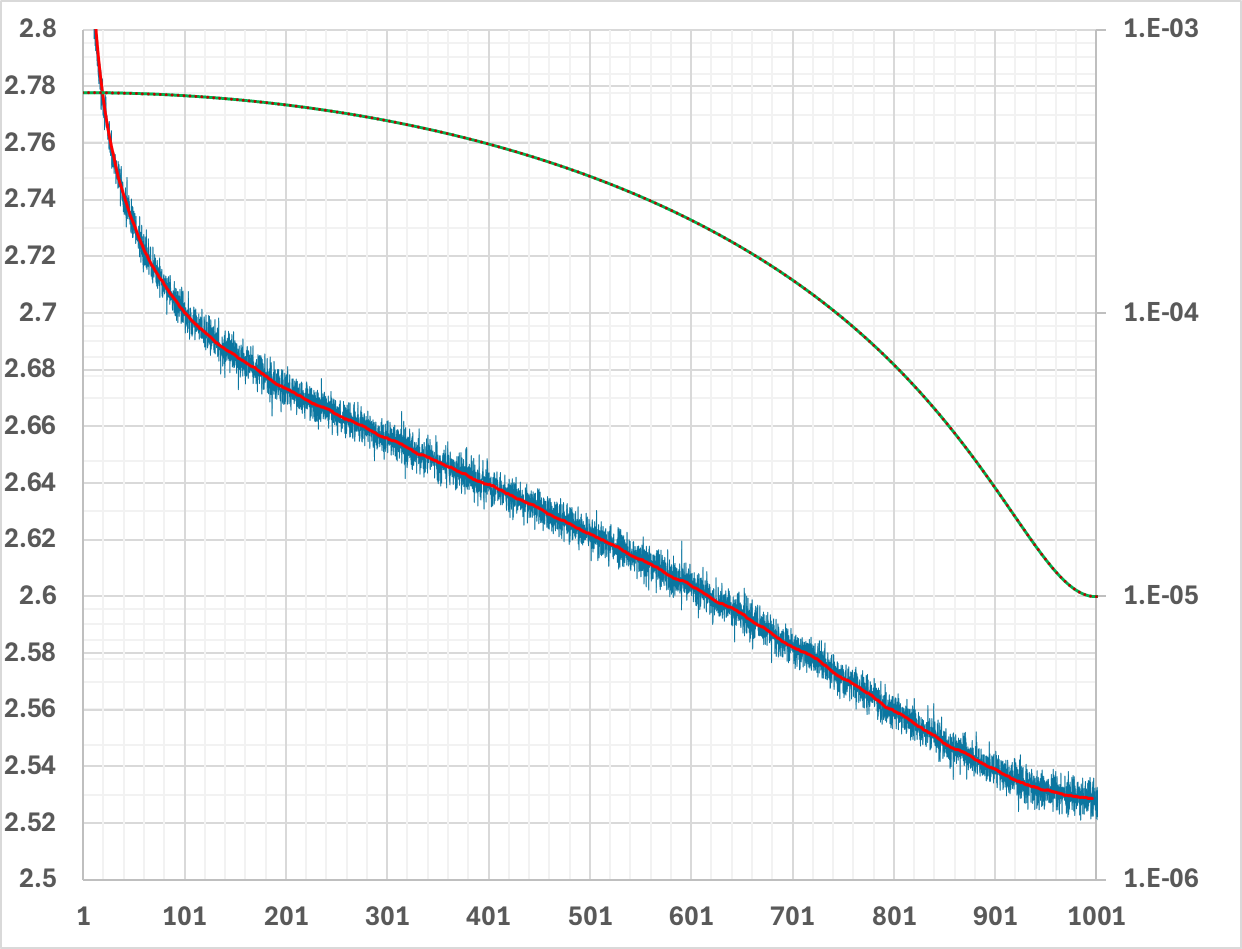}
        \caption{0.1B}
        \label{fig:trainingloss_subfig1}
    \end{subfigure}
    \hfill 
    \begin{subfigure}[b]{0.45\textwidth}
        \includegraphics[width=\textwidth]{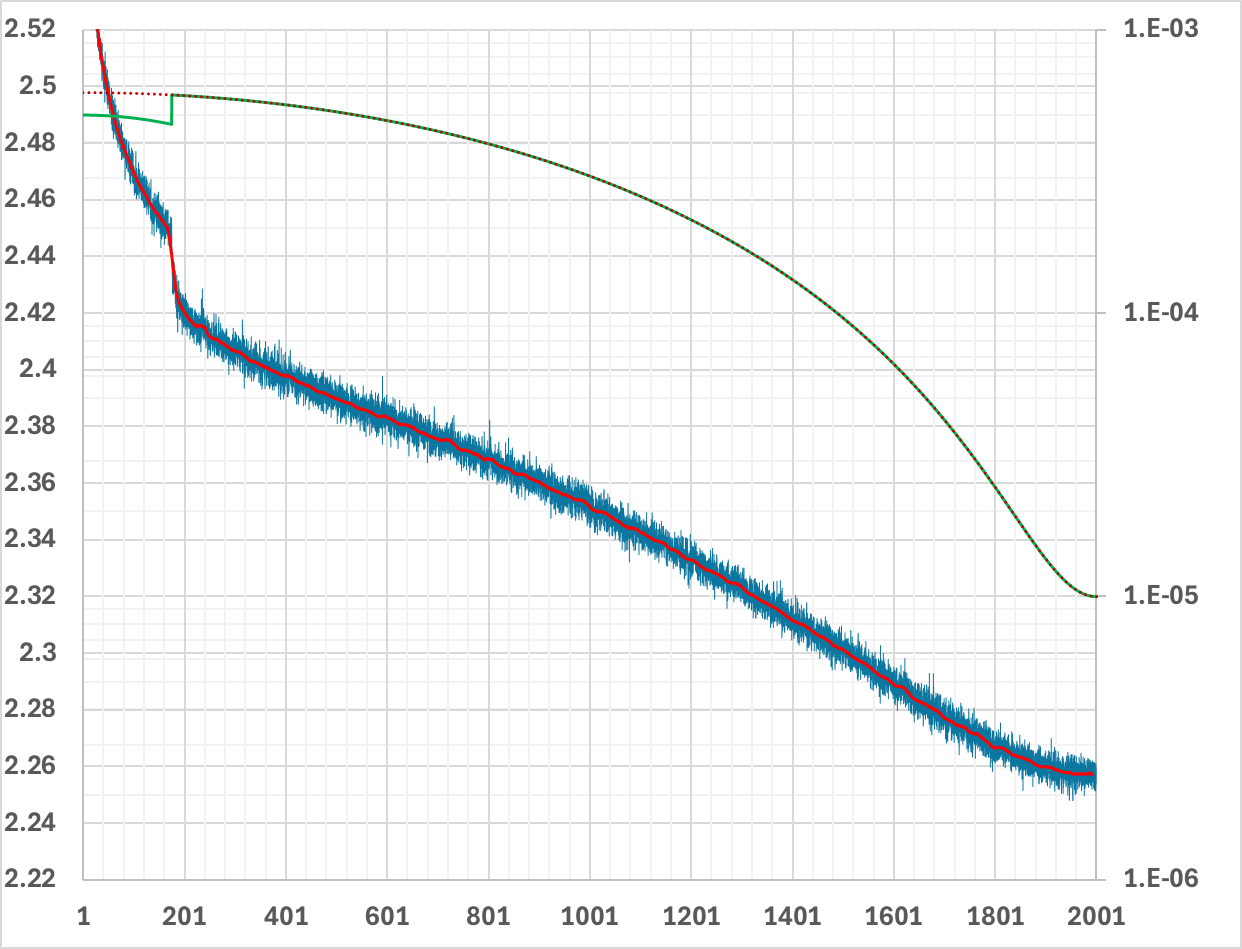}
        \caption{0.4b}
        \label{fig:trainingloss_subfig2}
    \end{subfigure}

    \vspace{1em} 

    \begin{subfigure}[b]{0.45\textwidth}
        \includegraphics[width=\textwidth]{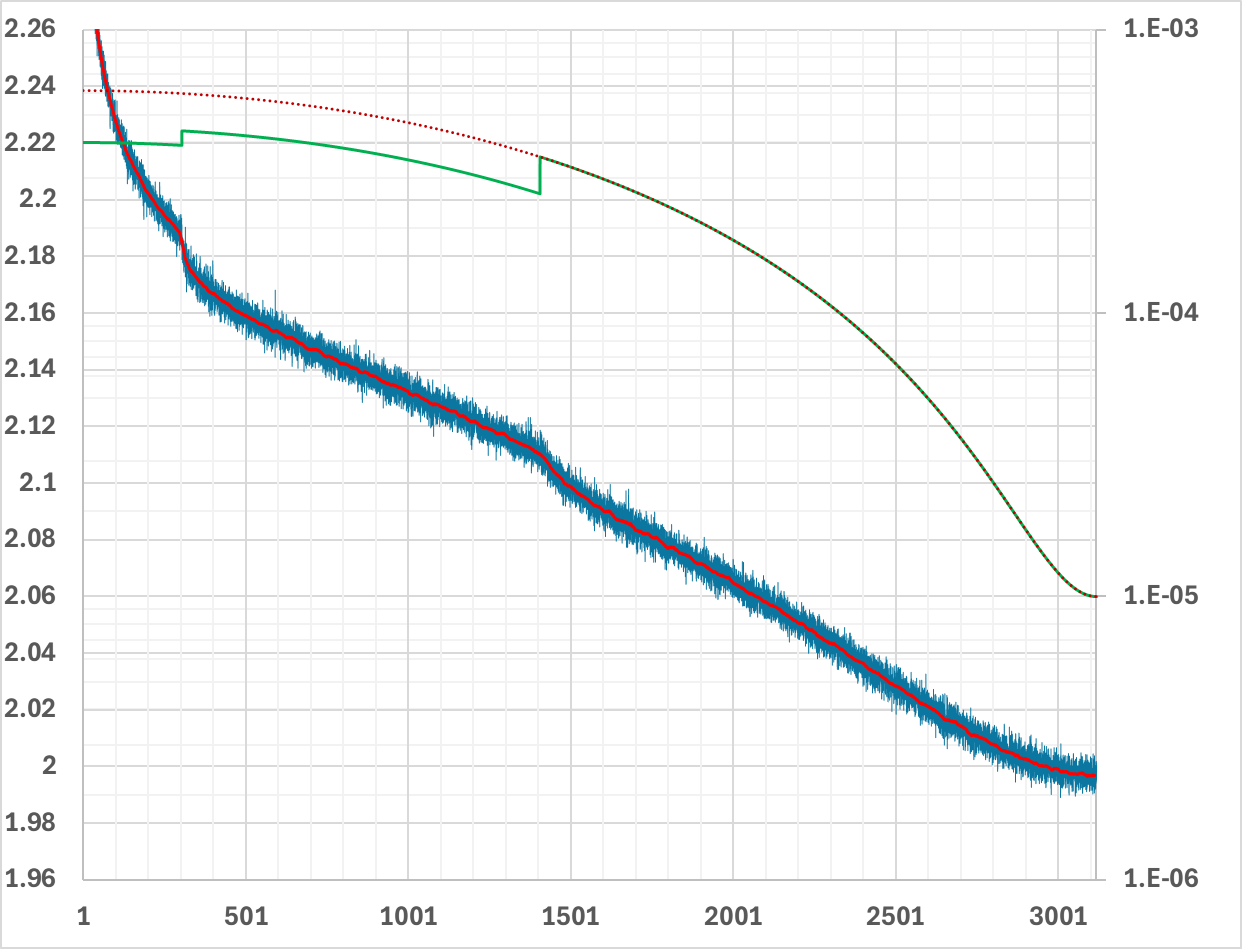}
        \caption{1.5B}
        \label{fig:trainingloss_subfig3}
    \end{subfigure}
    \hfill 
    \begin{subfigure}[b]{0.45\textwidth}
        \includegraphics[width=\textwidth]{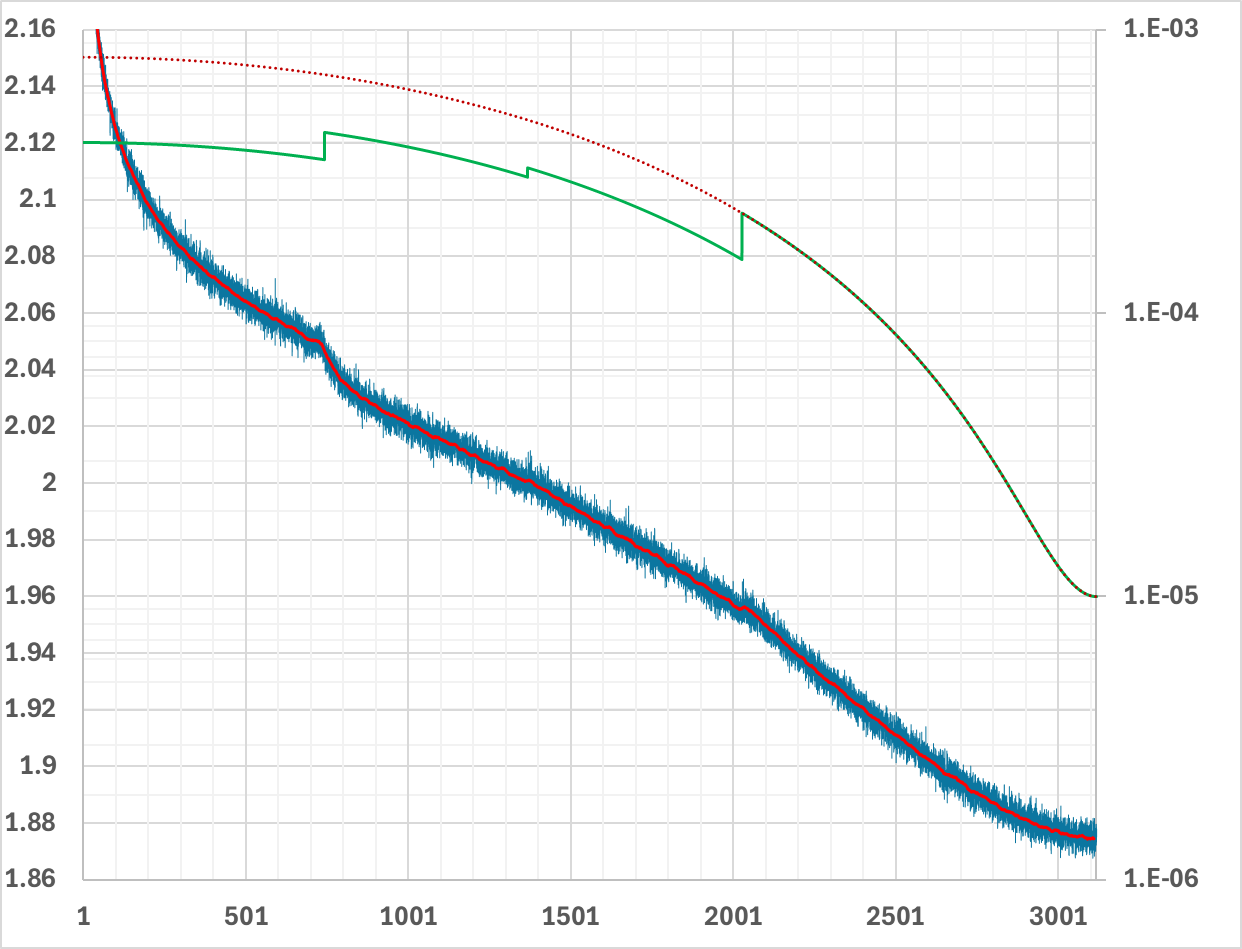}
        \caption{2.9B}
        \label{fig:trainingloss_subfig4}
    \end{subfigure}

    \caption{Training loss curve for RWKV-7 World models. Blue line: loss; Red line: smoothed loss; Green line: actual learning rate; Dotted red line: learning rate schedule.}
    \label{fig:trainingloss_all}
\end{figure*}

\section{Additional Architecture Discussion}
\label{sec:arch_details}

The following is a general overview of the RWKV-7 architecture and selected rationale for its design choices:

The RWKV-7 Time Mixer begins with a feature that has been in RWKV since its inception: token shift. Token shift is a variety of 1D short convolution that is intended to allow the model to create induction heads within a single layer. At its core, token shift is just a linear interpolation between the current and prior tokens on a per channel basis, where the amount per channel is a learned parameter. Many other modern models (e.g. Mamba) have begun including short convolutions before attention replacements (See also \cite{causal-conv1d}). RWKV-6 introduced an advanced new form of token shift that was data dependent. While we found that this was beneficial in terms of loss decrease per step, we made the judgement call that the improvement in training and inference efficiency was not worthwhile. Therefore, RWKV-7 includes only the simple token shift found in RWKV-4 and RWKV-5.

RWKV-7 time mixing follows the overall form of delta rule, applying an SGD-like mechanism to train the state at test time, such that when presented with a key it will respond with an appropriately matching value, like those it has been shown previously. This can be conceptualized as a simple order of operations: 1) decay the state 2) remove part of the old value located at the current key 3) add part of the new value into the current key. Then, a query (we call it receptance) is applied to the state in order to request the value located at a specific key. After that, the remaining operations normalize the returned values to keep numerical sizing consistent, apply gating, and recombine the heads to obtain the output.

In SGD terms, the decay found in models like RWKV-7 and Gated DeltaNet can be thought of as similar to weight decay. RWKV models since RWKV-4 have employed some form of vector-valued decay on the state in place of positional encoding. We continue this tradition with RWKV-7. While vector valued decay is harder to compute efficiently than its scalar valued equivalent, it brings a significant improvement in loss per step. Many other models, like Mamba-2, make the choice to maximize training efficiency and simplify their kernels by using scalar decay, purposely trading quality for speed.

We are forced to limit the maximum decay that can occur in a given time-step both in order to maintain training stability and to assist the creation of fast but numerically stable kernel implementations. We have many varieties of kernel available today, and are still working on new designs with enhanced efficiency and accuracy. Please see Parallelizing Linear Transformers with the Delta Rule over Sequence Length \citep{yang2024_deltanet} and its accompanying blog post series for insightful details on many components of such algorithms. We hope to be able to reduce the decay limit further in future revisions.

This lower bound on the decay multiplier is expressed by the $\mathrm{exp}(-e^{-0.5} \sigma(d_t))$ formula for decay. It is a rearrangement of an original source formula $\mathrm{exp}(-\mathrm{exp}(-0.5 - \mathrm{softplus}(d_t)))$. The outer $\mathrm{exp}(-\mathrm{exp}(x))$ in the original is nearly a flipped version of sigmoid, but with a better gradient. The $-0.5 - \mathrm{softplus}(d_t)$ is a way of limiting the inputs to this sigmoid-like function to be less than -0.5, which results in the final decay being greater than 0.545. This means that decay can remove at most 45.5\% of the pre-existing values in the wkv state from one timestep to the next. More can be then removed by the delta rule mechanism.

The in-context learning rate is usually equivalent to the learning rate in SGD. Ours is a bit less restrictive than the traditional delta rule or SGD would allow. We also make this data dependent and extend it to be vector-valued instead of being merely a scalar. This is allows each key channel to have its own learning rate, dependent on the current token. Note that in TTT, Gated DeltaNet, and Titans although the in-context learning rates are data dependent, they are only scalar valued.

Some of the design of RWKV-7 with regard to the in-context learning rate is theoretically motivated but may not be apparent from the equations. RWKV-6c, a later v6 sub-version with no trained models but which worked well experimentally, kept its state somewhat normalized using a new design. As mentioned early in this paper and consistent with the observations in \cite{yang2024gatedlinearattentiontransformers, chen2024stuffedmambastatecollapse}, there is a fundamental issue with linear attention numerically growing in an unbounded fashion, and the base RWKV-6 revision has this problem. RWKV-6c, however, defeats this issue by ensuring that the amount of key added to the state is never more than the amount removed by decay. It accomplishes this by pre-multiplying the key by one minus the decay before adding it into the wkv state. 

Early versions of RWKV-7 attempted to use similar mathematical formulations to keep everything normalized. But experimentally we found that the model both did best and was most efficient when we allowed it enough mathematical leeway to make these decisions on its own, rather than enforcing them. So instead of pre-multiplying the key, we give the replacement key latitude in its learning rate via the replacement rate booster.

Similarly, the removal key is decoupled from the replacement key. Note that the removal key is normalized. This is important in delta rule variations, because otherwise the outer product of removal key with itself would cause unwanted changes in the amount removed due to the implicit squaring of its length during that process.

Another point that may not be clear upon first examination is the importance of RWKV-7 being outside of the $\mathsf{TC}^0$ complexity class in which transformers, linear attention, and many varieties of SSMs lie. Consider a single layer of the Key Value Cache of a transformer. It is appended to upon each new token processed, and can never be changed. RWKV-7, however, can and does edit its state at each subsequent token. This can include simple operations like replacement, which can be viewed as functioning similarly to a transformer losing a KV Cache entry via some sparsity mechanism e.g. a sliding window. But it can also include complex operations like swapping two entries in the state; operations a transformer cannot do within its immutable KV Cache. These kinds of operations can be used to enact computations on the state itself, which lends greater computational abilities to RWKV-7 when processing a fixed set of inputs. You might think of the RWKV-7 state as being like an internal scratchpad.

There are easy problems that simply cannot be solved by transformers on fixed inputs because they lack this ability. One example is that if you give a transformer an ordered set of items and a list of which ones swap places, it will not be able to tell you which item ends up in which position at the end. Both architectures gain even more power when they are allowed extra outputs (e.g. chain of thought), becoming Turing complete. But this requires costly additional inference-time computation, whereas the RWKV-7 state does not.



A possible way to interpret the designation of RWKV-7 is training a model to learn (how to train an internal model). A WKV head can be viewed as a linear transformation that takes the input (receptance) $r$ and outputs $o$. Every WKV head can be regarded as a linear model that can update its weights as the context progresses. The entries of WKV states are exactly the parameters of these models.

After receptance is applied to the WKV state, we normalize the result on a per head basis. This has become common across many linear attention and post-linear-attention architectures. It is a way of ensuring that change in the numerical size of the state over time does not impact the model's ability to use the state. The original formulations of RWKV-4 used a denominator in place of normalization to achieve a similar effect, but it is costly, harder to code, and uses more memory.

The readout part of RWKV-7 differ from RWKV-6 by the addition of the per-head $\left (r_t (\rho \odot \tilde{k}_t)^T \right)$ scalar gating of $v_t$. The trainable parameter $\rho$ resembles the design of "time-first" $u$ term existing from RWKV-4 to RWKV-6, under the belief that the information of the current token deserves special treatment. The "time-first" term was fused inside the WKV kernel in previous architectures of RWKV, but we decide to extract this term out of the kernel to simplify the implementation of RWKV-7.

\section{Pseudocode For RWKV-7}
\label{sec:pseudocode}

\begin{python}
import math
import torch as th
import torch.nn.functional as F

def rwkv_timemix(params, x, vprime_0, shift_state, wkv_state, layer_id):
    B, T, C = x.shape       # batch_size, sequence_length, d_model
    N = wkv_state.shape[-1] # head size
    H = C // N              # head count
    
    # weight preparation
    x_shifted   = th.cat([shift_state, x[:, :-1, :]], dim=1)
    shift_state = x[:, -1:, :]

    x_receptance = th.lerp(x, x_shifted, params.mu_r)
    x_decay      = th.lerp(x, x_shifted, params.mu_d)
    x_key        = th.lerp(x, x_shifted, params.mu_k)
    x_value      = th.lerp(x, x_shifted, params.mu_v)
    x_iclr       = th.lerp(x, x_shifted, params.mu_a)
    x_gate       = th.lerp(x, x_shifted, params.mu_g)

    r      = x_receptance @ params.W_receptance # BTC
    d      = params.decay_lora(x_decay)         # BTC 
    k      = x_key @ params.W_key               # BTC
    vprime = x_value @ params.W_value           # BTC
    gate   = params.gate_lora(x_gate)           # BTC
    iclr   = params.iclr_lora(x_iclr).sigmoid() # BTC

    # 1st layer: return value to use in later layers, no interpolation
    if layer_id == 0:
        v = vprime_0 = vprime
    else:
        value_residual_gate = th.sigmoid(params.nu_lora(x_value))
        v  = th.lerp(vprime, vprime_0, value_residual_gate)

    decay = th.exp(-math.exp(-0.5) * d.to(th.float).sigmoid())
    removal_k = k * params.removal_key_multiplier
    removal_k = F.normalize(removal_k.view(B,T,H,-1), dim=-1).view(B,T,C)
    replacement_k = th.lerp(k, k * iclr, params.iclr_mix_amt)

    # recurrence relation
    out = th.empty_like(x).view(B,T,C)
    for t in range(T):
        # single step of wkv state transition
        decay_t         = decay[:, t].view(B, H, N, 1)
        iclr_t          = iclr[:, t].view(B, H, N, 1)
        removal_k_t     = removal_k[:, t].view(B, H, N, 1)
        replacement_k_t = replacement_k[:, t].view(B, H, N, 1)
        v_t             = v[:, t].view(B, H, N, 1)
        r_t             = r[:, t].view(B, H, N, 1)
        
        wkv_state = wkv_state * decay_t.mT - wkv_state @ removal_k_t @ (iclr_t * removal_k_t).mT
        wkv_state = wkv_state + v_t @ replacement_k_t.mT
        y = wkv_state @ r_t  # BHVK @ BHK1 = BHV1
        out[:,t] = y.view(B,C) # recombine heads

    # normalization
    y = F.group_norm(y.view(B * T, -1), num_groups=H, params.ln_x.weight, params.ln_x.bias, eps = H * 1e-5).view(B, T, -1)

    # bonus and output
    bonus = ((r*k*params.bonus_multiplier).sum(dim=-1, keepdim=True) * v)
    bonus = bonus.view(B,T,C)   # recombine heads
    out = out + bonus
    out = (out * gate) @ params.W_output # BTC

    return out, v0, shift_state, wkv_state

def rwkv_channelmix(x, shift_state):
    x_shifted = th.cat([shift_state, x[:, :-1, :]], dim=1)
    shift_state = x[:, -1:, :]
    xk = th.lerp(x, x_shifted, params.mu_x)
    k = params.W_k @ xk
    v = params.W_v @ th.relu(k).square()
    return v, shift_state

def rwkv_model(params, input_ids, state):
    x = params.embedding(input_ids)
    x = params.layer_norm_pre(x)

    v0 = None
    layer_id = -1
    for layer in params.layers:
        layer_id = layer_id + 1
        dx, v0, state.timemix_shiftstate, state.timemix_wkvstate = rwkv_timemix(
            layer.time_mix, 
            layer.layer_norm_timemix(x), 
            v0, 
            state.timemix_shiftstate, 
            state.timemix_wkvstate,
            layer_id
        )
        x = x + dx
        dx, state.chanmix_shiftstate = rwkv_chanmix(
            layer.channel_mix, 
            layer.layer_norm_chanmix(x), 
            state.chanmix_shiftstate
        )
        x = x + dx

    x = params.layer_norm_out(x)
    logits = params.head(x)
    return logits, state

\end{python}

\section{PyTorch code For Naive WKV7 Kernel (Forward and Backward)}
\label{sec:pseudocode_wkv7}
\begin{python}

class WKV7_Kernel(nn.Module):
    def __init__(self):
        super().__init__()
    
    def forward(self, r, w, k, v, a, b, state):
        r = r.view(B, T, H, N)
        k = k.view(B, T, H, N)
        v = v.view(B, T, H, N)
        a = a.view(B, T, H, N)
        b = b.view(B, T, H, N)
        self.state_cache = torch.zeros((B, T+1, H, N, N))
        self.state_cache[:, 0, :] = state
        w = torch.exp(-torch.exp(w.view(B, T, H, N)))
        out = torch.zeros((B, T, H, N))
        for t in range(T):
            kk = k[:, t, :]
            rr = r[:, t, :]
            vv = v[:, t, :]
            aa = a[:, t, :]
            bb = b[:, t, :]
            state = (
                state * w[: , t, :, None, :]
                + torch.einsum('bhik,bhk,bhj->bhij', state, aa, bb) 
                + torch.einsum('bhj,bhi->bhij', kk, vv)
            )
            self.state_cache[:, t+1, :] = state
            out[:, t, :] = torch.einsum('bhj,bhij->bhi', rr, state)
        return out, state
    
    def backward(self, r, w0, k, v, a, b, gout, gstate):
        gout = gout.view(B, T, H, N)
        gr = torch.zeros((B, T, H, N))
        gw = torch.zeros((B, T, H, N))
        gk = torch.zeros((B, T, H, N))
        gv = torch.zeros((B, T, H, N))
        ga = torch.zeros((B, T, H, N))
        gb = torch.zeros((B, T, H, N))
        w = torch.exp(-torch.exp(w0.view(B, T, H, N)))
        for t in range(T-1, -1, -1):
            gr[:, t, :] = torch.matmul(
            gout[:,t,:,None,:], self.state_cache[:,t+1,:]).squeeze(-2)
            gstate.add_(torch.matmul(
            gout[:,t,:,:,None], r[:,t,:,None,:]))
            gk[:, t, :] = torch.matmul(
            v[:,t,:,None,:], gstate).squeeze(-2) 
            gv[:, t, :] = torch.matmul(
            gstate, k[:,t,:,:,None]).squeeze(-1)
            ga[:, t, :] = torch.einsum(
            'bhik,bhj,bhij->bhk', 
            self.state_cache[:, t, :], b[:, t, :], gstate)
            gb[:, t, :] = torch.einsum(
            'bhik,bhk,bhij->bhj', 
            self.state_cache[:, t, :], a[:, t, :], gstate)
            gw[:, t, :] = torch.einsum(
            'bhij,bhij->bhj', 
            self.state_cache[:, t, :], gstate)
            gstate      = torch.einsum(
            'bhj,bhij->bhij', w[:, t, :], gstate) \
            + torch.einsum(
            'bhk,bhj,bhij->bhik', a[:, t, :], b[:, t, :], gstate)
        gw = -torch.exp(w0-torch.exp(w0)) * gw
        return gr, gw, gk, gv, ga, gb, gstate
\end{python}

\section{Board Game Modeling}
\label{sec:rwkv_othello}

Previous research (\citet{schultz2024masteringboardgamesexternal, topsakal2024evaluatinglargelanguagemodels}) has shown that board games can serve as potentially effective tools for evaluating a model's capabilities. As an RNN with powerful state tracking abilities, RWKV-7 is highly suitable for board game modeling and conducting extended searches directly within context to find better strategies. As an early exploration, we tested RWKV-7's board game modeling capabilities and its ability to perform in-context search on the game of Othello (Reversi). We found an expanded formulation of RWKV-7 to be useful for this task, which we designate as RWKV-7a: $\bm{S}_t = \bm{S}_{t-1} \mathrm{diag}(w_t)(I - c\hat{\kappa}_t^T (a_t \odot \hat{\kappa}_t))  + v_t^T k_t $ This formula allows the full range of (-1,1) eigenvalues when $c=2$.

\paragraph{Data}
We designed training data that guides the model to predict legal moves, evaluate these moves, and perform Alpha-Beta pruning. Each training sample consists of three components:
\begin{itemize}
    \item \textbf{Input section}: Captures the game state, including the current board position, active player, and search parameters (tree depth and width settings).
    \item \textbf{Reasoning section}: Varies with search settings:
    \begin{itemize}
        \item Without search (depth or width = 1): Lists legal moves and their evaluations
        \item With search (depth and width > 1): Performs Alpha-Beta pruning to find optimal moves
    \end{itemize}
    All move evaluations in the reasoning section were generated using the Egaroucid engine \citep{Yamana_Egaroucid_2025}.
    \item \textbf{Output section}: This contains the final move decision and displays the resulting board position after implementing the move.
\end{itemize}
An example of our training data is shown in sample \ref{lst:othello_data_example}.

\begin{lstlisting}[caption={Training Data Example}, label={lst:othello_data_example}]
<input>
. . o o * * . . 
* . o o o * * . 
* * * o * * o o 
* * * * * o o o 
* o * o o * o o 
* * o o o * o . 
* . o o o o . . 
. . . . o o . . 
NEXT o 
MAX_WIDTH-2
MAX_DEPTH-2
</input>
<reasoning>
Possible moves and score: g1 -19 h1 -01 b2 -08 h2 -23 b7 -12 g7 -09
<stack>
Remaining_Depth:2
Max_Node Alpha: -in Beta: +in Best: -- Current: h1 -01 Unexplored: b2 -08
</stack>
=> Search next node
[Depth limit not reached]
<board>
. . o o * * . o 
* . o o o * o . 
* * * o * o o o 
* * * * o o o o 
* o * o o * o o 
* * o o o * o . 
* . o o o o . . 
. . . . o o . . 
</board>
NEXT * 
Possible moves and score: b1 +02 b2 +05 h2 +10 h6 +03 h7 +08 c8 +06 d8 +01 g8 +09
[Current player has legal moves]
[Internal node - expand]
<stack>
Remaining_Depth:1
Min_Node Alpha: -in Beta: +in Best: -- Current: d8 +01 Unexplored: b1 +02
Max_Node Alpha: -in Beta: +in Best: -- Current: h1 -01 Unexplored: b2 -08
</stack>
=> Search next node
[Depth limit reached - evaluate all leaves]
[Updated stack]
<stack>
Remaining_Depth:1
Min_Node Alpha: -in Beta: +01 Best: d8 Current: -- --- Unexplored:
Max_Node Alpha: +01 Beta: +in Best: h1 Current: b2 -08 Unexplored:
</stack>
=> Search next node
[Depth limit not reached]
<board>
. . o o * * . . 
* o o o o * * . 
* o o o * * o o 
* o * o * o o o 
* o * o o * o o 
* * o o o * o . 
* . o o o o . . 
. . . . o o . . 
</board>
NEXT * 
Possible moves and score: a1 -07 b1 +13 h6 -01 b7 -08 g7 +08 h7 -02 c8 +01 d8 -03 g8 +04
[Current player has legal moves]
[Internal node - expand]
<stack>
Remaining_Depth:1
Min_Node Alpha: +01 Beta: +in Best: -- Current: b7 -08 Unexplored: a1 -07
Max_Node Alpha: +01 Beta: +in Best: h1 Current: b2 -08 Unexplored:
</stack>
=> Search next node
[Depth limit reached - evaluate all leaves]
[Updated stack]
<stack>
Remaining_Depth:1
Min_Node Alpha: +01 Beta: -08 Best: b7 Current: -- --- Unexplored:
Max_Node Alpha: +01 Beta: +in Best: h1 Current: -- --- Unexplored:
</stack>
[End of search]
> Playing h1 
</reasoning>
<output>
 h1 
. . o o * * . o 
* . o o o * o . 
* * * o * o o o 
* * * * o o o o 
* o * o o * o o 
* * o o o * o . 
* . o o o o . . 
. . . . o o . . 
</output>
\end{lstlisting}

\paragraph{Training}
We trained RWKV-7 models with 9M and 26M parameters respectively on 6 million samples. By tracking the loss across different token types during training (figure \ref{fig:othello_loss}), we noticed that the model first mastered output formatting, then developed board state tracking capability, and continuously improved its evaluation accuracy throughout training.
\begin{figure*}[ht!]
    \centering
    \includegraphics[width=\linewidth]{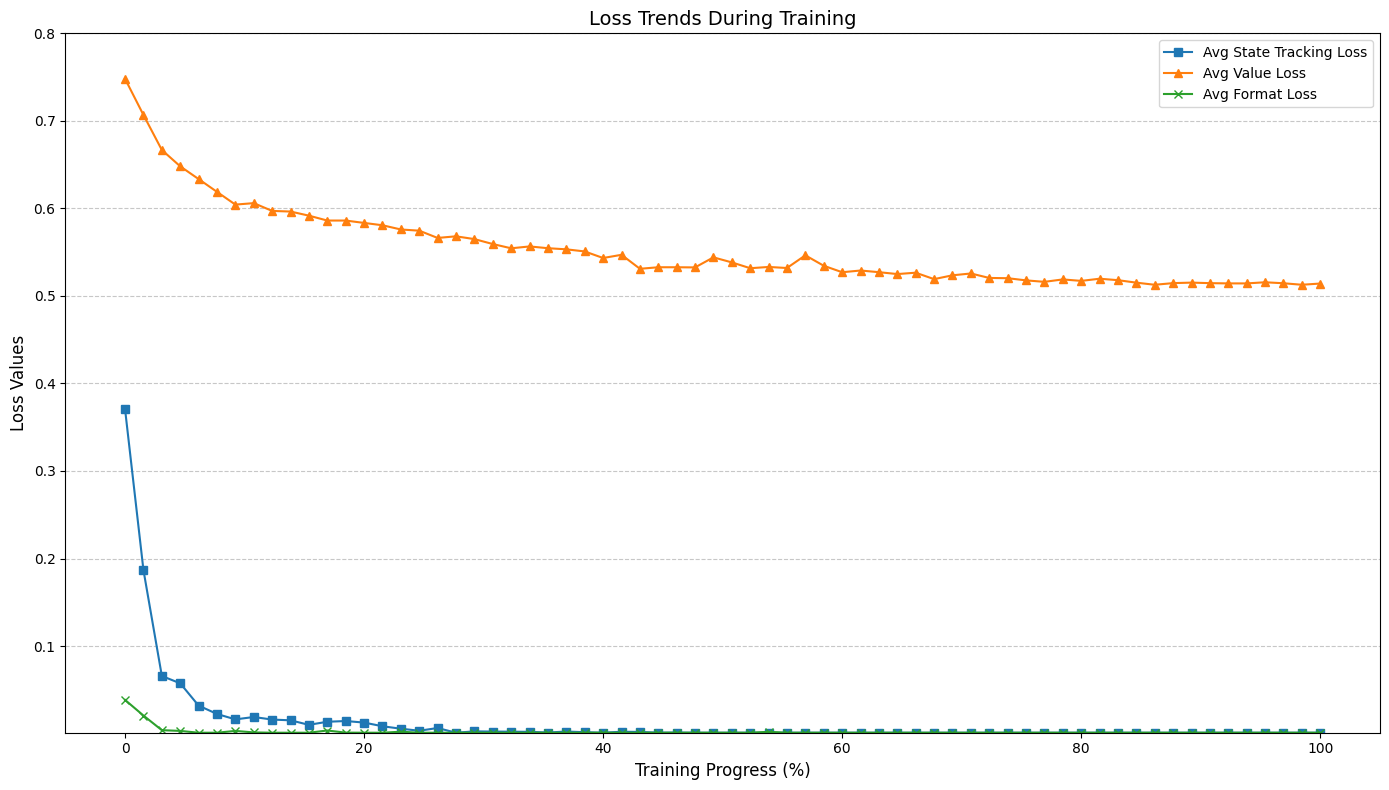}
    \caption{Reversi Training loss for different token types}
    \label{fig:othello_loss}
\end{figure*}

\paragraph{Evaluation}
We control the model's thinking budget by setting the width and depth of Alpha-beta pruning, and test the win rate against baseline a model (depth=1, width=1) under different budgets. As shown in figure \ref{fig:othello_tts}, by increasing the testing budget, RWKV-7 can effectively search for better strategies, demonstrating a positive test-time scaling law on this task.
\begin{figure*}[ht!]
    \centering
    \includegraphics[width=\linewidth]{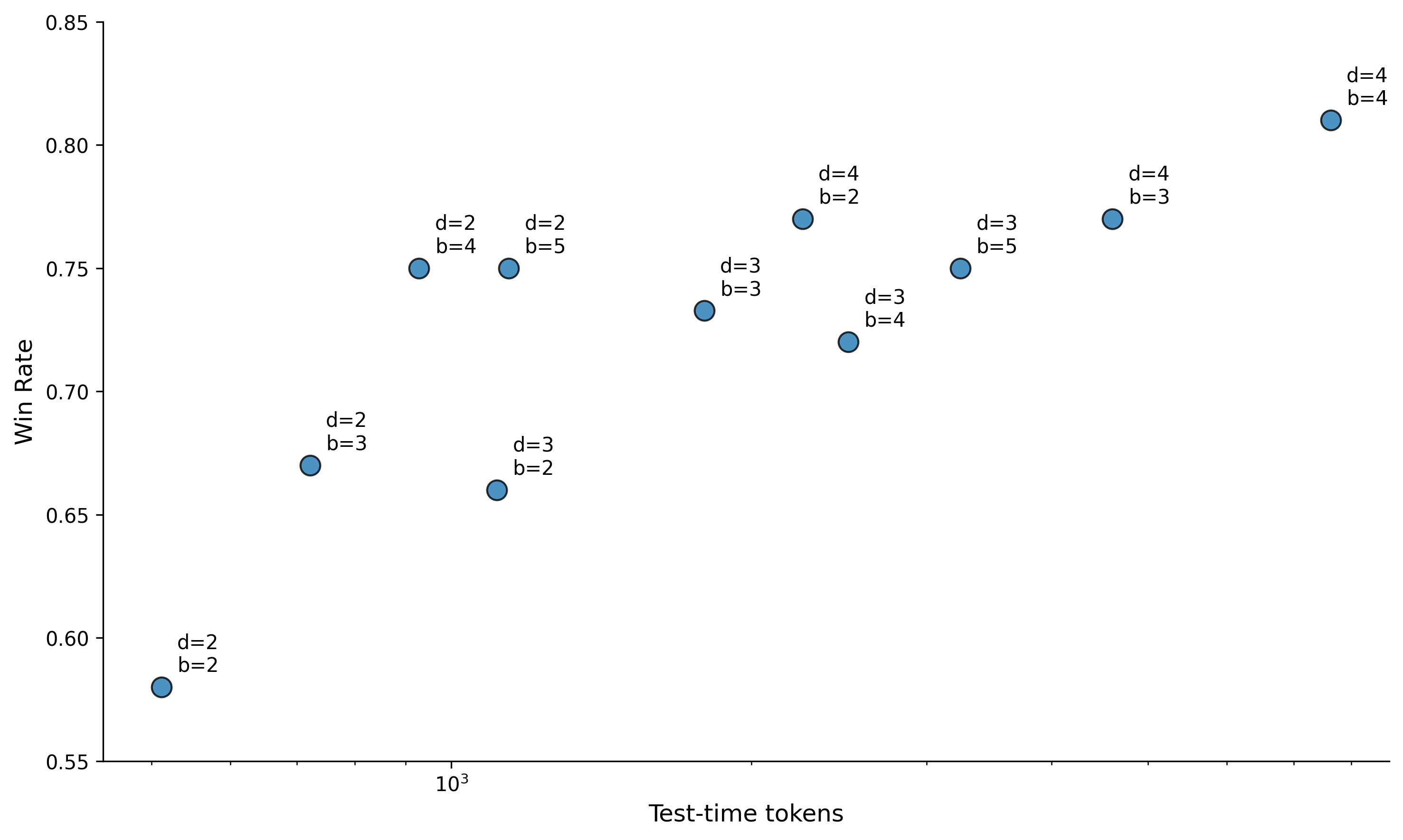}
    \caption{Reversi Token consumption and win rates under different search configurations}
    \label{fig:othello_tts}
\end{figure*}

\section{State Inspections}
\label{sec:state_inspections}

We inspected the WKV matrix states of RWKV-7 and compared them with those of RWKV-5 and RWKV-6. We analyzed the following aspects: 

\begin{enumerate}
    \item Visualization example of WKV state matrices, to better understand the structure and behavior of the matrices.
    \item Root Mean Square (RMS) of matrix elements, to assess the numerical stability of those WKV matrices.
    \item Stable Rank (SR) of the matrices \citep{rudelson2007samplingfromlargematrices}, defined as the square of (the Frobenius norm divided by the spectral norm): 
    $$\mathrm{SR} (A) := \left(\frac{\left\lVert A \right\rVert _F}{\left\lVert A \right\rVert _2} \right) ^2 . $$
    This serves as a rough measure to the effective amount of information of the states.
\end{enumerate}

For this analysis, we selected 10 samples from the validation set of the PG19 dataset \citep{rae2019compressivetransformerslongrangesequence}, ensuring that each sample had a sequence length of at least 8192 tokens. We tested the 1.5B parameter versions of RWKV-5, RWKV-6, and RWKV-7, plotting on the appearance, stable rank and RMS values of their WKV matrices.

\begin{figure*}[ht!]
    \centering
    \begin{subfigure}[b]{0.95\textwidth}
        \includegraphics[width=\textwidth]{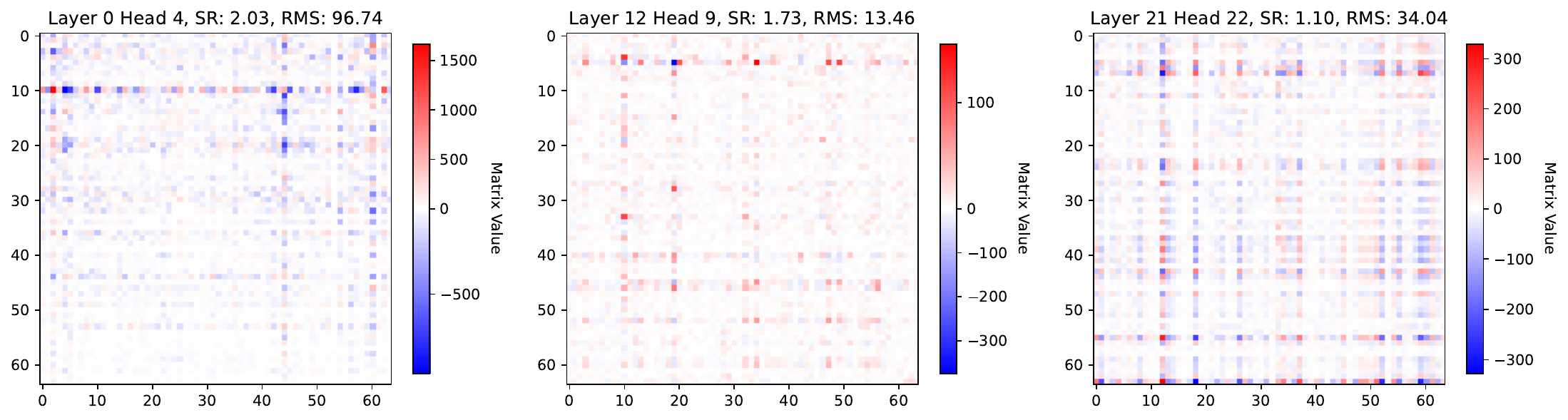}
        \caption{RWKV-5 1.5B}
        \label{fig:statesubfig1}
    \end{subfigure}

    \vspace{1em} 

    \begin{subfigure}[b]{0.95\textwidth}
        \includegraphics[width=\textwidth]{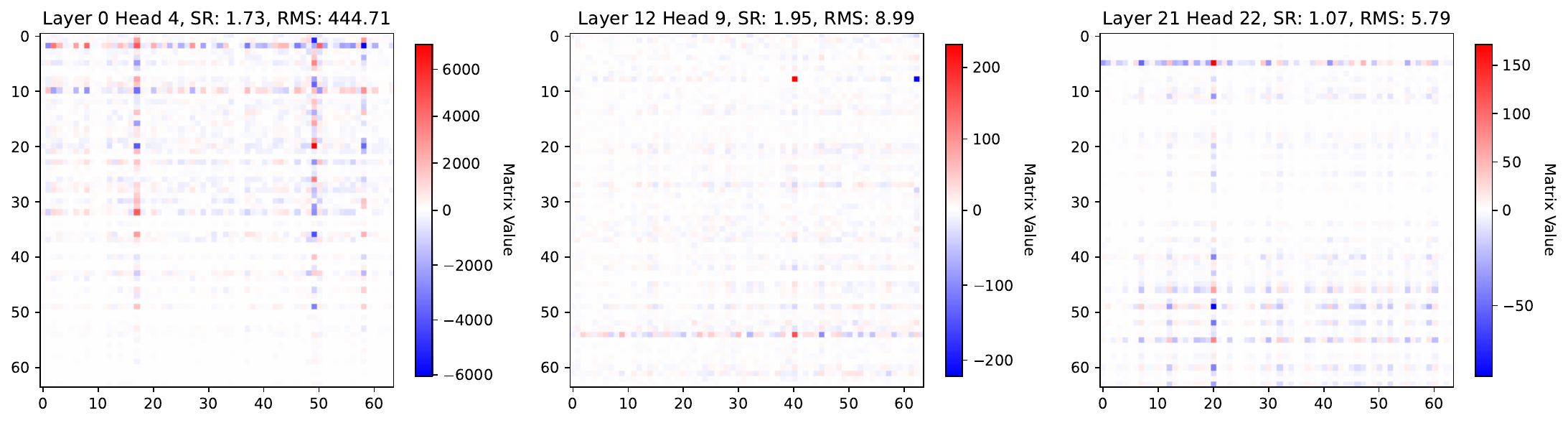}
        \caption{RWKV-6 1.6B}
        \label{fig:statesubfig2}
    \end{subfigure}

    \vspace{1em} 
    
     \begin{subfigure}[b]{0.95\textwidth}
        \includegraphics[width=\textwidth]{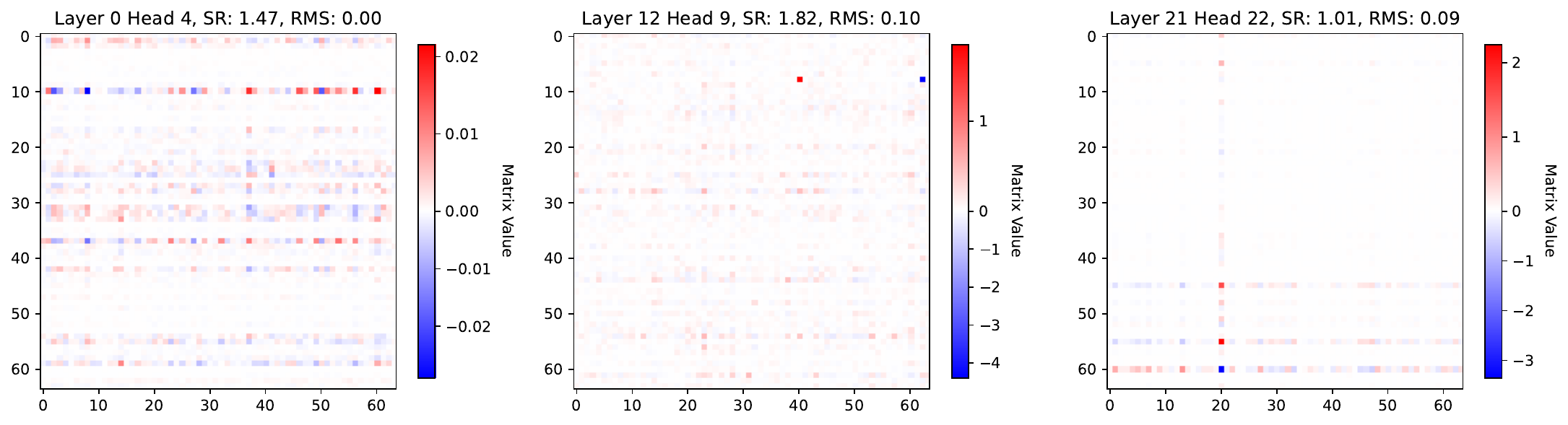}
        \caption{RWKV-7 1.5B}
        \label{fig:statesubfig3}
    \end{subfigure}

    \caption{Visualization example of RWKV's WKV matrices.}
    \label{fig:rwkv_state_inspect_appearance}
\end{figure*}

We observed that the WKV states of RWKV-7 had significantly smaller RMS values compared to RWKV-5 and RWKV-6. The entries of the WKV matrix in RWKV-7 were consistently of order $ O(1) $ (i.e., no outliers, and does not grow over context length), whereas RWKV-5 and RWKV-6 constantly produced outliers on the order of thousands (see Figures \ref{fig:rwkv_state_inspect_appearance} and \ref{fig:rwkv_state_rms_sr}). This indicates that RWKV-7 has better numerical stability during training and inference.

\begin{figure}[ht!]
    \centering
    \includegraphics[width=\textwidth]{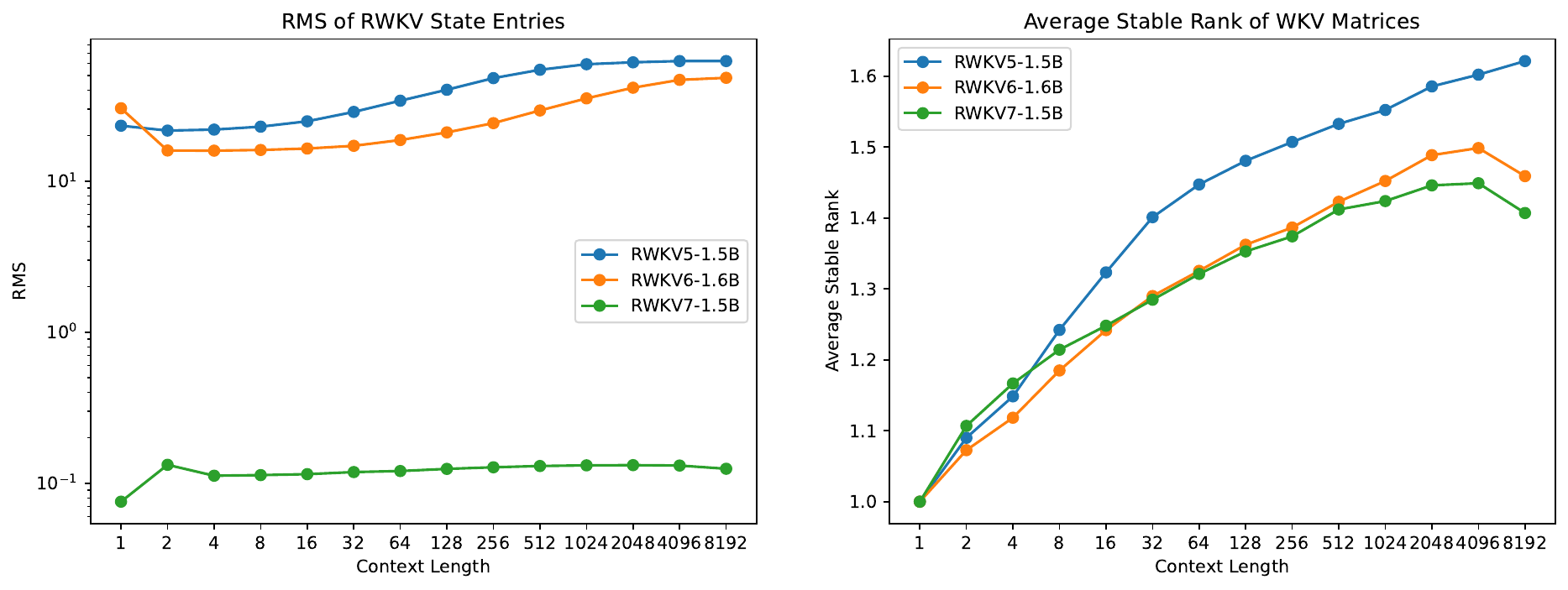}
    \caption{The global RMS and average stable rank of WKV matrices, plotted over sequence length.}
    \label{fig:rwkv_state_rms_sr}
\end{figure}

Interestingly, the stable rank of the WKV matrix in RWKV-7 has shown to be lower than that of RWKV-5 and RWKV-6 for context longer than 32. A lower stable rank typically suggests that the matrix contains less information or has a more compressed representation. However, this observation appears to contradict the experimental results showing that RWKV-7 performs better on tasks requiring long-term memory. We hypothesize that this contradiction can be explained by RWKV-7's enhanced state evolution mechanism, which enables RWKV-7 to achieve stronger information compression and utilization capabilities and allowing it to maintain important information in a more compact form. This dual capability likely contributes to both the reduced stable rank and the improved performance on memory-intensive tasks.

\section{Ablation Experiments}
\paragraph{The Pile}
\label{subsec:ablation-pile}
To demonstrate the architectural advantages of RWKV-7, we conducted ablation experiments by training models of three different sizes—168M, 421M, and 1.47B parameters—on the full Pile dataset \citep{gao2020pile}. 


\begin{table}[ht]
    \setlength\extrarowheight{-5pt}
    \setlength\tabcolsep{5pt}
    \centering
    \begin{adjustbox}{max width=\linewidth}
    \begin{tabular}{lrrrrrrrrrrr}
\toprule
    \bf Model & \bf Tokens & \bf lmb.o & \bf lmb.o & \bf hella & \bf piqa & \bf arcE & \bf arcC & \bf glue & \bf WG & \bf sciq & \bf avg \\
          & (B) & ppl $\downarrow$ & acc $\uparrow$ & acc\_n $\uparrow$ & acc $\uparrow$ & acc $\uparrow$ & acc $\uparrow$ & acc $\uparrow$ & acc $\uparrow$ & acc $\uparrow$ & acc $\uparrow$ \\
\midrule
    RWKV4-169M-Pile & 332 & 29.2 & 33.2 & 32.2 & 64.8 & 47.1 & 19.9 & 47.6 & 51.2 & 77.6 & 46.7 \\
    Pythia-160M & 300 & 37.3 & 35.4 & 30.3 & 62.3 & 43.6 & 19.5 & 46.5 & 51.3 & 75.4 & 45.5 \\
    Mamba-130M & 300 & 16.0 & 44.3 & 35.3 & 64.5 & 48.0 & 19.7 & 48.5 & 52.1 & 78.2 & 48.8 \\
    Mamba2-130M & 300 & 16.8 & 43.9 & 35.3 & 64.9 & 47.4 & \bf 20.9 & 45.8 & \bf 52.6 & 81.0 & 49.0 \\
    RWKV6-173M-Pile & 332 & 16.0 & 44.5 & 34.9 & 64.4 & \bf 48.3 & 19.7 & 48.9 & 51.9 & 80.6 & 49.2 \\
    RWKV7-168M-Pile & 332 & \bf 14.2 & \bf 45.7 & \bf 36.9 & \bf 65.5 & 47.9 & 19.7 & \bf 49.1 & 52.4 & \bf 81.6 & \bf 49.8 \\
    \midrule

    RWKV4-430M-Pile & 332 & 13.1 & 45.1 & 40.8 & 67.7 & 52.8 & 24.1 & 49.4 & 52.0 & 80.7 & 51.6 \\
    Pythia-410M & 300 & 10.8 & 51.6 & 40.6 & 66.7 & 51.9 & 21.4 & 44.1 & 53.3 & 81.5 & 51.4 \\
    Mamba-370M & 300 & 8.1 & 55.6 & 46.5 & 69.5 & 55.0 & 25.0 & 46.8 & 55.5 & 84.5 & 54.8 \\
    Mamba2-370M & 300 & 8.0 & 55.9 & 46.9 & \bf 70.5 & 54.8 & \bf 25.1 & 48.1 & 55.4 & 85.3 & 55.2 \\
    RWKV7-421M-Pile & 332 & \bf 7.2 & \bf 57.9 & \bf 48.0 & 69.3 & \bf 56.3 & 23.5 & \bf 50.3 & \bf 56.4 & \bf 85.9 & \bf 56.0 \\
    
    \midrule

    RWKV4-1.5B-Pile & 332 & 7.1 & 56.4 & 52.8 & 72.2 & 60.7 & 24.9 & \bf 50.5 & 54.3 & 85.8 & 57.2 \\
    Pythia-1.4B & 300 & 6.1 & 61.7 & 52.0 & 70.8 & 60.5 & 26.1 & 47.7 & 57.5 & 86.6 & 57.9 \\
    Mamba-1.4B & 300 & 5.0 & 65.0 & 59.1 & \bf 74.2 & \bf 65.5 & 29.8 & 46.2 & 61.4 & 87.3 & 61.1 \\
    Mamba2-1.3B & 300 & 5.0 & 65.6 & 59.9 & 73.2 & 64.2 & 29.9 & 46.1 & 61.0 & 89.8 & 61.2 \\
    RWKV7-1.47B-Pile & 332 & \bf 4.8 & \bf 67.0 & \bf 61.8 & 73.6 & 64.9 & \bf 30.2 & 48.0 & \bf 64.4 & \bf 91.1 & \bf 62.6 \\
\bottomrule    
    \end{tabular}
\end{adjustbox}
    
  \caption{\centering{English Focused Benchmarks, including LAMBADA (\textbf{lmb.o}) \citep{paperno2016lambada}, Hellaswag (\textbf{hella}) \citep{zellers2019hellaswagmachinereallyfinish}, PIQA \citep{bisk2020piqa}, AI2 ARC (\textbf{arcE}, \textbf{arcC}) \citep{bhakthavatsalam2021think}, GLUE \citep{wang2018glue}, Winogrande (\textbf{WG}) \citep{sakaguchi2021winogrande}, SciQ \citep{welbl2017crowdsourcing}.}}
  \label{tab:pile}%

\end{table}%

These results highlight the consistent improvements brought by the RWKV-7 architecture over earlier RWKV models, even when trained on the same dataset. As all RWKV models shown were trained under identical configurations and dataset, this underscores the inherent architectural advantages of RWKV-7 over its predecessors. Notably, the performance gap sustains as the model size increases, suggesting that RWKV-7 may scale more effectively than its predecessors.


\paragraph{Architecture Choice Ablations}
\label{sec:ablation-minipile}

We ran a series of ablation experiments to validate the various improvements made in RWKV-7 versus some of the more restrictive choices seen in other DeltaNet and post-DeltaNet related work. These improvements were:
\begin{itemize}
    \item using a vector-valued decay $w$ instead of a scalar-valued decay;
    \item using a vector-valued in-context learning rate $a$ instead of a scalar-valued rate;
    \item using different removal $\kappa$ and replacement $\tilde{k}$ keys, instead of the same for both; and
    \item adding the bonus term $u_{t,j}v_{t,j}$ to the output step (Equation \ref{subeq:addbonus}).
\end{itemize}

We trained a small 6-layer, $d_{model}=768$ Goose model on the 1.6B token minipile~\citep{kaddour2023minipile} dataset at context length 512 and obtained the loss results shown in Table~\ref{tab:abla_specific}.

\begin{table}[ht]
    \setlength\extrarowheight{-5pt}
    \setlength\tabcolsep{2pt}
    \centering
    \begin{tabular}{lrr}
\toprule
Model & Training Loss & Validation Loss \\
\midrule
Goose & 2.834 & 2.541 \\
Goose, scalar decay & 2.873 & 2.609 \\
Goose, scalar in-context learning rate & 2.843 & 2.591 \\
Goose, same removal/replacement keys & 2.840 & 2.560 \\
Goose, no bonus term & 2.841 & 2.588 \\
\bottomrule
    \end{tabular}
  \caption{\centering{Ablation Results for 6 layer 768 dimension Goose model}}
  \label{tab:abla_specific}
\end{table}

\section{Parameters Statistics}
\label{sec:param_stat}
In Section \ref{sec:method}, the actual ranges and statistical metrics of the parameters within the trained model are not specified. To facilitate a better understanding of the role of these parameters in the practical models, this appendix provides empirical statistical metrics of selected parameters from the released RWKV-7 model.
\begin{figure}[ht!]
    \centering
    \includegraphics[width=0.75\columnwidth]{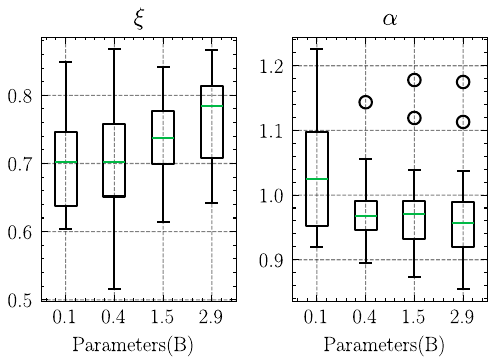}
    \caption{Box Plots of $\xi$ and $\alpha$ Across Models}
    \label{fig:boxplot_xi_alpha}
\end{figure}
\begin{figure}[ht!]
    \centering
    \includegraphics[width=0.95\columnwidth]{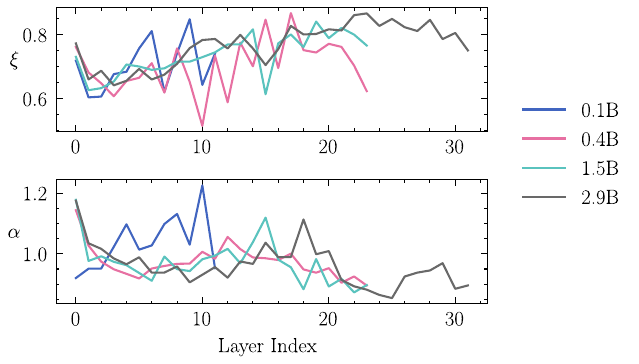}
    \caption{Mean $\xi$ and $\alpha$ Across Layers for Different Models}
    \label{fig:mean_trend_xi_alpha}
\end{figure}
\begin{figure}[ht!]
    \centering
    \includegraphics[width=0.95\columnwidth]{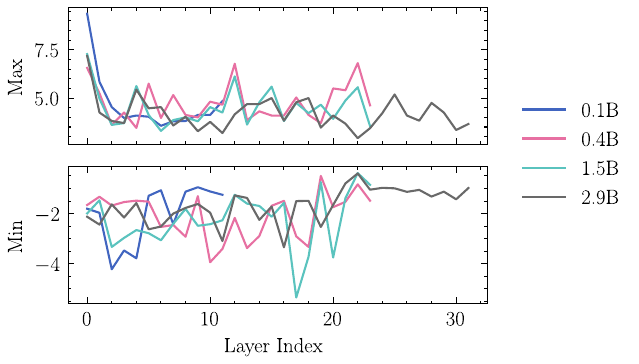}
    \caption{Maximum and Minimum of $\xi$ Across Layers in Different Models}
    \label{fig:minmax_xi}
\end{figure}
\begin{figure}[ht!]
    \centering
    \includegraphics[width=0.95\columnwidth]{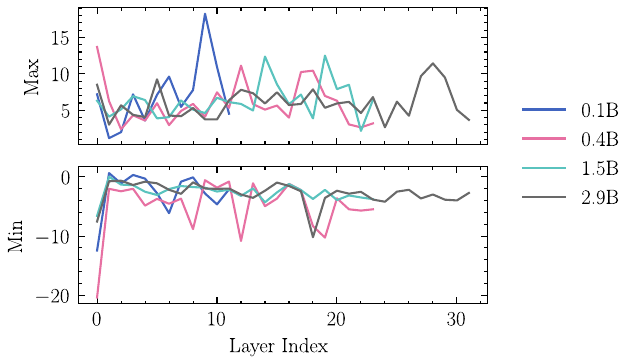}
    \caption{Maximum and Minimum of $\alpha$ Across Layers in Different Models}
    \label{fig:minmax_alpha}
\end{figure}
\begin{figure}[ht!]
    \centering
    \includegraphics[width=0.75\columnwidth]{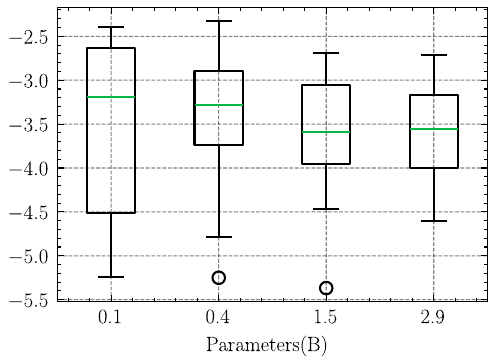}
    \caption{Box Plots of biases of $d_t$ Across Models}
    \label{fig:boxplot_w0}
\end{figure}
\begin{figure}[ht!]
    \centering
    \includegraphics[width=0.95\columnwidth]{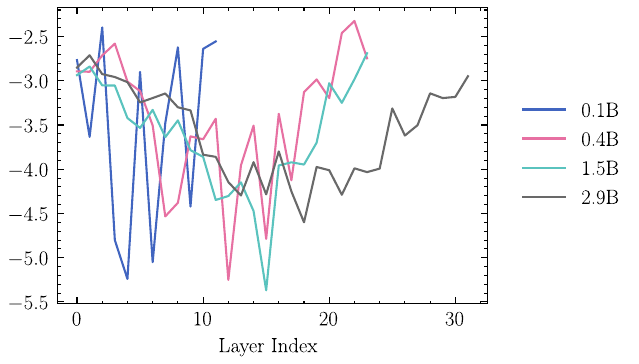}
    \caption{Mean biases of $d_t$ Across Layers for Different Models}
    \label{fig:mean_trend_w0}
\end{figure}
\begin{figure}[ht!]
    \centering
    \includegraphics[width=0.95\columnwidth]{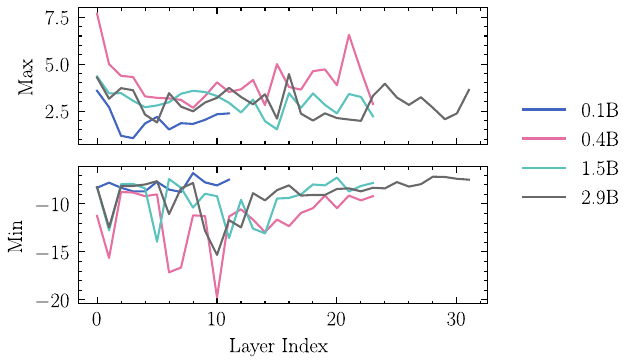}
    \caption{Maximum and Minimum of biases of $d_t$ Across Layers in Different Models}
    \label{fig:minmax_w0}
\end{figure}

\section{Initial Token Sensitivity}
\label{sec:initial_sensitivity}


In our evaluation of LAMBADA \citep{paperno2016lambada}, we found that RWKV-7's performance varies significantly under different settings. After ruling out precision issues, we investigated the consistency of the input and discovered that omitting the special token \texttt{<|endoftext|>} at the beginning of the input caused substantial and statistically significant differences in perplexity (PPL) and accuracy (ACC) for some RWKV-7 models.

Previous research highlights that Transformer models are sensitive to special tokens, and fine-tuning these tokens can yield notable improvements \citep{yang2023parameterefficienttuningspecialtoken}. However, to our knowledge, no quantitative study has examined this effect systematically.

We analyzed the cases with the largest performance discrepancies, and identified a key pattern:
\begin{itemize}
    \item The answer appears as the first word of the paragraph.
    \item This first word does not reappear elsewhere in the text.
\end{itemize}
This suggests that the model may struggle to retain the first token in memory.

In the LAMBADA test set, we identified 142 such examples out of a total of 5153 questions. One example is:
\begin{tcolorbox}[]
\begin{text}
Beth smoothed her wiry half-black, half-gray hair from her makeup-free face. In New Mexico, the natural look was common. Standing next to Cindy Fanucci, she felt like a disaster. She hid her ragged nails under the sleeves of her sweatshirt.

"Hi, I'm Cindy. It's so nice to meet you, \textbf{Beth.}"
\end{text}
\end{tcolorbox}

The following table summarizes the performance of different models with and without the padding token \texttt{<|endoftext|>}:
\begin{table}[ht!]
\centering
\begin{tabular}{lcccc}
\toprule
\bf Model & \bf EOS padding & \bf PPL & \bf ACC (\%) & \bf Significance \\
\midrule
RWKV7 World 0.1B & 0 & 357 & 9.2 & $^{***}$ \\
           & 1 & 16.4 & 36.6 &  \\
\midrule
RWKV7 World 0.4B & 0 & 42.7 & 28.9 & $^{***}$ \\
           & 1 & 7.25 & 48.6 &  \\
\midrule
SmolLM2 360M & 0 & 21.1 & 39.4 & $^{*}$ \\
            & 1 & 9.17 & 49.3 &  \\
\midrule
Qwen2.5 0.5B & 0 & 12.2 & 47.9 & NS \\
             & 1 & 7.97 & 54.9 &  \\
\bottomrule
\end{tabular}
\caption{Performance comparison of different models with and without the \texttt{<|endoftext|>} token in the partial set of 142 samples from LAMBADA. Significance levels: $^{*} p < 0.05$, $^{**} p < 0.01$, $^{***} p < 0.001$, NS = Not Significant.}
\label{tab:lambada_results}
\end{table}

The results indicate that the inclusion of the \texttt{<|endoftext|>} token improves the performance of RWKV-7 very significantly, especially for small models (e.g., RWKV-7 0.1B). This finding highlights the importance of proper state initialization and context setting for RNN-based architectures like RWKV. Unexpectedly, we also found that two consecutive \texttt{<|endoftext|>} tokens at the beginning can further improve the performance of RWKV-7, despite that \texttt{<|endoftext|>} never appears consecutively in the training corpus.

However, for Transformer-based models such as Qwen2.5-0.5B, we observe that the impact of the \texttt{<|endoftext|>} token is less pronounced, suggesting that these models may have better mechanisms for attending to the initial token.

As a result, for optimal performance when prompting RWKV-7, we recommend always including the \texttt{<|endoftext|>} token at the beginning of the prompt. For example, if you plan to use RWKV-7 as a chat assistant, consider the following structured prompt format:

\begin{tcolorbox}[]
\begin{verbatim}
<|endoftext|>User: <Your Question>

Assistant: <Assistant Answer>

User: <Another Question>

Assistant:
\end{verbatim}
\end{tcolorbox}

\end{document}